\theoremstyle{plain}
\newtheorem{theorem}{Theorem}[section]
\newtheorem{corollary}[theorem]{Corollary}
\theoremstyle{definition}
\newtheorem{definition}[theorem]{Definition}
\newtheorem{remark}[theorem]{Remark}
\newtheorem{conjecture}[theorem]{Conjecture}
\newtheorem{algorithm}[theorem]{Abstract algorithm}
\newtheorem{example}[theorem]{Example}
\title{Uncovering a Universal Abstract Algorithm for Modular Addition in Neural Networks}
\author{%
  Gavin McCracken\thanks{Equal contribution. \texttt{\{gavin.mccracken, gabriela.moisescu‑pareja\}@mail.mcgill.ca}}\ \ \thanks{McGill University, Department of Computer Science and Mila}
  \And
  Gabriela Moisescu‑Pareja\footnotemark[1]\ \ \footnotemark[2]  
  \And
  Vincent Létourneau\thanks{Université de Montréal} \\
  \AND
  Doina Precup\footnotemark[2]\ \ \thanks{Google DeepMind} \\
  \And
  Jonathan Love\thanks{Leiden University} \\
}
\begin{document}

\maketitle

\begin{abstract}
We propose a testable universality hypothesis, asserting that seemingly disparate neural network solutions observed in the simple task of modular addition are unified under a common abstract algorithm. While prior work interpreted variations in neuron-level representations as evidence for distinct algorithms, we demonstrate---through multi-level analyses spanning neurons, neuron clusters, and entire networks---that multilayer perceptrons and transformers universally implement the abstract algorithm we call the approximate Chinese Remainder Theorem. Crucially, we introduce approximate cosets and show that neurons activate exclusively on them. Furthermore, our theory works for deep neural networks (DNNs). It predicts that universally learned solutions in DNNs with trainable embeddings or more than one hidden layer require only $\mathcal{O}(\log n)$ features, a result we empirically confirm. This work thus provides the first theory‑backed interpretation of \textit{multilayer} networks solving modular addition. It advances generalizable interpretability and opens a testable universality hypothesis for group multiplication beyond modular addition.
\end{abstract}

\section{Introduction}

The \textit{universality hypothesis} posits that neural networks learning related tasks converge to similar internal solutions and that shared principles will underlie their representations regardless of architecture or initialization \citep{olah2020zoom, li2015convergent, huh2024platonic}. If true, it could provide a theoretical foundation for generalizing interpretability across diverse neural systems. Yet recent studies on related tasks (modular addition \cite{nanda2023progress, zhong2023the, gromov2023grokking, morwani2024feature, chughtai2023toy} and permuting lists \cite{chughtai2023toy, standergrokking}) have cast doubt on this hypothesis by presenting \textit{disjoint} interpretations of what networks learn between the two tasks, and even within the sole task of modular addition.

\textbf{We unify prior interpretations on modular addition.} By presenting a generalization of cosets---sets of elements with strict modular equivalence---to \textit{approximate cosets} containing elements that are ``behaviorally similar,'' instead of equivalent, our results abstract away the low-level details of how weights compute activations. This lets us show all previous interpretations \cite{nanda2023progress, zhong2023the, gromov2023grokking, morwani2024feature, chughtai2023neural} are consistent under one common abstract algorithm we call the \textit{approximate Chinese Remainder Theorem} (aCRT) (section~\ref{app:CRT}). This abstraction reconciles the diversity in previously discovered mechanisms by interpreting them as different realizations of one algorithmic template. The main empirical results validate the breadth of our abstraction's accuracy across hyperparameters, architectures, and depth. 

\textbf{We open the universality hypothesis as a testable conjecture across all group-theoretic datasets.} On modular addition (cyclic groups) we prove that all ReLU neurons learning sinusoidal functions activate only on approximate cosets or linear combinations of them (Theorem \ref{thm:good-defn}), giving a direct construction that instantiations of the abstract aCRT are learned. As our approximate cosets generalize cosets, work finding coset circuits in networks trained on permuting lists (permutation groups) \cite{standergrokking} aligns with our results. This gives universality between datasets involving incredibly different groups.

\textbf{We further the community's understanding of interpretations on modular addition.} Assuming neurons each learn a single frequency, we prove that networks exponentially reduce incorrect logit mass as more distinct frequencies are learned, concentrating the output near a Dirac on the correct answer. This recovers the theoretical result of \cite{morwani2024feature}, that 1-layer networks with neurons corresponding to each of the $\lfloor\frac{n}{2}\rfloor$ total frequencies that exist $\bmod n$ have learned the maximum margin solution. Furthermore, a corollary predicts that deep neural networks (DNNs) have small margins between correct and incorrect logits unless $\mathcal{O}(\log n)$ features are learned. These predictions are empirically validated across architectures, training regimes and both prime and composite moduli, whereas past works focused on prime moduli and representative networks from few seeds.

\section{Related work}
The first interpretability work in this domain aimed to understand the phenomenon of \textit{grokking} \cite{power2022grokkinggeneralizationoverfittingsmall}. \citet{nanda2023progress} analyzed 1-hidden layer transformers trained on modular addition, finding sinusoidal patterns of three to eight different frequencies in the weights, activations and attention. They showed that for each frequency, the embeddings placed inputs on a circle, and the network performed angle addition on this circle. Since adding angles corresponds to multiplying complex numbers, this nonlinear operation was attributed to the attention mechanism. This was termed the \textit{Fourier Multiplication Algorithm} (later \textit{Clock} \citep{zhong2023the}), and validated through ablation experiments.

Follow-up work proposed a generalization of the \textit{Fourier Multiplication Algorithm} called \textit{Group Composition via Representations (GCR)} algorithm \citep{chughtai2023neural}, which extended the idea of angle addition by treating group elements as linear operators and composing them to simulate group multiplication, aiming  to unify mechanisms across group tasks. They applied GCR to both modular addition and permutations ($S_n$), as representative group settings. However, later work by \citet{standergrokking} reverse-engineered models trained on $S_n$ under identical conditions and found that networks instead learn \textit{coset}-based circuits, refuting the GCR universality claim. Furthermore, \citet{zhong2023the} showed that in modular addition, training hyperparameters could induce learning qualitatively different mechanisms. They introduced the \textit{Pizza} circuit, which contrasted with \cite{nanda2023progress}'s clock. They even showed that both clock and pizza circuits could coexist within the same network simultaneously, suggesting that even with fixed data, networks could converge to non-unique mechanisms.

Meanwhile, theoreticians explored idealized settings for modular addition: \citet{gromov2023grokking}, constructed a solution for 1-layer multilayer perceptrons (MLPs) with quadratic activations, showing a local minimum exists where each neuron specializes to a sinusoid of a single frequency, and consequentially each $\lfloor\frac{n}{2}\rfloor$ frequency is represented by some neuron in the network. Later, it was proven that this solution maximizes the margin \cite{morwani2024feature}, while independently and simultaneously, work connected margin maximization to grokking in similar networks \cite{mohamadi2023grokking}.

By this point, the universality hypothesis appeared untenable. No similarities were found across group tasks and even on \textit{just} modular addition, 1-layer MLPs found $\lfloor\frac{n}{2}\rfloor$ frequencies, 1-layer transformers learned substantially less, and changing hyperparameters resulted in learning different circuits.

\section{Background}
\label{sec:background}

\textbf{Modular addition}, written as $c = (a + b) \bmod n$, gives the remainder $c$ when the sum $a + b$ is divided by $n$. For example, $5 + 7 = 12$, and $12 \bmod 12 = 0$. We can think of this as wrapping numbers around a circle of size $n$, once we pass $n$, we start over at 0. This arithmetic defines a structure known as the \textbf{cyclic group} $C_n = \{0, 1,\dots, n-1\}$. In $C_n$, every number is equivalent to its remainder class modulo $n$, denoted $\pmod n$, \textit{e.g.} $8 \equiv 2\ \pmod 6$, since $8 = 6\cdot 1 + 2$. Modular arithmetic also supports multiplication: for instance, $x \cdot y \equiv 1 \mod n$ when $y$ is the \textbf{modular inverse} of $x$, which we denote $x^{-1}$. These inverses exist when $x$ and $n$ are coprime.

Next, consider remainders mod $m$, where $m$ divides $n$. This coarser division groups elements of $C_n$ into \textbf{cosets}---sets of values that differ by multiples of $m$. For example, in $C_6$, the elements can be grouped into three cosets mod 3: $\{0, 3\}$, $\{1, 4\}$, and $\{2, 5\}$. Each coset marks out equally spaced points on the circle---it is a cycle. They will play a key role in our work, as neurons (and neuron clusters) will perform coset-like computations. See Appendix \ref{app:group-theory} for more discussion on group theory. 

\textbf{The Chinese Remainder Theorem (CRT)} provides a way to simplify computations modulo $n$ by breaking them into smaller, independent computations. If $n$ factors into coprime integers (meaning they share no factors) $n=q_1 q_2 \cdots q_k$, then computing $(a+b)\bmod n$ is equivalent to computing $(a+b)\bmod q_i$ for each $q_i$, then reconstructing the original result. The CRT guarantees that the system of congruences $(a+b)\equiv m_i\pmod{q_i}$ (for $i=1, \dots, k$) has a unique solution modulo $n$. Each equation defines a coset, and the intersection of these cosets gives the final result. For example, suppose we want to find $c=(a+b) \bmod 91$ that satisfies: $c\equiv 3 \pmod{7}, c\equiv 10 \pmod{13}$. Each congruence defines a coset $\{3, \textbf{10}, 17, \dots\}$ and $\{\textbf{10}, 23, 36, \dots\}$ respectively. Their intersection is $10$ and thus the unique solution is $c=10$. \textit{We hypothesize that networks learn structure similar to the CRT decomposition to solve $a+b \pmod n$.} 

\textbf{Particular Cayley graphs and generating Cayley graphs} are critical for understanding section~\ref{sec:simple-neuron-model}. Recall $C_n = \{0, 1,\dots, n-1\}$; take $s \in C_n, s\not=0$. We generate a Cayley graph, called $\Gamma$, by starting at $g \in C_n$ and making an edge to $(g+s)\bmod n$, then an edge to $(g+2s)\bmod n,\dots,$ until a cycle is made. For example, arrange $6$ vertices in a circle. Using $s = 1$, step around the circle, connecting neighbors with an edge, generating a 6-cycle. Using $s = 2$ connects every second vertex, generating two disconnected 3-cycles based on where you start. These 3-cycles are the cosets: $\{0,2,4\},\ \{1,3,5\}$. Using $s=3$ gives three 2-cycles. See $s=11$, $n=66$ in panel 1 of Fig. \ref{fig:approximate_cosets}.

\textbf{Clock and Pizza Interpretations.} Both circuits embed inputs $a$ and $b$ on a circle as $\mathbf{E}_a=[\cos(2\pi a/p), \sin(2\pi a/p)], \mathbf{E}_b=[\cos(2\pi b/p), \sin(2\pi b/p)]$. Post-attention, clocks compute the angle sum: $\mathbf{h}(a,b) = [\cos(2\pi(a+b)/p), \sin(2\pi(a+b)/p)]$, and pizzas compute a vector mean of the embeddings on the circle: $\mathbf{h}(a,b) = \frac{1}{2}(\mathbf{E}_a+\mathbf{E}_b) = \frac{1}{2}[\cos(2\pi a/p)+\cos(2\pi b/p), \sin(2\pi a/p)+\sin(2\pi b/p)]$.

\textbf{Problem setting and setup.} The task is modular addition: given inputs $(a,b)$, predict $c=a+b\mod n$. The dataset includes all $n^2$ input pairs. Inputs are either one-hot encoded or embedded via a learned matrix with $n$ rows and 128-dimensional vectors, resulting in concatenated input pairs $(\mathbf{E}_a, \mathbf{E}_b)$. We train 1--4 layer MLPs and 1--4 layer transformers. We use the exact transformer architectures from \cite{zhong2023the}, where attention is modulated by a coefficient $\alpha$. Pizza ($\alpha = 0.0$) uses constant, uniform attention (all-ones matrix), while clock ($\alpha = 1.0$) has learnable sigmoidal attention. Both models share the same structure: an embedding layer, one transformer block, and a 1-hidden layer MLP. We follow prior work in applying L2 regularization, shown to encourage generalization \citep{power2022grokkinggeneralizationoverfittingsmall}. Also, we apply discrete Fourier transforms (DFT) to analyze the frequencies learned by each neuron.

\section{Theoretical and empirical results}
An intuitive overview of the mathematical details in section~\ref{sec:simple-neuron-model} is: neurons learn sinusoidal functions and we explain how to identify which Cayley graph a neuron understands; the Cayley graphs for the cyclic group are circle graphs; these Cayley graphs are generated by connecting every step size $d^{\text{th}}$ vertex on the circle; approximate cosets are sets of vertices that are close on the graph generated by $d$.

\subsection{Our theoretical toolkit for modular addition}
\label{sec:simple-neuron-model}
\textbf{The simple neuron model.} The mathematics in this paper assumes \textit{the simple neuron model}: neurons approximate or specialize to sinusoidal functions of frequency $f$. The primary empirical results (section~\ref{sec:empirical}) validate this assumption across architectures, hyperparameters, random  seeds, moduli and depth (Figs. [4-9]). This model is derived by generalizing the sinusoidal model for neurons from \cite{gromov2023grokking, morwani2024feature} as they assume one-hot encoded inputs to the network, to a model fitting both one-hot encoded architectures and those used in practice (having trainable embeddings \cite{nanda2023progress, zhong2023the}). For $a + b \mod n$, the inputs $a$ and $b$ are encoded as embeddings $A=[A_0, \dots, A_{n-1}]$ and $B=[B_0, \dots, B_{n-1}]$, where $A_i,\ B_i\in \mathbb{R}^d$. The output logits are $D=[D_0, \dots D_{n-1}]\in \mathbb{R}^n$. Let $w(U,V)$ be the dot product of all values from $U$ with edge weights going to $V$. Then a \textbf{simple neuron} $N$ has frequency and phase shifts for A and B: $f,s_A,s_B\in C_n$, and positive real number $\alpha$ such that for each $k\in C_n$ we have

\begin{align*}
w(A_i, N) &= \cos \tfrac{2\pi f (i - s_A)}{n},\ 
w(B_j, N) = \cos \tfrac{2\pi f (j - s_B)}{n},\ 
w(N, D_k) = \alpha \cos \tfrac{2\pi f (k - s_A - s_B)}{n}.
\end{align*}

If training makes neurons converge to simple neurons, then their frequencies can be normalized to be 1 by applying an isomorphism (Def. \ref{def:freq-normalization}). Consequently, qualitative comparisons between neurons of different frequencies are now possible (Fig. \ref{fig:simple-neurons}).

\begin{definition}[Step size]
    \label{def:step-size}
    $d:=(\frac{f}{\gcd(f,n)})^{-1} (\bmod{\frac{n}{\gcd(f,n)}})$, where the modular inverse is used.
\end{definition}

\begin{definition}[Remapping: frequency normalization]\label{def:freq-normalization}
    Consider the function $h(x)=\cos(2\pi f x/n)$ with frequency $f$. We define a new function $g$, allowing us to perform something analogous to a change of variables using the step size $d$:  $g(d\cdot x) = h(x) \iff g(x)=h(d^{-1}\cdot x)$. 
\end{definition}

\begin{figure}[h]
    \centering
    \includegraphics[width=1.00\linewidth]{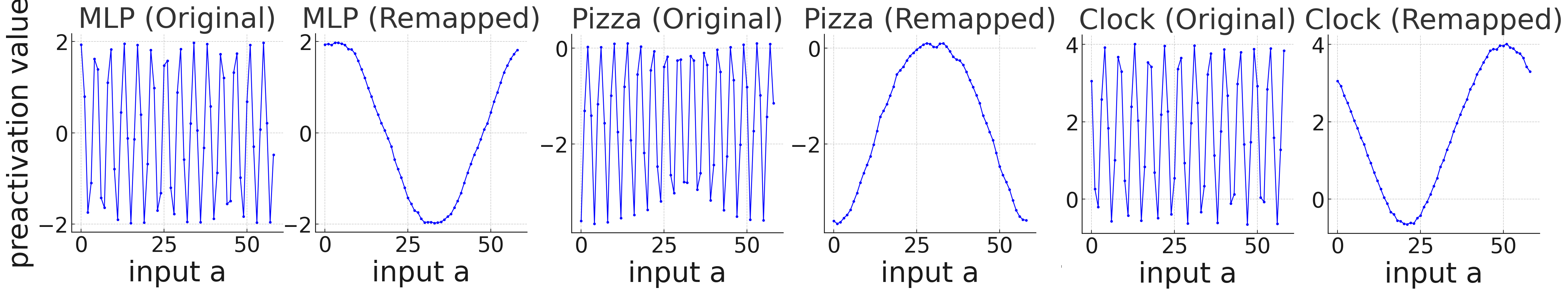} 
\caption{Preactivation values over $a$ fixing $b=5$ on $c=(a+b)\bmod 59$ of a neuron from an MLP, pizza and clock show qualitative equivalence after remapping (Def.~\ref{def:freq-normalization}): they all have frequency 1.}
    \label{fig:simple-neurons}
\end{figure}

\textbf{Approximate cosets.} The CRT relies on \textit{cosets}. A neuron with frequency $f$ will only take values on a coset when $\gcd(f,n) > 1$. Since our experiments suggest neural networks approximate the CRT's decomposition \textit{even when the learned frequencies don't factor the modulus}, we instead generalize cosets from requiring a strict equivalence among elements to \textbf{approximate cosets}. These require elements to be similar, which means the shortest path distance on their Cayley graph is small. Later, theorem \ref{thm:good-defn} gives that all neuron activations (ReLU $>0$) in all layers occur on approximate cosets.

Let \( f \in C_n \). Recall Def. \ref{def:step-size}: $d$ determines how we step around the circle \( C_n \). There are \( n' = \frac{n}{\gcd(f,n)} \) distinct positions reachable in this way. These positions form cycle $C_{n'}$, which is a smaller (or equal) copy of \( C_{n} \). \( d \in C_{n} \) is the step size in $C_{n'}$. Generate $\Gamma$, the Cayley graph of $C_{n'}$ using $d$. We now introduce \textbf{approximate cosets} (approximate equivalence classes) using the minimum path distance between vertices in $\Gamma$. For distances $1\leq k_1 \leq n$ and $1\leq k_2 \leq n$, the approximate coset is the set of vertices on the path from $c-(dk_1)$ to $c+(dk_2)$, stepping by $d$. If $\gcd(f,n) > 1$: elements in the same coset as $c$ are distance 0 from each other as they are the same vertex on the Cayley graph, adjacent vertices are distance 1, etc (see panel 1 Fig~\ref{fig:approximate_cosets}). If $\gcd(f,n) = 1$: $\Gamma$ has one element with distance $0$: $c$ (see panel 4 Fig~\ref{fig:approximate_cosets}). \textit{Thus, approximate cosets are more general than cosets.}

\begin{definition}[Approximate cosets]\label{def:main}
    Let $1\leq k_1 \leq n$ and $1\leq k_2 \leq n$. We call the set $\{c-k_1d, \dots, c-2d, c-d, c, c+d, c+2d, \dots, c+k_2d \}$ an \textbf{approximate coset}.
\end{definition}

\begin{theorem}
\label{thm:good-defn}
    Simple neurons in layer 1 activate (ReLU $>0$) on an approximate coset containing the correct answer $c$, by concentrating their preactivations on approximate cosets that contain $a$ and $b$; all neurons in later hidden layers activate on linear combinations of approximate cosets. 
\end{theorem}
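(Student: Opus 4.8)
The plan is to handle the two assertions separately, starting with layer~1. A layer-1 simple neuron $N$ has preactivation
\begin{align*}
P(a,b) = w(A_a,N) + w(B_b,N) = \cos\tfrac{2\pi f(a-s_A)}{n} + \cos\tfrac{2\pi f(b-s_B)}{n},
\end{align*}
and fires iff $P(a,b)>0$. First I would rewrite each summand through the step size $d$ (Def.~\ref{def:step-size}) and frequency normalization (Def.~\ref{def:freq-normalization}): with $g=\gcd(f,n)$, $n'=n/g$, $f'=f/g$, one has $\cos\tfrac{2\pi f(a-s_A)}{n}=\cos\tfrac{2\pi m}{n'}$ where $m:=f'(a-s_A)\bmod n'$, i.e. $a\equiv s_A+md \pmod{n'}$. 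Since $m\mapsto\cos\tfrac{2\pi m}{n'}$ is unimodal on the cycle $C_{n'}$, every superlevel set $\{a:\cos\tfrac{2\pi f(a-s_A)}{n}>t\}$ is $\{s_A+md:\,-M\le m\le M\}$ for some $M=M(t)$, whose consecutive elements differ by the generator $d$ of the Cayley graph $\Gamma$ and which is therefore exactly an approximate coset centered at $s_A$ in the sense of Def.~\ref{def:main} (modulo the $g$-fold lift from $C_{n'}$ to $C_n$). Fixing $b$ and taking $t=-\cos\tfrac{2\pi f(b-s_B)}{n}$ shows that every firing $a$ lies in an approximate coset $\mathcal A$ around $s_A$; symmetrically every firing $b$ lies in an approximate coset $\mathcal B$ around $s_B$. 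Since $M(t)$ shrinks as the required preactivation threshold grows, this is a genuine \emph{concentration} statement, not a vacuous one — which matches ``concentrating their preactivations on approximate cosets that contain $a$ and $b$''.

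Next I would push this forward to the answer coordinate. If $N$ fires on $(a,b)$ then $c=a+b\in\mathcal A+\mathcal B$, and the Minkowski sum of two arcs of $\Gamma$ centered at $s_A$ and $s_B$ is again an arc of $\Gamma$, centered at $s_A+s_B$ — i.e.\ an approximate coset that contains $c$. To get ``the approximate coset containing the \emph{correct answer}'' in the voting sense, note the output weights $w(N,D_k)=\alpha\cos\tfrac{2\pi f(k-s_A-s_B)}{n}$ with $\alpha>0$ are positive exactly on an arc of $\Gamma$ around $s_A+s_B$; when $N$ fires strongly, $f(a-s_A)\bmod n$ and $f(b-s_B)\bmod n$ are both small, hence so is their sum $f(c-s_A-s_B)\bmod n$, so $w(N,D_c)>0$ and $N$ boosts the logit of $c$. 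I would also record the product-to-sum form $P(a,b)=2\cos\tfrac{\pi f(a+b-s_A-s_B)}{n}\cos\tfrac{\pi f(a-b-s_A+s_B)}{n}$, which gives a sharper description of the firing region as a union of arcs in the $a+b$ and $a-b$ coordinates and lets one take the ``contains $c$'' arc as small as possible.

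For later hidden layers I would induct on depth. A neuron in layer $\ell+1$ has preactivation $\sum_j v_j\,\mathrm{ReLU}(P^{(\ell)}_j(a,b))$ (plus direct embedding terms in the transformer/residual case), a linear combination of layer-$\ell$ activations, each of which by the inductive hypothesis is supported on a product of approximate cosets and equals a fixed rectified trigonometric expression there. Because the support of a linear combination lies in the union of the supports and $\mathrm{ReLU}$ never enlarges a support, the layer-$(\ell+1)$ neuron fires only inside a union of such products; unwinding product-to-sum identities recasts this as a linear combination of approximate cosets, with generators/step sizes obtained from sums and differences of the frequencies in play. For transformers one first checks the base operations are compatible: clock-style attention contributes terms $\cos\tfrac{2\pi f(a+b)}{n}$ whose superlevel sets are again single arcs, and pizza-style attention is linear, so the same bookkeeping applies.

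The main obstacle I expect is exactly that bookkeeping in the deep case: establishing that ``linear combination of approximate cosets'' is a closure class provably stable under the linear-then-$\mathrm{ReLU}$ map (and, for transformers, under attention), i.e.\ that when cosines of several frequencies are summed and rectified the resulting superlevel sets remain describable as unions of arcs on the relevant Cayley graphs rather than some less structured set. A secondary subtlety is the tension between the literal ``$\mathrm{ReLU}>0$'' support — which, since Def.~\ref{def:main} allows $k_1,k_2$ up to $n$, is met almost trivially by a near-maximal approximate coset — and the intended concentration claim; I would resolve this by stating the result quantitatively, tracking how the width $M(\tau)$ of the approximate coset depends on the activation threshold $\tau$, so that ``neurons activate only on approximate cosets'' carries real content.
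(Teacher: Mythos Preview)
Your approach is essentially the same as the paper's: rewrite each cosine through $g=\gcd(f,n)$, $n'=n/g$, $f'=f/g$ and the step size $d=(f')^{-1}\bmod n'$ to see that large activation forces $a=s_A+md$, $b=s_B+m'd$ for small $|m|,|m'|$, hence $c$ lies a small number of $d$-steps from $s_A+s_B$; for later layers the paper, like you, simply observes that a linear combination followed by ReLU can only shrink supports, so one stays in (linear combinations of) approximate cosets. Your treatment is considerably more careful than the paper's own proof --- the superlevel-set/arc formulation, the Minkowski-sum step, the product-to-sum identity, and especially your observation that Def.~\ref{def:main} with $k_1,k_2\le n$ makes the bare ``ReLU$>0$ on an approximate coset'' claim nearly vacuous are all points the paper glosses over, and your proposed fix (track the arc width $M(\tau)$ as a function of the activation threshold) is exactly the right way to give the statement content.
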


\begin{figure}[h]
    \centering
    \includegraphics[width=1.00\linewidth]{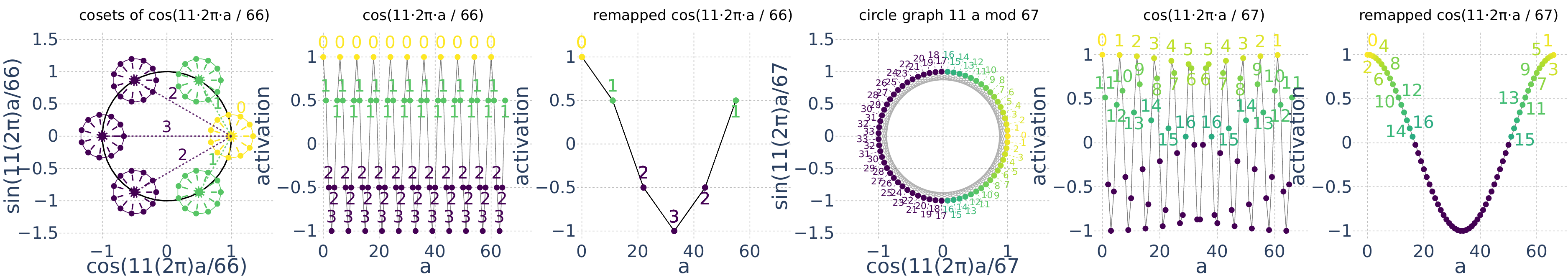} 
    \caption{\textit{Visualizing how neurons learn approximate coset structure.} Panel 1 shows the circle graph on 66 elements generated by starting at $a=0$ and taking 6 steps of $\pm11$, creating the $\frac{66}{11}=6$ cosets of points $\{a \pmod 6 \equiv 0\}, \{a\pmod 6 \equiv 1\},\dots,\{a\pmod 6\equiv5\}$. The graph distance to each coset from coset $\{a\pmod 6 \equiv 0\}$ (in yellow) is given. 2: the neuron learned $\cos(\frac{11(2\pi)a}{66})$; the distances annotated on points follow from 1. This neuron only activates (ReLU $>0$) on distances 0 and 1. 3: remapping shows all members of each coset collapse into an equivalence class. Panels 4-6 show the circle graph on 67 elements generated by $\pm11$; since $\gcd(11,67) = 1$, the neuron can't activate at the same strength on equivalent points (cosets) and instead activates with strengths proportional to distances on the Cayley graph. All elements the neuron takes positive values on are an approximate coset, shown in bright viridis colors decaying with distance. } 
    \label{fig:approximate_cosets}
\end{figure}

\subsection{The abstract approximate Chinese Remainder Theorem}
\label{app:CRT}
By defining approximate cosets, and proving Theorem \ref{thm:good-defn}, a straightforward construction for the abstract algorithm being instantiated by neural networks can now be given. An abstract algorithm is a general template or high-level strategy for a problem. It outlines the steps to be followed but leaves room for different low-level implementations. For example, the classic breadth-first search algorithm traverses a graph by visiting all neighbors of a node, then all neighbors of those nodes, and so on. While the abstract idea is the same, the implementation details can vary: one version may use a linked list, while another may use a queue. Thus, the compiled machine code can differ significantly, like how the networks features can be computed by very different pizza or clock circuits.

\begin{remark}
\label{remark:signalCRT}
\textit{The sinusoidal neuron based CRT.} 
The CRT decomposes the modular system $(a+b)\mod n$ into $\mathcal{O}(\log(n))$ modular subsystems. This follows from a number having at most $\log(n)$ factors (\textit{e.g.} $j=2^i$, has $\log_2(j)$ factors). The CRT solves the original modular system by intersecting the cosets that the solution of each subsystem belongs to. Suppose the CRT can be used to decompose $(a+b)\mod n$. Then, a sinusoidal neuron based CRT is constructed with $\mathcal{O}(\log(n))$ unique frequencies $f$ with $\gcd(f,n)=f$. Make $\frac{n}{f}$ sinusoids, one for each coset ($\mathcal{O}(n)$ neurons), that are only positive on one of the cosets $\{a+b \pmod{\frac{n}{f}}\}$ using the $y$-intercept (neuron bias) so the neuron only activates if the answer is in the coset. The argmax of the linear combination of these neurons to the output logits selects the correct answer deterministically. 
\end{remark} 

Armed with Theorem \ref{thm:good-defn}, deriving an abstract algorithm that Remark \ref{remark:signalCRT}, instantiates is simple. Remark \ref{remark:signalCRT} assumes ``the CRT can decompose $(a+b)\mod n$'' into coset structure, but cosets are a subset of approximate cosets, making cosets a specific implementation under an abstract template. Furthermore, Theorem \ref{main:theorem} addresses 
the approximate cosets case ($f$ does not divide $n$) in section~\ref{sec:logn-f}. It gives that $\mathcal{O}(\log(n))$ randomly selected frequencies are enough to get reasonable margins between correct and incorrect logits, matching the number of frequencies required by the CRT. 

\begin{algorithm}
\label{alg:aCRT}
    \textit{The minimal template: the abstract aCRT.} Take $\mathcal{O}(\log(n))$ random frequencies and for each frequency generate sinusoidal neurons with that frequency and set their phases such that they pick out different approximate cosets that the answer $(a+b)\mod n$ is in. 
\end{algorithm}

Two things are left to show: the $\mathcal{O}(\log(n))$ frequency bound in section~\ref{sec:logn-f} and that DNNs learn solutions that are realizations of Algorithm \ref{alg:aCRT}. The latter is exhaustively validated in section~\ref{sec:experimental_results}: finding all architectures are well abstracted by approximate cosets.

\subsection{How many frequencies are needed to instantiate the aCRT with simple neurons?}
\label{sec:logn-f}

We now present Theorem \ref{main:theorem}, which assumes that sinusoids are learned by the neurons. It predicts that networks push incorrect logits down exponentially as the number of frequencies that are learned in the network increases. Analogously to \citet{morwani2024feature}, it gives that the maximum margin solution requires all frequencies to be learned, but also yields additional information about the size of margins a network can acquire with fewer frequencies. A simple corollary gives that $\mathcal{O}(\log(n))$ frequencies are sufficient to get margins larger than $\Omega(\log(n))$.

Let $n$ be the number of output neurons (the modulus), let $m$ be the number of distinct frequencies learned by the network and $m'$ be the maximum output logit value across the dataset. Fix two parameters: $0<\delta<1$, controlling the required margin between the correct and incorrect outputs; $0<\rho<1$, the target probability of success. We model the neural network's output at logit $k$ by
$
h(k) = \sum_{\ell=1}^m \cos\left(\frac{2\pi f_\ell}{n}(k-i-j)\right),
$
where each frequency $f_\ell$ is drawn uniformly at random from $\{1,2,\ldots,\frac{n}{2}\}$. We seek conditions on $m$ so that, with probability at least $\rho$, the value $h(k)$ is well-separated from the maximum $m'$ for all incorrect outputs $k\not= i+j\mod n$.

\begin{theorem}
\label{main:theorem}
Suppose that an integer $m$ and reals $0 < \rho, \delta < 1$ satisfy the inequality
\[
m > \frac{2\log_e n - 2\log_e(2-2\rho)}{\log_e(\pi/\delta)-1}.
\]
Then, with probability at least $\rho$, for all $k\not= i+j\mod n$, we have
$
h(i+j) - h(k) > \delta m.
$
See Appendix \ref{app:heuristic} for the proof.
\end{theorem}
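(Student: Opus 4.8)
The plan is to control, for each fixed incorrect output $k$, the quantity $h(i+j)-h(k) = \sum_{\ell=1}^m \left(1 - \cos\frac{2\pi f_\ell}{n}(k-i-j)\right)$, which is a sum of $m$ i.i.d.\ nonnegative random variables since $h(i+j) = \sum_\ell \cos(0) = m$. Writing $t = k-i-j \bmod n$ (a fixed nonzero residue), each summand is $X_\ell = 1 - \cos\frac{2\pi f_\ell t}{n}$ with $f_\ell$ uniform on $\{1,\dots,\lfloor n/2\rfloor\}$. I would first show that the ``bad event'' for this $k$, namely $\sum_\ell X_\ell \le \delta m$, forces most of the $X_\ell$ to be small, i.e.\ $\cos\frac{2\pi f_\ell t}{n}$ close to $1$, which happens only when $f_\ell t$ is close to a multiple of $n$. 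The key estimate is therefore a bound on $\Pr[X_\ell \le \delta]$: the set of $f \in \{1,\dots,\lfloor n/2\rfloor\}$ with $1-\cos\frac{2\pi f t}{n} \le \delta$ consists of those $f$ with $\frac{ft}{n}$ within roughly $\frac{1}{2\pi}\sqrt{2\delta} \approx \frac{\sqrt\delta}{\pi\sqrt2}$ (or a cleaner linear bound $\le \sqrt{\delta}/\pi$ via $1-\cos\theta \ge \theta^2/\pi$... more precisely one uses $1 - \cos\theta \geq 2\theta^2/\pi^2$ on the relevant range) of an integer, and counting such $f$ against the $t$ equally spaced residues $\{ft \bmod n\}$ gives $\Pr[X_\ell \le \delta] \le \delta/\pi$ up to the constant that produces the $\log_e(\pi/\delta)$ in the statement.

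Next I would convert this per-summand bound into a tail bound on $\sum_\ell X_\ell$. Since I want $\sum_\ell X_\ell > \delta m$ with high probability, and each $X_\ell \ge 0$, the cleanest route is a direct union/combinatorial bound: $\sum_\ell X_\ell \le \delta m$ is impossible unless at least, say, a $(1-\delta)$-fraction of the $X_\ell$ lie below some threshold; more simply, one can apply a Chernoff/exponential-moment argument to the indicators $\mathbf{1}[X_\ell \le \delta']$, or observe that $\Pr[\sum_\ell X_\ell \le \delta m] \le \Pr[\text{all } X_\ell \le \text{something}]$ after a clean rescaling. The intended form of the bound — with $\log_e(\pi/\delta) - 1$ in the denominator of the hypothesis on $m$ — strongly suggests the argument uses $\Pr[\sum_\ell X_\ell \le \delta m] \le (\delta/\pi)^{m} e^{m}$ or a close variant (the $e^m$ coming from a $e^{\delta m}$-type slack in a moment generating function bound, or from Stirling in a binomial count), so that this probability is at most $n^{-2}(2-2\rho)^2 / \text{(something)}$ precisely when $m$ exceeds the stated threshold. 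I would carry out whichever exponential-moment inequality makes the constants match, keeping $\delta'$ possibly slightly different from $\delta$ and absorbing the difference.

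Finally I would take a union bound over the at most $n$ incorrect outputs $k$: if for each fixed $k$ the failure probability is at most $\varepsilon$, then the probability that $h(i+j)-h(k) > \delta m$ holds simultaneously for all $k \ne i+j$ is at least $1 - n\varepsilon$, and I would check that the hypothesis $m > \frac{2\log_e n - 2\log_e(2-2\rho)}{\log_e(\pi/\delta)-1}$ is exactly what makes $n\varepsilon \le 1-\rho$. The main obstacle I anticipate is the per-summand anticoncentration estimate $\Pr[X_\ell \le \delta] \le \delta/\pi$ with the right constant: one must handle the fact that $f_\ell t \bmod n$ is not exactly uniform (it takes $n/\gcd(t,n)$ values, each with multiplicity), control the boundary/rounding when $\lfloor n/2 \rfloor$ is used instead of $n$, and pin down the constant in $1-\cos\theta \gtrsim \theta^2$ so that the final denominator reads $\log_e(\pi/\delta)-1$ rather than something messier. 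The union bound and the exponential tail step are routine once that lemma is in hand.
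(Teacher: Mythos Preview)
Your high–level plan (tail bound for $\sum_\ell X_\ell$ with $X_\ell=1-\cos\frac{2\pi f_\ell t}{n}$, then union over $k$) is reasonable, but neither the specific estimates you wrote nor the sub-routes you list will land on the stated inequality, and the paper in fact argues quite differently.

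First, your anticoncentration constant is off: since $1-\cos\theta\sim\theta^2/2$, the event $X_\ell\le\delta$ is $\bigl|\{f_\ell t/n\}\bigr|\lesssim\sqrt{2\delta}/(2\pi)$, so $\Pr[X_\ell\le\delta]\sim\sqrt{2\delta}/\pi$, \emph{not} $\delta/\pi$. Consequently the tail you guessed, $(\delta/\pi)^m e^m$, has the wrong exponent; the correct per-$k$ bound is $(e\delta/\pi)^{m/2}$, and the $m/2$ is exactly where the factor $2$ in the numerator $2\log_e n$ comes from. Second, none of the three mechanisms you propose actually delivers that bound: $\{\sum X_\ell\le\delta m\}$ does not imply ``all $X_\ell$ small'' in any useful sense (each $X_\ell\le\delta m$ is trivial once $\delta m\ge2$), and Chernoff on the indicators $\mathbf{1}[X_\ell\le\delta']$ controls a count, not the sum. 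If you want a probabilistic proof, the clean fix is a direct MGF on the $X_\ell$: with the Taylor model $X_\ell\approx 2\pi^2 U_\ell^2$ for $U_\ell$ nearly uniform on $[-\tfrac12,\tfrac12]$, one gets $E[e^{-\lambda X_\ell}]\approx(2\pi\lambda)^{-1/2}$ via a Gaussian integral, and optimizing $\lambda$ in $e^{\lambda\delta m}(2\pi\lambda)^{-m/2}$ at $\lambda=1/(2\delta)$ gives $(e\delta/\pi)^{m/2}$; a union over the $\lfloor n/2\rfloor$ nonzero residues then yields the stated threshold exactly.

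The paper does not do any of this. It Taylor-expands $m-h(k)\approx 2\pi^2\sum_\ell\{f_\ell d/n\}^2$ and reinterprets the bad event \emph{geometrically}: for each $d=1,\dots,\lfloor n/2\rfloor$, the vector $v=\tfrac1n(f_1,\dots,f_m)\in[0,1]^m$ must avoid a ball of radius $\tfrac1d\sqrt{\delta m/2\pi^2}$ around every point of $\tfrac1d\mathbb{Z}^m$. Summing the volumes of all such balls (using the $m$-ball volume $\pi^{m/2}/\Gamma(m/2+1)$) gives the bad-region bound $\tfrac{n}{2\Gamma(m/2+1)}(\delta m/2\pi)^{m/2}$; applying Stirling to $\log\Gamma(m/2+1)$ and requiring this to be below $1-\rho$ produces the inequality on $m$ directly. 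So the $\log(\pi/\delta)-1$ in the denominator arises from Stirling on the Gamma in the ball-volume formula, not from a per-coordinate $\delta/\pi$ estimate.
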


\begin{corollary} 
\label{main:corollary}
Learning $\mathcal{O}(\log n)$ distinct frequencies gives a logit margin $\Omega(\log n)$; after softmax, incorrect classes receive at most $n^{-\Omega(1)}$ probability mass.
\end{corollary}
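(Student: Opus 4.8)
The plan is to obtain Corollary~\ref{main:corollary} as a direct specialization of Theorem~\ref{main:theorem}, fixing the two free parameters $\delta$ and $\rho$ to absolute constants and then reading off the asymptotics. First I would pick any constant $\delta\in(0,\pi/e)$ — for concreteness $\delta=1$, so that $\log_e(\pi/\delta)-1=\log_e\pi-1>0$ is a fixed positive number — and any constant $\rho\in(0,1)$, say $\rho=\tfrac12$ (which makes the additive term $-2\log_e(2-2\rho)$ vanish). With these choices the threshold in the hypothesis of Theorem~\ref{main:theorem},
\[
m_0:=\left\lceil \frac{2\log_e n-2\log_e(2-2\rho)}{\log_e(\pi/\delta)-1}\right\rceil,
\]
is $\Theta(\log n)$. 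Hence any network that has learned $m\ge m_0=\mathcal{O}(\log n)$ distinct frequencies satisfies the theorem's inequality, and with probability at least $\rho$ we get $h(i+j)-h(k)>\delta m\ge\delta m_0$ simultaneously for all incorrect $k$; since $\delta$ is constant and $m_0=\Theta(\log n)$, this margin is $\Omega(\log n)$, which is the first half of the corollary.

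For the softmax half I would bound the mass on an incorrect class $k$ by keeping only the correct term in the normalizer:
\[
\mathrm{softmax}(h)_k=\frac{e^{h(k)}}{\sum_{k'}e^{h(k')}}\le\frac{e^{h(k)}}{e^{h(i+j)}}=e^{-(h(i+j)-h(k))}<e^{-\delta m}\le e^{-\delta m_0}.
\]
Substituting the lower bound $\delta m_0\ge \frac{2\delta}{\log_e(\pi/\delta)-1}\log_e n-\mathcal{O}(1)$ gives $e^{-\delta m_0}\le C\,n^{-\beta}$ with $\beta=\frac{2\delta}{\log_e(\pi/\delta)-1}>0$ a fixed constant, i.e. $n^{-\Omega(1)}$ per incorrect class. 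If one wants the \emph{total} incorrect probability to be $n^{-\Omega(1)}$, a union bound over the $n-1$ incorrect classes yields $(n-1)e^{-\delta m_0}\le C\,n^{1-\beta}$; one then checks that the chosen $\delta$ makes $\beta>1$ (for $\delta=1$, $\beta=2/(\log_e\pi-1)\approx 13.8$), so this too is $n^{-\Omega(1)}$.

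There is no genuine obstacle — the corollary is essentially a clean instantiation of Theorem~\ref{main:theorem} — so the plan is mainly about bookkeeping the constants. The one point that needs care is the choice of $\delta$: it must stay bounded away from $\pi/e$ so that $\log_e(\pi/\delta)-1$ is bounded below by a positive constant (otherwise $m_0$ fails to be $\mathcal{O}(\log n)$), and, if the stronger total-mass version is wanted, also bounded away from $0$ so that $\beta>1$ — both easily arranged. I would also flag explicitly that the hidden constants in $\mathcal{O}(\log n)$, in $\Omega(\log n)$, and in the $\Omega(1)$ exponent all depend on this fixed $(\delta,\rho)$, and that the conclusion inherits the ``with probability at least $\rho$'' qualifier from Theorem~\ref{main:theorem}, i.e. it holds for all but a $(1-\rho)$ fraction of the random frequency draws.
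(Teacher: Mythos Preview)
Your approach is essentially the same as the paper's: fix $\delta$ and $\rho$ to absolute constants, observe that the threshold in Theorem~\ref{main:theorem} is then $\Theta(\log n)$, and conclude that the margin $\delta m$ is $\Omega(\log n)$; the paper's appendix proof in fact stops there and does not spell out the softmax bound, which you handle correctly and more carefully than the paper does. One minor bookkeeping fix: Theorem~\ref{main:theorem} as stated requires $0<\delta<1$, so your concrete choice $\delta=1$ lies outside its hypotheses and should be replaced by any $\delta\in(0,1)$ (the paper's own worked example uses $\delta=\pi/e^3$, $\rho=\tfrac12$); everything else goes through unchanged.
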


Note: it is still possible for networks to learn solutions utilizing a single frequency! Indeed, these solutions have poor margins, making them poor local minima. Unsurprisingly, they are rarely learned and only show up at the edge of the grid search returned by hyperparameter tuning. We empirically validate corollary \ref{main:corollary} in Fig. \ref{fig:log-scaling-plot} by varying the moduli over multiple orders of magnitude including moduli that are prime, highly composite numbers, composite numbers and powers of only $2$, e.g. 64, 256, etc. The samples are such that the $R^2$ of fitting them with logarithmic functions is very high, empirically verifying the prediction of Corollary \ref{main:corollary} that indeed $\mathcal{O}(\log(n))$ frequencies are reasonable. As the data contains both prime and composite moduli, it suggests moduli have little effect on what the network ultimately learns, though if a frequency divides the modulus it can be the case that less neurons of that frequency exist (see Appendix. \ref{app:neuron-length-histograms}).

\begin{figure}[h]
    \centering
        \includegraphics[width=0.9\linewidth]{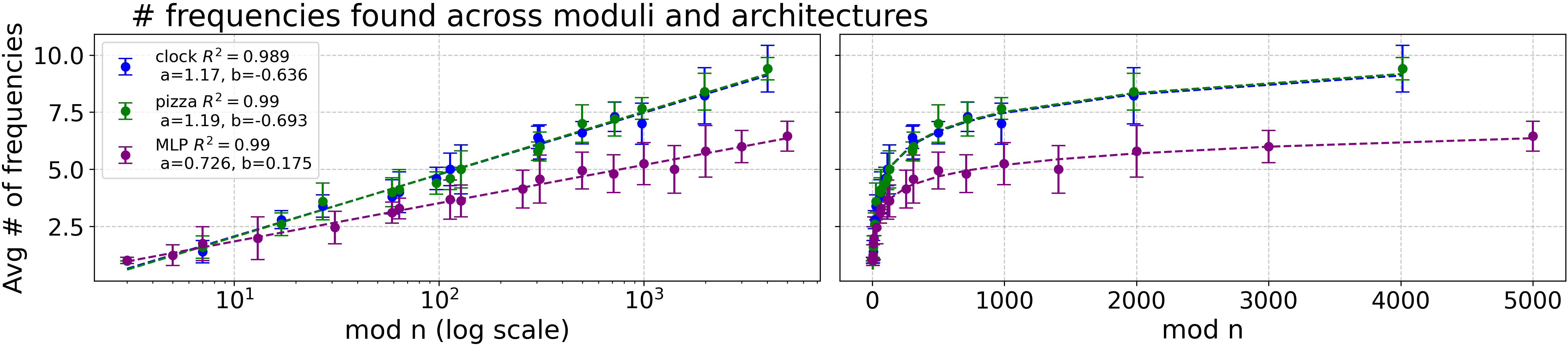}
    \caption{The number of frequencies found in clocks, pizzas, and MLPs as the modulus $n$ increases. We plot the data on logarithmic and linear axes, showing logarithmic fits have very high $R^2$ scores.}
    \label{fig:log-scaling-plot}
\end{figure}

\subsection{Empirical results supporting the simple neuron model and approximate coset abstraction}
\label{sec:empirical}

See Appendix \ref{app:expt-details} for experimental details.

\label{sec:experimental_results}
\begin{wrapfigure}{r}{0.43\textwidth}
  \centering
  \vspace{-30pt}
  \includegraphics[width=0.43\textwidth]{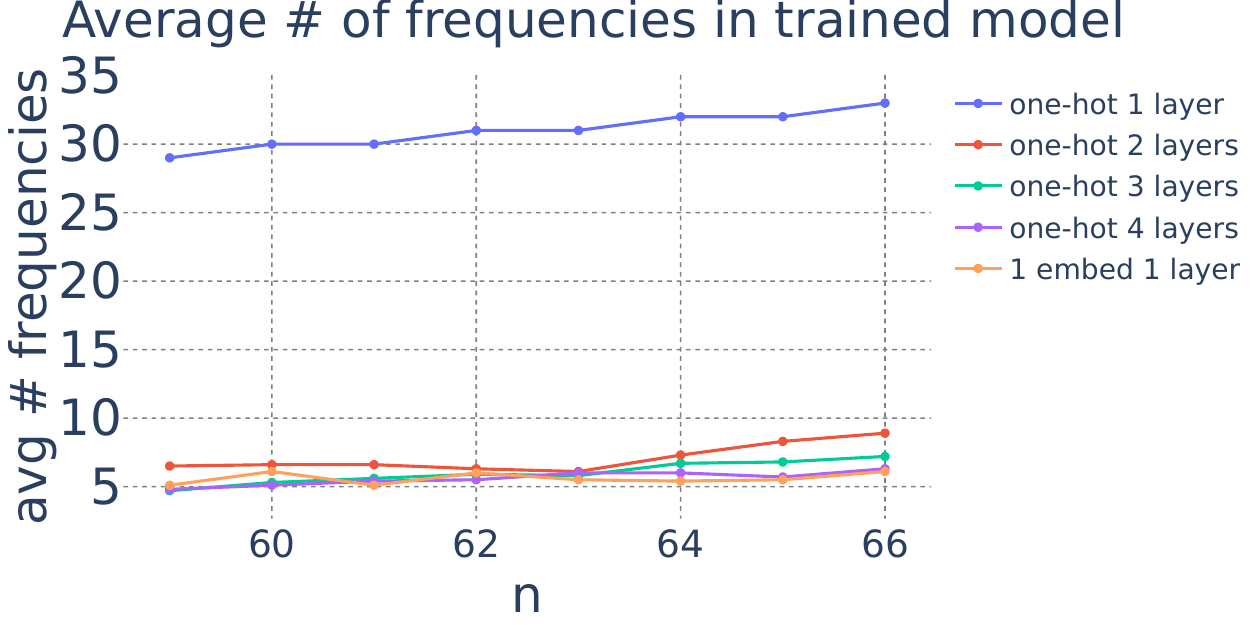}
  \vspace{-5pt}
  \caption{Average number of frequencies found in MLPs over moduli 59-66.}
  \label{fig:no_embed_freqs_avg}
\end{wrapfigure}

\textbf{One-hot encoded MLPs.} Previous work shows $\lfloor\frac{n}{2}\rfloor$ types of neurons are found, each specializing to a sinusoid with one frequency with 1 hidden layer \cite{gromov2023grokking, morwani2024feature}. We show that adding either depth, or a trainable embedding matrix for inputs, causes the network to transition from learning $\lfloor\frac{n}{2}\rfloor$ types of neurons to much fewer types in Fig. \ref{fig:no_embed_freqs_avg}. The presence of the trainable embeddings is why the models trained in \cite{nanda2023progress, chughtai2023toy, zhong2023the} were observed to learn handfuls of frequencies (3-7) instead of $\lfloor\frac{n}{2}\rfloor$ frequencies, despite being one hidden layer models.

\textbf{The neural preactivations in 1 layer networks.} In the vast majority of cases, neurons in all 1 layer architectures can be approximated well by degree 1 sine functions with integer $f$. This is because the preactivations of most neurons are ``simple'', meaning that they have frequency equal to 1 once remapped (Definition~\ref{def:freq-normalization}) (Fig \ref{fig:simple-neurons}). This is despite the presence of secondary frequencies in smaller width architectures. These occur less often as the width of the layers is increased (Fig \ref{fig:clock-r2-scaling-width}), which shows neurons with secondary frequencies on the left, and the $R^2$ of fitting a single sine, or a sum of two sines with different frequencies, through the preactivations on the right. Thus, as the width is scaled, approximating neurons as simple neurons (1 sine is fit through their preactivations) becomes better. Note: at widths excessive for this task $(\leq$2048 neurons) it rarely occurs, but it's the case that a few neurons can learn sinusoids with frequency $\frac{f}{2}$ (App. \ref{app:fine-tuning} Fig. \ref{fig:simple_vs_fine}). While these still satisfy our definition of approximate cosets and Theorem \ref{thm:good-defn} covers their existence, they break previous theoretical models assuming integer frequency \cite{gromov2023grokking, morwani2024feature}.

\begin{figure}[h]
    \centering
    \includegraphics[width=1.0\linewidth]{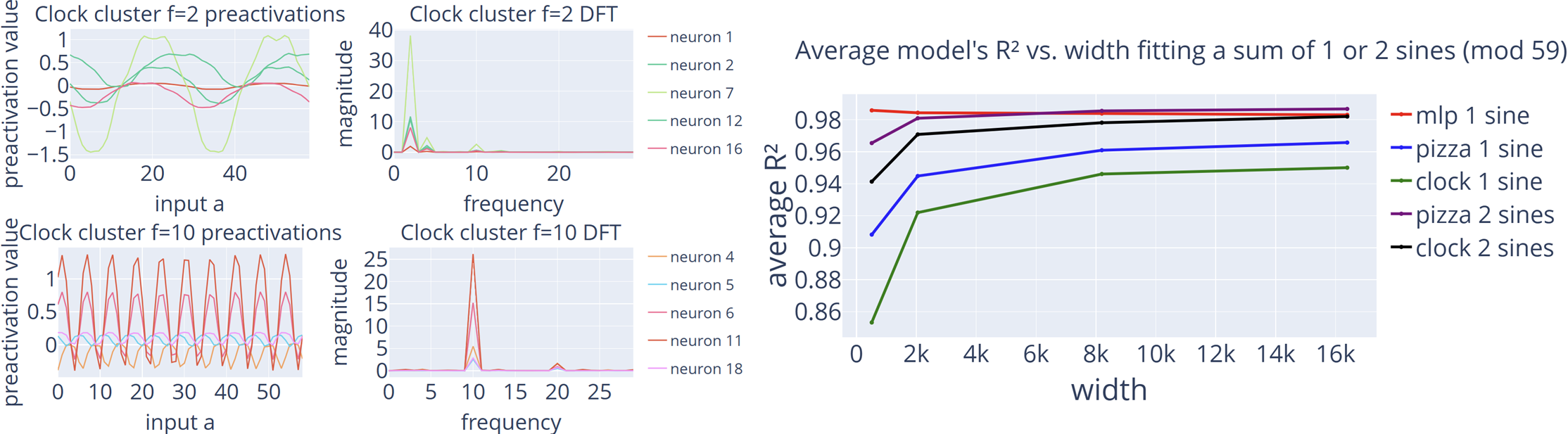} 
    \caption{Left: cluster preactivations from a clock with small but present secondary spikes in the DFT. Right: as the width of the models is increased, the presence of the secondary spikes fades, 2 sines is fitting a sum of 2 different $f$ sines, allowing the inclusion of a secondary peak in the fit.}
    \label{fig:clock-r2-scaling-width}
\end{figure}

\textbf{Depth's effect on neural preactivations.} The first layer is fit with a very high $R^2$ in models of all depths, but adding layers introduces a caveat in the transformer architectures. In MLPs, it is possible to fit every neuron in every layer and maintain 100\% test accuracy by assuming the neural preactivations are of the form $f(a,b) = \sin(fa + \phi) + \sin(fb + \phi)$, where $\phi$ is the phase shift (Fig. \ref{fig:depth-mlps}). In transformers however, this only works with a high $R^2$ for the first layer. The reason is that the form of the logits $f(a,b) = \cos(f(a + b - c))$, described by \cite{nanda2023progress} is a second-order (quadratic) sinusoid and starts to appear in layers after the 1st layer, but before the logits. Indeed, we find that in deeper networks, neurons after the first layer can be either simple, $\cos(f(a + b - c))$, or a linear combination in superposition of these two forms. Thus, fitting just $\cos(a+b)$ or just $\cos(a) + \cos(b)$ is not sufficient to maintain 100\% accuracy. To see this, see Fig. \ref{fig:percent-sin-cos}, which shows the percentage of activations that have their best $R^2$ achieved by fitting just order one sinusoids in $a$ and $b$ in a 2-hidden layer transformer. Thus, we fit linear combinations of $(\cos(f(a+b))) + (\cos(fa) + \cos(fb))$ through neurons in layers after 1 in Fig. \ref{fig:depth-heatmap-pizza-clock} and conclude that transformers ``backpropagate the form of the logits'' more than MLPs, which is why MLPs can be fit well using only first-order sinusoids.

\begin{figure}[h]
    \centering
    \includegraphics[width=1.0\linewidth]{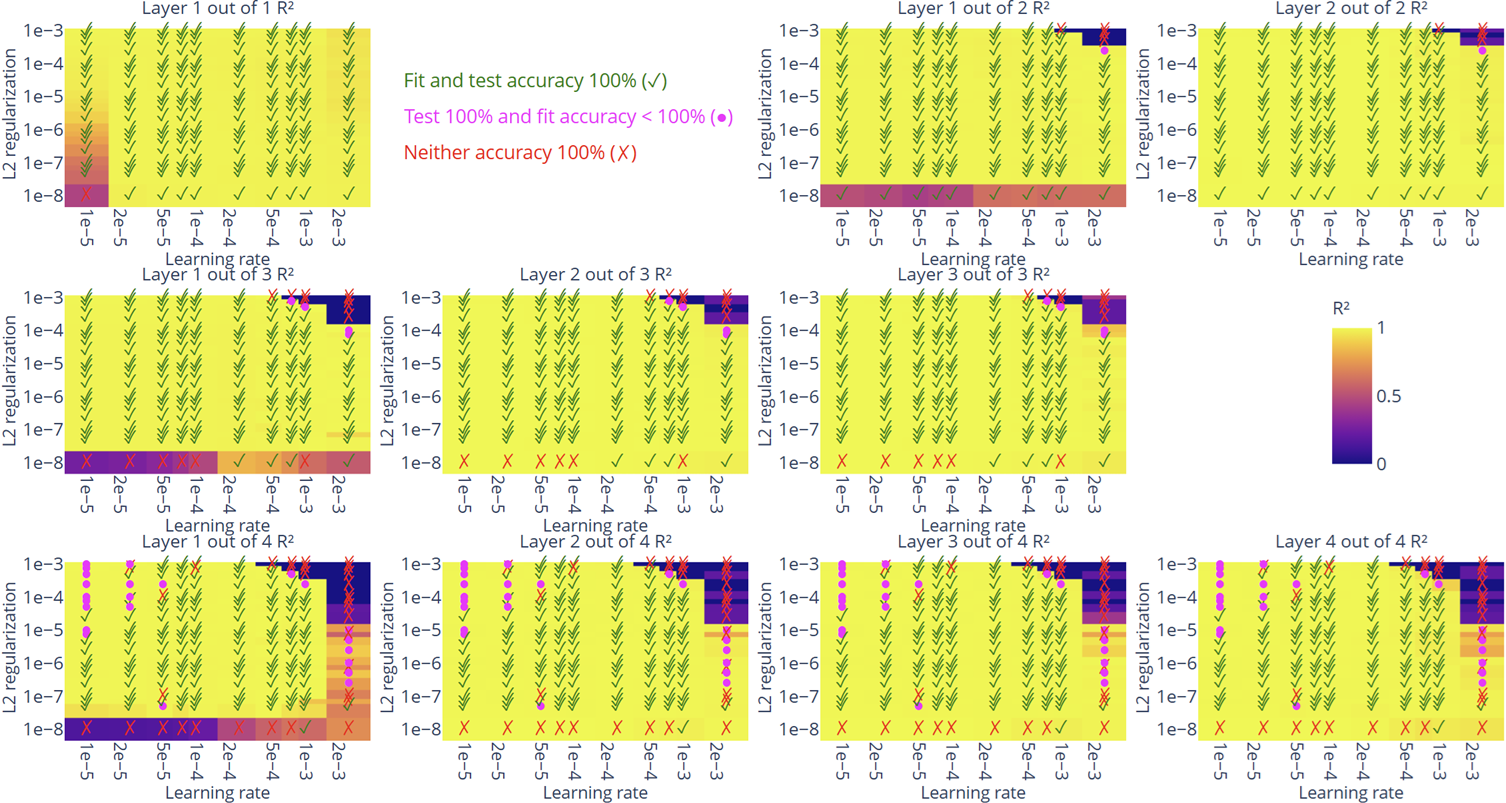} 
    \caption{$R^2$ of fitting each neuron in layer 1 as a simple neuron and fitting a sum of sines of each frequency in layer 1 through layers 2-4 for 1,2,3 and 4 layer MLPs. The large volume of green checkmarks implies replacing neurons with simple neurons does not decrease the network's accuracy implying that our abstraction is robust to changes in training conditions and architectures.}
    \label{fig:depth-mlps}
\end{figure}

\begin{figure}[h]
    \centering
    \includegraphics[width=1.0\linewidth]{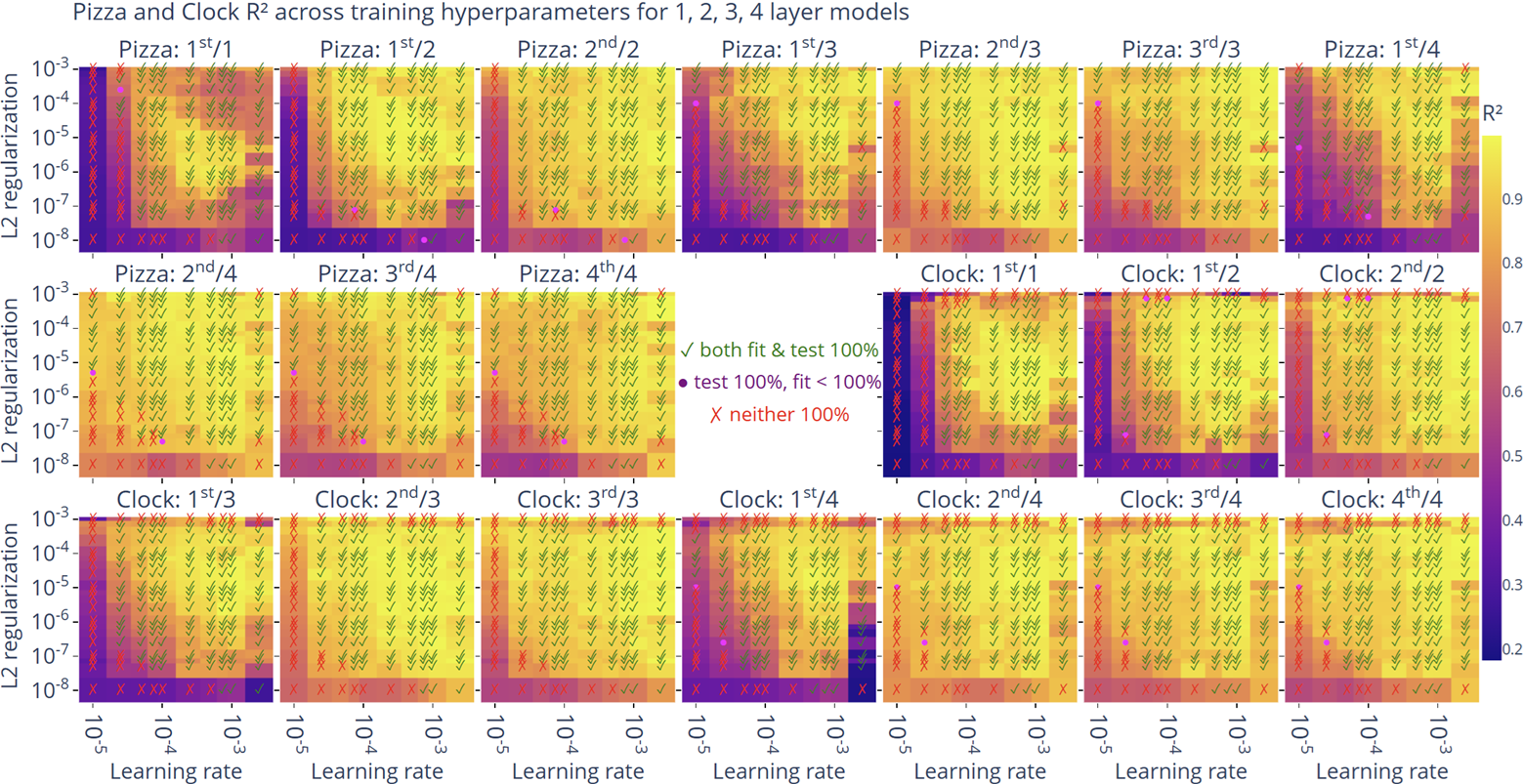} 
    \caption{$R^2$ of fitting order 1 sines through neurons in layer 1, then fitting a sum of length equal to the number of unique frequencies in layer 1, of order one or order two sines for layers $2,3,4$. The large volume of green checkmarks tells us that our abstraction doesn't affect the model's accuracy and is robust to varying training conditions and architectures.}
    \label{fig:depth-heatmap-pizza-clock}
\end{figure}

\begin{figure}[h]
    \centering
    \includegraphics[width=1.00\linewidth]{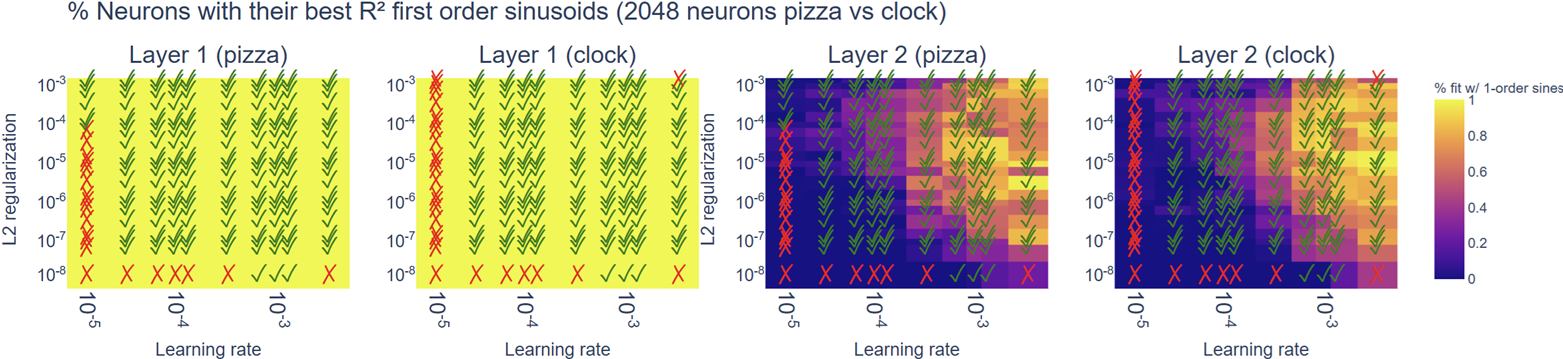} 
    \caption{The percent of neurons with their best fit involving only order 1 sines. The result can be interpreted as neurons existing that are operating on first order sinusoids in deep layers. With ideal hyperparameters, almost 100\% of neurons in layer 2 can have their best fits coming from order one sinusoids, though this occupies very little volume in the hyperparameter grid.}
    \label{fig:percent-sin-cos}
\end{figure}

Furthermore, we could see a preference for learning precise cosets (should they exist) over approximate cosets as this could reduce approximation error in DNNs. We explore this in Fig. \ref{fig:frequency-histo-depth}, showing that for $n=66$, all architectures present a preference for learning precise cosets. This is strong evidence supporting the abstract aCRT algorithm as it implies DNNs try to learn CRT-like behaviour.

Our results show that in all architectures, layer 1 uses only simple neurons, with other layers still utilizing them, implying Algorithm \ref{alg:aCRT} is instantiated. Furthermore, it follows from this that we've shown that all neurons downstream of layer 1 activate on linear combinations of approximate cosets. Combined with the results of \cite{standergrokking}, that the GCR algorithm \cite{chughtai2023toy} is not universal and instead coset circuits are learned in networks learning group multiplication in the permutation group, we open the universality hypothesis on group multiplication datasets as Conjecture \ref{conj:1}. 

\begin{conjecture}
\label{conj:1}
\textit{The universality in structures learned by networks trained to fit group multiplication will be found as coset circuits, and more generally as approximate coset circuits computing features.}
\end{conjecture}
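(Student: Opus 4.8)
Because Conjecture~\ref{conj:1} is a \emph{universality} claim over all group-multiplication datasets, the plan is to split it into a representation-theoretic ``template'' half and an empirical ``instantiation'' half, mirroring the structure already used for the cyclic case (the simple neuron model $+$ Theorem~\ref{thm:good-defn} $+$ the experiments of section~\ref{sec:experimental_results}). For a finite group $G$ with task $ab=c$, the first step is to generalize the simple neuron model: by Peter--Weyl, every function $G\to\mathbb{R}$ decomposes over matrix coefficients $\rho_{ij}$ of the irreducible representations $\rho$ of $G$, so the natural generalization of ``a neuron specializes to one frequency'' is ``a neuron (or neuron cluster) specializes to the matrix coefficients of a single irrep $\rho$,'' with the logit map reconstructing $c$ via $\rho(a)\rho(b)=\rho(ab)$. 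This is exactly the GCR ansatz of \cite{chughtai2023neural}, so step one is to re-derive it as the non-abelian simple neuron model under L2 regularization, extending \cite{gromov2023grokking, morwani2024feature} from $C_n$ (where irreps are $1$-dimensional and matrix coefficients are sinusoids) to general $G$.

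The second step is the non-abelian analog of the CRT reconstruction of Remark~\ref{remark:signalCRT}: Peter--Weyl faithfulness gives $G\hookrightarrow\prod_\rho\mathrm{GL}(V_\rho)$, and more usefully a small family of irreps already separates the points of $G$ because $\bigcap_\rho\ker\rho=\{e\}$ and each $\ker\rho$ is normal; so ``$\mathcal{O}(\log n)$ frequencies'' becomes ``$\mathcal{O}(\log|G|)$ irreps whose kernels intersect trivially,'' recovering the CRT count when $G$ is cyclic. The third step is to generalize approximate cosets and Theorem~\ref{thm:good-defn}: for an irrep $\rho$ with kernel $N=\ker\rho$, the faithful quotient $G/N$ carries a Cayley/Schreier graph, an \emph{approximate coset} is a ball in its word metric, and one shows that a neuron reading $\rho$'s coefficients, after ReLU and bias, has activation support equal to a linear combination of such balls pulled back to $G$ --- precisely as panels~1--6 of Fig.~\ref{fig:approximate_cosets} illustrate for $C_n$, where $N=\langle\gcd(f,n)\rangle$ when $\gcd(f,n)>1$ and trivial otherwise. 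When $N$ is large (\textit{e.g.} $G$ nilpotent, or with a rich normal-subgroup lattice) these balls collapse to honest cosets of $N$, giving exact coset circuits; when $N=\{e\}$ one gets genuinely approximate cosets --- exactly the dichotomy of the conjecture.

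The fourth step is the empirical half: train the same 1--4 layer MLPs and transformers across a spread of groups --- cyclic $C_n$ (done here), dihedral $D_n$, elementary abelian $(\mathbb{Z}/p)^k$, symmetric $S_n$ and alternating $A_n$ (connecting to \cite{standergrokking, chughtai2023neural}), and a non-solvable example such as $\mathrm{SL}_2(\mathbb{F}_p)$ --- reverse-engineer the neuron clusters by projecting onto irrep coefficients (the non-abelian DFT), and check that their positive-activation sets are balls or cosets in the appropriate Schreier graphs. A successful outcome simultaneously explains why \cite{standergrokking} found coset circuits on $S_n$ (there the relevant $N$ are point stabilizers / Young subgroups, so the balls are exact cosets) and why \cite{chughtai2023neural}'s GCR is consistent rather than contradictory (it is the matrix-coefficient description of the \emph{same} coset structure, just as Clock and Pizza compute the same aCRT features).

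The main obstacle is the gap between ``the network uses irrep $\rho$'' and ``the network computes approximate-coset membership.'' For non-abelian $G$ a matrix coefficient of a $d$-dimensional irrep is genuinely $d^2$-dimensional data and is \emph{not} a coset indicator the way $\cos(2\pi f(\cdot)/n)$ thresholds to one in the cyclic case; bridging this requires showing that ReLU-plus-bias effectively thresholds these coefficients into coset-like features for \emph{every} $G$, and there is no non-abelian analog of the frequency-normalization isomorphism (Def.~\ref{def:freq-normalization}) that collapses every neuron to a single canonical shape. I therefore expect the realistic deliverable to be: the general simple-neuron / Peter--Weyl template and reconstruction count; a proof of the generalized Theorem~\ref{thm:good-defn} for the class of groups whose irrep kernels are large enough (abelian, nilpotent, and groups with enough large stabilizers such as the $S_n$ actions); and the cross-group experiments --- with the fully general conjecture remaining open precisely because of the matrix-coefficient-versus-coset gap for groups like $\mathrm{SL}_2(\mathbb{F}_p)$ that have few normal subgroups.
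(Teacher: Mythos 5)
First, note that the statement you are addressing is a \emph{conjecture} that the paper deliberately leaves open: the authors' only ``proof'' is motivating evidence at the two extremes (the approximate-coset result of Theorem~\ref{thm:good-defn} for cyclic groups, plus the citation to \cite{standergrokking} for coset circuits on $S_n$), and Section~\ref{sec:limitations} explicitly defers all intermediate groups to future work, suggesting only the vague generalization ``approximate cosets involving distances on Cayley graphs.'' So your text is correctly framed as a research program rather than a proof, and in spirit it fills in exactly the gap the paper acknowledges; to that extent it is a reasonable and more concrete elaboration of the authors' own stated direction.

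There is, however, one structural flaw in your step three that would cause the program to fail on the very example it is supposed to explain. You define approximate cosets as balls in the word metric on the Cayley graph of the quotient $G/\ker\rho$, and then claim that for $S_n$ ``the relevant $N$ are point stabilizers / Young subgroups, so the balls are exact cosets.'' But point stabilizers and Young subgroups are not normal in $S_n$ (for $n\ge 3$), so they cannot arise as $\ker\rho$ for any representation $\rho$; indeed $S_n$ for $n\ge 5$ has only $\{e\}$, $A_n$, and itself as normal subgroups, so your kernel-based construction can only ever produce the sign coset structure or the trivial one. The coset circuits actually found by \cite{standergrokking} live on cosets of non-normal subgroups, which in your framework would require Schreier graphs of $G$-actions on $G/H$ for non-normal $H$ (equivalently, permutation representations rather than kernels of irreps), not Cayley graphs of quotient groups. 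This is not the obstacle you flag in your final paragraph (the matrix-coefficient-versus-threshold gap); it is a prior definitional mismatch, and fixing it changes what ``approximate coset'' should mean for non-abelian $G$. In the cyclic case the distinction is invisible because every subgroup is normal, which is presumably why it slipped through. The rest of your plan --- the Peter--Weyl generalization of the simple neuron model, the separating family of irreps replacing the $\mathcal{O}(\log n)$ frequency count, and the cross-group experiments --- is consistent with the paper's framing and with Remark~\ref{remark:signalCRT}, but the conjecture itself remains open in the paper and would remain open under your proposal as well.
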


At this point, we have shown that our definition of approximate cosets functions as a sufficient abstraction for simplifying the representations learned by networks of various architectures.

\begin{figure}[h]
    \centering
    \includegraphics[width=0.96\linewidth]{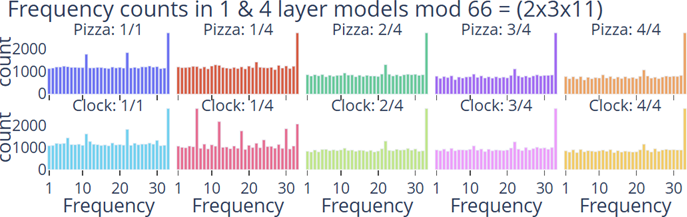} 
    \caption{Histograms of the number of times each frequency was learned while training on $(a+b)\mod 66$. Note: attention in the clock models results in learning frequency 6 cosets in layer $\sfrac{1}{4}$.}
    \label{fig:frequency-histo-depth}
\end{figure}

\section{Discussion and Conclusion}
We argue that approximate cosets are critical in all architectures because they instantiate the aCRT. We support this both with empirical evidence across a large range of hyperparameters, seeds and different moduli, and theorems. Approximate cosets provide DNNs with structures analogous to the cosets the CRT operates on. The CRT uses $\mathcal{O}(\log(n))$ modular subsystems, and Corollary \ref{main:corollary} gives that DNNs need $\mathcal{O}(\log(n))$ unique frequencies to analogously induce modular subsystems. Furthermore, approximate cosets shed light on how the network learns second order sines with ReLU activations (activating on the coset of $c$ requires understanding coset membership of $a+b$), a previously unexplained result in \citet{nanda2023progress} and \citet{chughtai2023neural}. As the proof for theorem \ref{thm:good-defn} shows, a neuron learns to fire strongly on the coset that $c$ is in, by understanding which cosets $a$ and $b$ are in. The conclusion is that in abstracting away the small details in how weights in different architectures explicitly compute modular addition, we unify previous interpretations under the abstract aCRT template. 

Our initial hypothesis was that neural networks trained on modular addition with composite moduli would learn the Chinese Remainder Theorem (CRT), leveraging coset structure where it naturally applies. Early empirical results supported this view, revealing a preference for frequencies that cleanly divide the modulus—suggesting alignment with exact coset structure. However, we observed that networks often learned only a single such frequency, alongside others that did not correspond to precise cosets. This prompted further investigation. Upon examining both qualitative patterns and quantitative behavior, we found no meaningful distinction between neurons associated with coset-aligned frequencies and those that were not. This observation led to a critical insight: networks may be implementing an algorithmic template resembling the CRT even when its mathematical prerequisites are not strictly satisfied. This realization motivated the formulation of the abstract approximate CRT (aCRT), generalizing the role of cosets to approximate cosets as a unifying structure.

Due to modular addition being multiplication in the cyclic group and our approximate cosets generalizing cosets, our interpretation establishes universality between cyclic and permutation groups. This follows from the result that coset circuits are learned in networks trained on the permutation group \cite{standergrokking}. Thus, we establish universality between very different tasks, related only by the fact they are both groups. This allows us to open the \textit{testable} universality hypothesis on group multiplication datasets. 

\subsection{Limitations and future work}
\label{sec:limitations}
In group theory, a group with the least structure is a cyclic group of prime order and by Cayley's theorem, all groups are subgroups of the permutation group---the king of group structure. Despite these groups occupying the upper and lower extremes of structure, our result yields universality in the structure DNNs learn on these tasks. This unification with \citet{standergrokking} is why we believe it's likely that networks learning all groups will utilize structures that can be viewed under a more general definition---\textit{approximate cosets involving distances on Cayley graphs}. A core limitation of our work is that we do not explore the groups between these two extremes. We believe that this is a promising avenue for future work to explore, with successful testing offering the potential to demonstrate that the universality hypothesis is true in the very diverse space of all possible group multiplications.

While we are the first to point out \textit{how} to cause a phase transition from $\mathcal{O}(n)$ to $\mathcal{O}(\log(n))$ learned frequencies (Fig. \ref{fig:no_embed_freqs_avg}), we don't know \textit{why} it occurs. Theorem \ref{main:theorem} only says this solution should have great margins, and it does \cite{morwani2024feature}. This gives two directions for future work: \textit{why} this happens: are the training dynamics wildly different? \textit{What} causes it: does the network learn something significantly different that both we, and prior work, fail to see? The fact these are open questions---on a math task that's become very well understood over $1500$ years since inception---\textit{suggests an urgent need for new interpretability tools.} Finally, Theorem \ref{main:theorem} doesn't answer how many neurons are needed per learned frequency; giving a trivial lower bound of $\Omega(1)$ neurons. Empirical results are in Appendix \ref{app:neuron-length-histograms}, but don't give an obvious direction for proving bounds. \textit{We believe answering this is necessary for the interpretability community to gain a full understanding of a task.} This may be an entire paper in itself. It looks non-trivial (to us) and requires careful arguments with non-linear ReLU activations. 

\bibliographystyle{unsrtnat}
\bibliography{refs}

\begin{thebibliography}{40}
\providecommand{\natexlab}[1]{#1}
\providecommand{\url}[1]{\texttt{#1}}
\expandafter\ifx\csname urlstyle\endcsname\relax
  \providecommand{\doi}[1]{doi: #1}\else
  \providecommand{\doi}{doi: \begingroup \urlstyle{rm}\Url}\fi

\bibitem[Olah et~al.(2020)Olah, Cammarata, Schubert, Goh, Petrov, and Carter]{olah2020zoom}
Chris Olah, Nick Cammarata, Ludwig Schubert, Gabriel Goh, Michael Petrov, and Shan Carter.
\newblock Zoom in: An introduction to circuits.
\newblock \emph{Distill}, 5\penalty0 (3):\penalty0 e00024--001, 2020.

\bibitem[Li et~al.(2015)Li, Yosinski, Clune, Lipson, and Hopcroft]{li2015convergent}
Yixuan Li, Jason Yosinski, Jeff Clune, Hod Lipson, and John Hopcroft.
\newblock Convergent learning: Do different neural networks learn the same representations?
\newblock \emph{arXiv preprint arXiv:1511.07543}, 2015.

\bibitem[Huh et~al.(2024)Huh, Cheung, Wang, and Isola]{huh2024platonic}
Minyoung Huh, Brian Cheung, Tongzhou Wang, and Phillip Isola.
\newblock The platonic representation hypothesis.
\newblock \emph{arXiv preprint arXiv:2405.07987}, 2024.

\bibitem[Nanda et~al.(2023)Nanda, Chan, Lieberum, Smith, and Steinhardt]{nanda2023progress}
Neel Nanda, Lawrence Chan, Tom Lieberum, Jess Smith, and Jacob Steinhardt.
\newblock Progress measures for grokking via mechanistic interpretability.
\newblock In \emph{The Eleventh International Conference on Learning Representations}, 2023.
\newblock URL \url{https://openreview.net/forum?id=9XFSbDPmdW}.

\bibitem[Zhong et~al.(2023)Zhong, Liu, Tegmark, and Andreas]{zhong2023the}
Ziqian Zhong, Ziming Liu, Max Tegmark, and Jacob Andreas.
\newblock The clock and the pizza: Two stories in mechanistic explanation of neural networks.
\newblock In \emph{Thirty-seventh Conference on Neural Information Processing Systems}, 2023.
\newblock URL \url{https://openreview.net/forum?id=S5wmbQc1We}.

\bibitem[Gromov(2023)]{gromov2023grokking}
Andrey Gromov.
\newblock Grokking modular arithmetic.
\newblock \emph{arXiv preprint arXiv:2301.02679}, 2023.

\bibitem[Morwani et~al.(2024)Morwani, Edelman, Oncescu, Zhao, and Kakade]{morwani2024feature}
Depen Morwani, Benjamin~L. Edelman, Costin-Andrei Oncescu, Rosie Zhao, and Sham~M. Kakade.
\newblock Feature emergence via margin maximization: case studies in algebraic tasks.
\newblock In \emph{The Twelfth International Conference on Learning Representations}, 2024.
\newblock URL \url{https://openreview.net/forum?id=i9wDX850jR}.

\bibitem[Chughtai et~al.(2023{\natexlab{a}})Chughtai, Chan, and Nanda]{chughtai2023toy}
Bilal Chughtai, Lawrence Chan, and Neel Nanda.
\newblock A toy model of universality: Reverse engineering how networks learn group operations.
\newblock In \emph{International Conference on Machine Learning}, pages 6243--6267. PMLR, 2023{\natexlab{a}}.

\bibitem[Stander et~al.(2024)Stander, Yu, Fan, and Biderman]{standergrokking}
Dashiell Stander, Qinan Yu, Honglu Fan, and Stella Biderman.
\newblock Grokking group multiplication with cosets.
\newblock In \emph{Forty-first International Conference on Machine Learning}, 2024.

\bibitem[Chughtai et~al.(2023{\natexlab{b}})Chughtai, Chan, and Nanda]{chughtai2023neural}
Bilal Chughtai, Lawrence Chan, and Neel Nanda.
\newblock Neural networks learn representation theory: Reverse engineering how networks perform group operations.
\newblock In \emph{ICLR 2023 Workshop on Physics for Machine Learning}, 2023{\natexlab{b}}.

\bibitem[Power et~al.(2022)Power, Burda, Edwards, Babuschkin, and Misra]{power2022grokkinggeneralizationoverfittingsmall}
Alethea Power, Yuri Burda, Harri Edwards, Igor Babuschkin, and Vedant Misra.
\newblock Grokking: Generalization beyond overfitting on small algorithmic datasets, 2022.
\newblock URL \url{https://arxiv.org/abs/2201.02177}.

\bibitem[Mohamadi et~al.(2023)Mohamadi, Li, Wu, and Sutherland]{mohamadi2023grokking}
Mohamad~Amin Mohamadi, Zhiyuan Li, Lei Wu, and Danica Sutherland.
\newblock Grokking modular arithmetic can be explained by margin maximization.
\newblock In \emph{NeurIPS 2023 Workshop on Mathematics of Modern Machine Learning}, 2023.
\newblock URL \url{https://openreview.net/forum?id=QPMfCLnIqf}.

\bibitem[Sharkey et~al.(2025)Sharkey, Chughtai, Batson, Lindsey, Wu, Bushnaq, Goldowsky-Dill, Heimersheim, Ortega, Bloom, Biderman, Garriga-Alonso, Conmy, Nanda, Rumbelow, Wattenberg, Schoots, Miller, Michaud, Casper, Tegmark, Saunders, Bau, Todd, Geiger, Geva, Hoogland, Murfet, and McGrath]{sharkey2025openproblemsmechanisticinterpretability}
Lee Sharkey, Bilal Chughtai, Joshua Batson, Jack Lindsey, Jeff Wu, Lucius Bushnaq, Nicholas Goldowsky-Dill, Stefan Heimersheim, Alejandro Ortega, Joseph Bloom, Stella Biderman, Adria Garriga-Alonso, Arthur Conmy, Neel Nanda, Jessica Rumbelow, Martin Wattenberg, Nandi Schoots, Joseph Miller, Eric~J. Michaud, Stephen Casper, Max Tegmark, William Saunders, David Bau, Eric Todd, Atticus Geiger, Mor Geva, Jesse Hoogland, Daniel Murfet, and Tom McGrath.
\newblock Open problems in mechanistic interpretability, 2025.
\newblock URL \url{https://arxiv.org/abs/2501.16496}.

\bibitem[Cammarata et~al.(2020)Cammarata, Carter, Goh, Olah, Petrov, Schubert, Voss, Egan, and Lim]{cammarata2020thread:}
Nick Cammarata, Shan Carter, Gabriel Goh, Chris Olah, Michael Petrov, Ludwig Schubert, Chelsea Voss, Ben Egan, and Swee~Kiat Lim.
\newblock Thread: Circuits.
\newblock \emph{Distill}, 2020.
\newblock \doi{10.23915/distill.00024}.
\newblock https://distill.pub/2020/circuits.

\bibitem[Elhage et~al.(2021)Elhage, Nanda, Olsson, Henighan, Joseph, Mann, Askell, Bai, Chen, Conerly, DasSarma, Drain, Ganguli, Hatfield-Dodds, Hernandez, Jones, Kernion, Lovitt, Ndousse, Amodei, Brown, Clark, Kaplan, McCandlish, and Olah]{elhage2021mathematical}
Nelson Elhage, Neel Nanda, Catherine Olsson, Tom Henighan, Nicholas Joseph, Ben Mann, Amanda Askell, Yuntao Bai, Anna Chen, Tom Conerly, Nova DasSarma, Dawn Drain, Deep Ganguli, Zac Hatfield-Dodds, Danny Hernandez, Andy Jones, Jackson Kernion, Liane Lovitt, Kamal Ndousse, Dario Amodei, Tom Brown, Jack Clark, Jared Kaplan, Sam McCandlish, and Chris Olah.
\newblock A mathematical framework for transformer circuits.
\newblock \emph{Transformer Circuits Thread}, 2021.
\newblock https://transformer-circuits.pub/2021/framework/index.html.

\bibitem[Olsson et~al.(2022)Olsson, Elhage, Nanda, Joseph, DasSarma, Henighan, Mann, Askell, Bai, Chen, Conerly, Drain, Ganguli, Hatfield-Dodds, Hernandez, Johnston, Jones, Kernion, Lovitt, Ndousse, Amodei, Brown, Clark, Kaplan, McCandlish, and Olah]{olsson2022context}
Catherine Olsson, Nelson Elhage, Neel Nanda, Nicholas Joseph, Nova DasSarma, Tom Henighan, Ben Mann, Amanda Askell, Yuntao Bai, Anna Chen, Tom Conerly, Dawn Drain, Deep Ganguli, Zac Hatfield-Dodds, Danny Hernandez, Scott Johnston, Andy Jones, Jackson Kernion, Liane Lovitt, Kamal Ndousse, Dario Amodei, Tom Brown, Jack Clark, Jared Kaplan, Sam McCandlish, and Chris Olah.
\newblock In-context learning and induction heads.
\newblock \emph{Transformer Circuits Thread}, 2022.
\newblock https://transformer-circuits.pub/2022/in-context-learning-and-induction-heads/index.html.

\bibitem[Elhage et~al.(2022)Elhage, Hume, Olsson, Schiefer, Henighan, Kravec, Hatfield-Dodds, Lasenby, Drain, Chen, et~al.]{elhage2022toy}
Nelson Elhage, Tristan Hume, Catherine Olsson, Nicholas Schiefer, Tom Henighan, Shauna Kravec, Zac Hatfield-Dodds, Robert Lasenby, Dawn Drain, Carol Chen, et~al.
\newblock Toy models of superposition.
\newblock \emph{arXiv preprint arXiv:2209.10652}, 2022.

\bibitem[Marr and Poggio(1979)]{Marr1979}
D~Marr and T~Poggio.
\newblock From understanding computation to understanding neural circuitry.
\newblock \emph{Neuroscience Research Program Bulletin}, 15\penalty0 (3):\penalty0 470--488, 1979.

\bibitem[Hamrick and Mohamed(2020)]{hamrick2020levels}
Jessica Hamrick and Shakir Mohamed.
\newblock Levels of analysis for machine learning.
\newblock \emph{arXiv preprint arXiv:2004.05107}, 2020.

\bibitem[He et~al.(2024{\natexlab{a}})He, Achterberg, Collins, Nejad, Akarca, Yang, Gurnee, Sucholutsky, Tang, Ianov, et~al.]{he2024multilevel}
Zhonghao He, Jascha Achterberg, Katie Collins, Kevin Nejad, Danyal Akarca, Yinzhu Yang, Wes Gurnee, Ilia Sucholutsky, Yuhan Tang, Rebeca Ianov, et~al.
\newblock Multilevel interpretability of artificial neural networks: Leveraging framework and methods from neuroscience.
\newblock \emph{arXiv preprint arXiv:2408.12664}, 2024{\natexlab{a}}.

\bibitem[Vilas et~al.(2024)Vilas, Adolfi, Poeppel, and Roig]{vilas2024position}
Martina~G. Vilas, Federico Adolfi, David Poeppel, and Gemma Roig.
\newblock Position: An inner interpretability framework for {AI} inspired by lessons from cognitive neuroscience.
\newblock In \emph{Forty-first International Conference on Machine Learning}, 2024.
\newblock URL \url{https://openreview.net/forum?id=66KmnMhGU5}.

\bibitem[Wenger et~al.(2023)Wenger, Chen, Charton, and Lauter]{wenger2023salsaattackinglatticecryptography}
Emily Wenger, Mingjie Chen, François Charton, and Kristin Lauter.
\newblock Salsa: Attacking lattice cryptography with transformers, 2023.
\newblock URL \url{https://arxiv.org/abs/2207.04785}.

\bibitem[Liu et~al.(2022)Liu, Kitouni, Nolte, Michaud, Tegmark, and Williams]{liu2022towards}
Ziming Liu, Ouail Kitouni, Niklas~S Nolte, Eric Michaud, Max Tegmark, and Mike Williams.
\newblock Towards understanding grokking: An effective theory of representation learning.
\newblock \emph{Advances in Neural Information Processing Systems}, 35:\penalty0 34651--34663, 2022.

\bibitem[Doshi et~al.(2023)Doshi, Das, He, and Gromov]{doshi2023grok}
Darshil Doshi, Aritra Das, Tianyu He, and Andrey Gromov.
\newblock To grok or not to grok: Disentangling generalization and memorization on corrupted algorithmic datasets.
\newblock \emph{arXiv preprint arXiv:2310.13061}, 2023.

\bibitem[Mallinar et~al.(2024)Mallinar, Beaglehole, Zhu, Radhakrishnan, Pandit, and Belkin]{mallinar2024emergencenonneuralmodelsgrokking}
Neil Mallinar, Daniel Beaglehole, Libin Zhu, Adityanarayanan Radhakrishnan, Parthe Pandit, and Mikhail Belkin.
\newblock Emergence in non-neural models: grokking modular arithmetic via average gradient outer product, 2024.
\newblock URL \url{https://arxiv.org/abs/2407.20199}.

\bibitem[Furuta et~al.(2024)Furuta, Minegishi, Iwasawa, and Matsuo]{furuta2024empiricalinterpretationinternalcircuits}
Hiroki Furuta, Gouki Minegishi, Yusuke Iwasawa, and Yutaka Matsuo.
\newblock Towards empirical interpretation of internal circuits and properties in grokked transformers on modular polynomials, 2024.
\newblock URL \url{https://arxiv.org/abs/2402.16726}.

\bibitem[Lyu et~al.(2024)Lyu, Jin, Li, Du, Lee, and Hu]{lyu2024dichotomyearlylatephase}
Kaifeng Lyu, Jikai Jin, Zhiyuan Li, Simon~S. Du, Jason~D. Lee, and Wei Hu.
\newblock Dichotomy of early and late phase implicit biases can provably induce grokking, 2024.
\newblock URL \url{https://arxiv.org/abs/2311.18817}.

\bibitem[Kumar et~al.(2024)Kumar, Bordelon, Gershman, and Pehlevan]{kumar2024grokkingtransitionlazyrich}
Tanishq Kumar, Blake Bordelon, Samuel~J. Gershman, and Cengiz Pehlevan.
\newblock Grokking as the transition from lazy to rich training dynamics, 2024.
\newblock URL \url{https://arxiv.org/abs/2310.06110}.

\bibitem[Doshi et~al.(2024)Doshi, He, Das, and Gromov]{doshi2024grokking}
Darshil Doshi, Tianyu He, Aritra Das, and Andrey Gromov.
\newblock Grokking modular polynomials.
\newblock \emph{arXiv preprint arXiv:2406.03495}, 2024.

\bibitem[He et~al.(2024{\natexlab{b}})He, Doshi, Das, and Gromov]{he2024learning}
Tianyu He, Darshil Doshi, Aritra Das, and Andrey Gromov.
\newblock Learning to grok: Emergence of in-context learning and skill composition in modular arithmetic tasks.
\newblock \emph{arXiv preprint arXiv:2406.02550}, 2024{\natexlab{b}}.

\bibitem[Edelman et~al.(2024)Edelman, Tsilivis, Edelman, Malach, and Goel]{edelman2024evolution}
Ezra Edelman, Nikolaos Tsilivis, Benjamin Edelman, Eran Malach, and Surbhi Goel.
\newblock The evolution of statistical induction heads: In-context learning markov chains.
\newblock \emph{Advances in Neural Information Processing Systems}, 37:\penalty0 64273--64311, 2024.

\bibitem[Charton and Kempe(2024)]{charton2024emergent}
Fran{\c{c}}ois Charton and Julia Kempe.
\newblock Emergent properties with repeated examples.
\newblock \emph{arXiv preprint arXiv:2410.07041}, 2024.

\bibitem[Lee et~al.(2025)Lee, Cai, Schwarzschild, Lee, and Papailiopoulos]{lee2025self}
Nayoung Lee, Ziyang Cai, Avi Schwarzschild, Kangwook Lee, and Dimitris Papailiopoulos.
\newblock Self-improving transformers overcome easy-to-hard and length generalization challenges.
\newblock \emph{arXiv preprint arXiv:2502.01612}, 2025.

\bibitem[Jelassi et~al.(2023)Jelassi, d'Ascoli, Domingo-Enrich, Wu, Li, and Charton]{jelassi2023length}
Samy Jelassi, St{\'e}phane d'Ascoli, Carles Domingo-Enrich, Yuhuai Wu, Yuanzhi Li, and Fran{\c{c}}ois Charton.
\newblock Length generalization in arithmetic transformers.
\newblock \emph{arXiv preprint arXiv:2306.15400}, 2023.

\bibitem[Charton(2024)]{charton2024learninggreatestcommondivisor}
François Charton.
\newblock Learning the greatest common divisor: explaining transformer predictions, 2024.
\newblock URL \url{https://arxiv.org/abs/2308.15594}.

\bibitem[Fawzi et~al.(2022)Fawzi, Balog, Huang, Hubert, Romera-Paredes, Barekatain, Novikov, R.~Ruiz, Schrittwieser, Swirszcz, et~al.]{fawzi2022discovering}
Alhussein Fawzi, Matej Balog, Aja Huang, Thomas Hubert, Bernardino Romera-Paredes, Mohammadamin Barekatain, Alexander Novikov, Francisco~J R.~Ruiz, Julian Schrittwieser, Grzegorz Swirszcz, et~al.
\newblock Discovering faster matrix multiplication algorithms with reinforcement learning.
\newblock \emph{Nature}, 610\penalty0 (7930):\penalty0 47--53, 2022.

\bibitem[Mankowitz et~al.(2023)Mankowitz, Michi, Zhernov, Gelmi, Selvi, Paduraru, Leurent, Iqbal, Lespiau, Ahern, et~al.]{mankowitz2023faster}
Daniel~J Mankowitz, Andrea Michi, Anton Zhernov, Marco Gelmi, Marco Selvi, Cosmin Paduraru, Edouard Leurent, Shariq Iqbal, Jean-Baptiste Lespiau, Alex Ahern, et~al.
\newblock Faster sorting algorithms discovered using deep reinforcement learning.
\newblock \emph{Nature}, 618\penalty0 (7964):\penalty0 257--263, 2023.

\bibitem[McCracken(2021)]{mccracken2021using}
Gavin McCracken.
\newblock \emph{Using Exact Models to Analyze Policy Gradient Algorithms}.
\newblock McGill University (Canada), 2021.

\bibitem[Raghu et~al.(2018)Raghu, Irpan, Andreas, Kleinberg, Le, and Kleinberg]{raghu2018can}
Maithra Raghu, Alex Irpan, Jacob Andreas, Bobby Kleinberg, Quoc Le, and Jon Kleinberg.
\newblock Can deep reinforcement learning solve erdos-selfridge-spencer games?
\newblock In \emph{International Conference on Machine Learning}, pages 4238--4246. PMLR, 2018.

\bibitem[Kingma and Ba(2017)]{kingma2017adammethodstochasticoptimization}
Diederik~P. Kingma and Jimmy Ba.
\newblock Adam: A method for stochastic optimization, 2017.
\newblock URL \url{https://arxiv.org/abs/1412.6980}.

\end{thebibliography}

\newpage
\appendix

\clearpage
\phantomsection
\addcontentsline{toc}{section}{Contents of the Appendix}
\noindent{\Large\bfseries Table of Contents for the Appendix}
\vspace{1em}

\begin{enumerate}[label=\Alph*., leftmargin=2em]
  \item \hyperref[app:group-theory]{Additional Background} \dotfill \pageref{app:group-theory}
    \begin{enumerate}[label=\Alph{enumi}.\arabic*., leftmargin=3em]
      \item \hyperref[app:group-theory:related]{Conceptual Background and Additional Related Work} \dotfill \pageref{app:group-theory:related}
      \item \hyperref[app:group-theory:primer]{Additional Mathematical Background} \dotfill \pageref{app:group-theory:primer}
      \begin{enumerate}[label=\Alph{enumi}.\arabic{enumii}.\arabic*., leftmargin=4em]
      \item \hyperref[app:examples]{Examples: cosets, Cayley graphs, step size $d$} \dotfill \pageref{app:examples}
      \end{enumerate}
    \end{enumerate}

  \item \hyperref[proof:good-defn]{Proof of Theorem~\ref{thm:good-defn}} \dotfill \pageref{proof:good-defn}

  \item \hyperref[app:heuristic]{Proofs and details for Theorem ~\ref{main:theorem} and Corollary~\ref{main:corollary}} \dotfill \pageref{app:heuristic}

  \item \hyperref[sec:projections-of-reps]{Embeddings contain projections of representations, not representations} \dotfill \pageref{sec:projections-of-reps}

  \item \hyperref[app:expt-details]{Main‑body Experimental Details} \dotfill \pageref{app:expt-details}
    \begin{enumerate}[label=\Alph{enumi}.\arabic*., leftmargin=3em]
      \item \hyperref[app:expt_fig_simple_neurons]{Figure~\ref{fig:simple-neurons}: simple neurons and remapped simple neurons} \dotfill \pageref{app:expt_fig_simple_neurons}
      \item \hyperref[app:expt_fig_approximate_cosets]{Figure~\ref{fig:approximate_cosets}: approximate cosets in cayley graphs, cosines and remapped cosines} \dotfill \pageref{app:expt_fig_approximate_cosets}
      \item \hyperref[app:expt_scaling_log_scaling_plot]{Figure~\ref{fig:log-scaling-plot}: scaling moduli gives $\mathcal{O}(\log(n))$ frequency growth rate} \dotfill \pageref{app:expt_scaling_log_scaling_plot}
      \item \hyperref[app:expt_fig_no_embed_freqs_avg]{Figure~\ref{fig:no_embed_freqs_avg}: showing average \# frequencies found in MLPs} \dotfill \pageref{app:expt_fig_no_embed_freqs_avg}
      \item \hyperref[app:expt_fig_clock_r2_scaling_width]{Figure~\ref{fig:clock-r2-scaling-width}: clusters of simple neurons \& $R^2$ for scaling widths in architectures} \dotfill \pageref{app:expt_fig_clock_r2_scaling_width}
      \item \hyperref[app:expt_fig_depth_mlps]{Figure~\ref{fig:depth-mlps}: $R^2$ vs hyperparameters vs depth in MLPs} \dotfill \pageref{app:expt_fig_depth_mlps}
      \item \hyperref[app:expt_fig_depth_heatmap]{Figure~\ref{fig:depth-heatmap-pizza-clock}: $R^2$ vs hyperparameters vs depth in pizzas \& clocks} \dotfill \pageref{app:expt_fig_depth_heatmap}
      \item \hyperref[app:expt_fig_percent_sin_cos]{Figure~\ref{fig:percent-sin-cos}: the \% neurons that are simple in deeper layers varies} \dotfill \pageref{app:expt_fig_percent_sin_cos}
      \item \hyperref[app:expt_fig_frequency_histo_depth]{Figure~\ref{fig:frequency-histo-depth}: networks are biased to learn cosets over approximate cosets} \dotfill \pageref{app:expt_fig_frequency_histo_depth}
    \end{enumerate}

  \item \hyperref[app:more-expts]{Further Experimental Results} \dotfill \pageref{app:more-expts}
    \begin{enumerate}[label=\Alph{enumi}.\arabic*., leftmargin=3em]
      \item \hyperref[app:goodness-of-fit]{Goodness of Fit of the Single Sinusoidal Approximation} \dotfill \pageref{app:goodness-of-fit}
    \end{enumerate}

  \item \hyperref[app:additional-expmnt]{Additional Experiments We Didn’t Put in the Main Text} \dotfill \pageref{app:additional-expmnt}
    \begin{enumerate}[label=\Alph{enumi}.\arabic*., leftmargin=3em]
      \item \hyperref[app:pca]{Principal Component Analyses (PCA) of embeddings} \dotfill \pageref{app:pca}
      \item \hyperref[app:more-expts-simple]{More examples of simple neurons} \dotfill \pageref{app:more-expts-simple}
      \item \hyperref[app:phase-shifts]{Studying phase shifts in simple neurons} \dotfill \pageref{app:phase-shifts}
      \item \hyperref[app:fine-tuning]{Studying fine‑tuning neurons} \dotfill \pageref{app:fine-tuning}
      \item \hyperref[app:histograms-fine-tuning]{Histograms of frequencies associated with fine‑tuning neurons} \dotfill \pageref{app:histograms-fine-tuning}
      \item \hyperref[app:fine-tuning-additive]{Fine‑tuning neurons like additive and subtractive relations} \dotfill \pageref{app:fine-tuning-additive}
      \item \hyperref[app:neuron-length-histograms]{Histograms of counts of frequencies being learned in mod59 and mod66 across varying depths} \dotfill \pageref{app:neuron-length-histograms}
      \item \hyperref[app:noise-ablation]{Noise and ablation studies} \dotfill \pageref{app:noise-ablation}
      \item \hyperref[app:num-freqs]{Number of frequencies} \dotfill \pageref{app:num-freqs}
      \item \hyperref[app:equivariance]{Qualitative: equivariance of the cluster contributions to logits} \dotfill \pageref{app:equivariance}
       \begin{enumerate}[label=\Alph{enumi}.\arabic{enumii}.\arabic*., leftmargin=4em]
      \item\hyperref[app:pizzas-output-on-cosets-too]{Pizza model} \dotfill \pageref{app:pizzas-output-on-cosets-too}
      \item \hyperref[app:clock-model]{Clock model} \dotfill \pageref{app:clock-model}
      \end{enumerate}
    \end{enumerate}
\end{enumerate}

\section{Additional Background}\label{app:group-theory}

\subsection{Conceptual Background and Additional Related Work}
\label{app:group-theory:related}

This appendix expands on the conceptual foundations and related work that motivated our approach. We first review key debates in mechanistic interpretability—particularly around universality and abstraction—before turning to mathematical tasks as ideal testbeds for studying learned structure.

\textbf{Mechanistic Interpretability, Universality and Levels of Abstraction.}

Mechanistic interpretability seeks to reverse-engineer trained neural networks by identifying the roles of individual components--such as neurons, attention heads, or MLP weights--in the model's learned function \citep{sharkey2025openproblemsmechanisticinterpretability}. It aims to explain \textit{how} models arrive at their outputs by analyzing the specific components and pathways involved in their internal computations. A central focus of this paradigm is on \textbf{circuits} \citep{olah2020zoom, cammarata2020thread:, elhage2021mathematical}: small groups of components (essentially a subnetwork) that together perform a recognizable subtask such as copying, induction or composition \citep{olsson2022context}. Over time, researchers have identified recurring patterns in these circuits--known as \textbf{motifs}--such as superposition \citep{elhage2022toy} (where multiple features share the same subspace), equivariance (where computations respect certain transformations) or unioning over cases (where a unit activates for multiple distinct patterns without distinguishing between them). These recurring motifs offer generalizable insight into how networks compute \citep{olah2020zoom}.

A central idea in mechanistic interpretability is \textbf{universality}--the informal hypothesis that independently trained neural networks tend to develop \textit{similar internal structures} \citep{li2015convergent, olah2020zoom}. The appeal is clear: if models trained in different conditions all learn the same solution, then interpreting one model could offer insight into many. However, what ``similar'' means in this context has often gone unstated. Universality might refer to alignment in learned features, to similarity in neuron roles or circuits, or to shared algorithmic structure—but these possibilities are rarely distinguished. At the same time, empirical findings are mixed: while some studies report that representations in vision and language models become increasingly aligned as model scale increases \citep{huh2024platonic}, others show divergences in learned mechanisms even on simple algorithmic/mathematical tasks \citep{zhong2023the, standergrokking}. Together, these findings highlight a deeper issue: the field lacks a clear definition of \textit{what kind of structure} should be expected to be universal--and \textit{at what level of abstraction} such universality should be evaluated. 

A major challenge in this area is the ambiguity of the term circuit, which can refer to anything from small neuron clusters to nearly full-network subnetworks. This flexibility enables compelling case studies, but hinders comparisons across scales. In early interpretability work, \citep{nanda2023progress} introduced the \textit{Fourier Multiplication Algorithm} (later called the Clock), showing that transformers trained on modular addition learned to represent inputs on circles and perform angle addition via attention. \citet{chughtai2023toy} proposed a broader generalization—the Group Composition via Representations (GCR) algorithm—suggesting a universal algorithm for group tasks based on multiplying group representations.  In both cases, the term ``algorithm'' referred to the local computation implemented by a circuit defined over a specific Fourier frequency or irreducible representation (irrep). However, later work challenged the universality of these circuit-level ``algorithms''. \citet{zhong2023the} found that different training settings could induce qualitatively different frequency-specific circuits (e.g., the Pizza circuit), and showed that multiple distinct frequency-based circuits could coexist within the same model. \citet{standergrokking} analyzed models trained on $S_n$ and found coset-based circuits--rather than irrep-based implementations of GCR--further undermining its algorithmic universality claim. These results suggest that even when models solve the same task, they may not implement the same algorithm \textit{at the circuit level}. Our work shifts perspective. We define a model's algorithm as a global computational strategy realized across the full network. To uncover this, we take a multiscale approach--analyzing the behavior of individual neurons, how frequency-aligned clusters of neurons work together, and how these clusters interact to form a coherent global solution. Our simple neuron model helps reconcile prior findings by showing that seemingly different circuit behaviors can be well-approximated within a unified functional form. At the cluster level, we have coset computations. At the full-network level, we identify a consistent solution that emerges across architectures and training runs: a universal abstract algorithm, formalized as an approximate version of the Chinese Remainder Theorem (aCRT). This perspective explains how models can converge to the same high-level structure even when their lower-level mechanisms diverge. Unlike prior analyses focused on isolated subcircuits, we sought to understand how computation emerges across scales--from individual neurons to full-model solutions--and show that models can share the same network-level algorithmic structure despite mechanistic variability.

Much of the confusion around universality stems from comparing models without distinguishing between different \textbf{levels of abstraction}. Recent work has proposed frameworks from cognitive science—especially Marr’s levels of analysis \citep{Marr1979, hamrick2020levels, he2024multilevel, vilas2024position}—as useful tools in interpretability, helping clarify what kind of explanations are being offered. Marr distinguishes between (1) the computational level (what problem is solved), (2) the algorithmic level (how it is solved), and (3) the implementational level (how it is physically realized). Although not developed for studying universality and not formally used in our main analysis, we find these ideas helpful as a retrospective lens—both for understanding why prior analyses disagreed—and for identifying common structure where others saw divergence. For example, \citet{vilas2024position} propose looking for invariances across levels and enforcing mutual constraints between levels as guiding principles. Our approach reflects both: we uncover a consistent computational-level (in the sense of Marr's levels) solution, the aCRT, that unifies divergent circuit-level behaviors and reveals a form of universality that holds at a more abstract level than previously recognized—what we refer to as the universal abstract algorithm.

\textbf{Mathematical and Algorithmic Tasks as Interpretable Testbeds for Machine Learning.}

Mathematical and algorithmic tasks—such as modular addition, group operations, sorting, and Markov chains—have become valuable testbeds for studying machine learning systems. Their appeal lies in their formal structure: these problems have been studied for centuries, with well-understood properties. Because the task structure is fully known, optimal solutions are analyzable and generalization behavior can be sharply characterized—unlike in typical natural data settings. This makes them ideal environments for probing what neural networks learn and how. While primarily used to study model internals, these tasks also have practical relevance; for example, transformers trained on modular arithmetic have been applied to attack lattice-based cryptographic schemes \citep{wenger2023salsaattackinglatticecryptography}.

These tasks have been central to studying grokking, the phenomenon where models generalize abruptly after a period of overfitting \citep{power2022grokkinggeneralizationoverfittingsmall}. Using modular addition as a testbed, researchers have linked grokking to structured internal representations \citep{liu2022towards, gromov2023grokking}, margin maximization \citep{mohamadi2023grokking, morwani2024feature}, label corruption \citep{doshi2023grok}, and non-neural architectures \citep{mallinar2024emergencenonneuralmodelsgrokking}, or how distinct circuits emerge during grokking for different modular arithmetic tasks \citep{furuta2024empiricalinterpretationinternalcircuits}. Other work has connected grokking to training phase transitions, such as shifts between lazy and rich regimes in modular addition \citep{lyu2024dichotomyearlylatephase} and polynomial regression \citep{kumar2024grokkingtransitionlazyrich}. Several papers also provide exact analytical constructions of specific network weights: one-layer, one-hot encoded networks with quadratic activations solving modular addition \citep{gromov2023grokking, morwani2024feature}, and ReLU networks solving modular multiplication for modeling grokked solutions in modular polynomials \citep{doshi2024grokking}.

Beyond grokking, mathematical tasks have helped probe generalization, learning dynamics, and in-context learning. Modular addition and Markov chains have served as controlled environments for studying how transformers acquire in-context learning capabilities \citep{he2024learning, edelman2024evolution}. Other work has shown that repeating training examples affects generalization in tasks like greatest common divisor (GCD), modular multiplication, and matrix eigenvalue prediction \citep{charton2024emergent}, and that arithmetic tasks shed light on how transformers handle length generalization \citep{lee2025self, jelassi2023length}. In the GCD setting specifically, models appear to select from a small, learned set of candidate divisors \citep{charton2024learninggreatestcommondivisor}.

While most research in this area focuses on supervised learning, some work investigates how reinforcement learning (RL) agents operate in mathematical environments. Agents trained on tasks like matrix multiplication or sorting have been observed to discover novel, interpretable algorithms \citep{fawzi2022discovering, mankowitz2023faster}. Other work leverages group-theoretic structure to enable exact analysis: environments built using the temporal symmetries of affine Weyl groups allow analytical characterization of the policy gradient landscape, yielding closed-form gradient dynamics and local optima and providing insight into how exploration difficulty affects learning \citep{mccracken2021using}. Similarly, interpretable RL environments based on Erdos-Selfridge-Spencer games have been developed, with exact optimal strategies and tunable difficulty controlled by human-interpretable environment parameters \citep{raghu2018can}.

Together, this body of work establishes mathematical tasks as invaluable tools for interpretability. Their known structure enables precise analysis of learned behavior, supports abstraction-driven explanations, and provides testbeds where claims about generalization and universality can be rigorously evaluated.

\subsection{Additional Mathematical Background}
\label{app:group-theory:primer}

This section provides formal definitions and examples of the group-theoretic structures that underlie our analysis of modular addition networks: groups, cosets, Cayley graphs, group representations, and the Chinese Remainder Theorem (CRT). These definitions support the structures described in the main text: simple neurons, approximate cosets and the approximate CRT algorithm we identify in trained networks.

\textbf{Groups, Subgroups and Cosets.}

\begin{definition}[Group]
    A \textbf{group} \((G, \circ)\) consists of a set \(G\) equipped with a binary operation \(\circ: G \times G \to G\) satisfying:

    \begin{enumerate}
        \item \textbf{Associativity}: \((f \circ g) \circ h = f \circ (g \circ h)\) for all \(f, g, h \in G\).
        \item \textbf{Identity}: There exists an element \(e \in G\) such that \(e \circ g = g \circ e = g\) for all \(g \in G\).
        \item \textbf{Inverses}: For each \(g \in G\), there exists \(g^{-1} \in G\) such that \(g \circ g^{-1} = e\).
    \end{enumerate}
\end{definition}

\begin{definition}[Subgroup]
    A subset \(H \subseteq G\) is a \textbf{subgroup} if it is itself a group under the same operation \(\circ\). That is, \(H\) must contain the identity, be closed under the operation, and contain inverses of its elements.
\end{definition}

Subgroups induce a natural partitioning of the group via \emph{cosets}, which are key to understanding modular structure and factorization.

\begin{definition}[Cosets]
        Let \(H\) be a subgroup of \(G\), and let \(g \in G\). The \textbf{left coset} of \(H\) with representative \(g\) is:
        \[
        gH = \{ g \circ h : h \in H \}.
        \]
        Right cosets are defined similarly: \(Hg = \{ h \circ g : h \in H \}\). Cosets partition \(G\) into disjoint, equally sized subsets.
\end{definition}

\begin{example}[Integers and Even/Odd Cosets]
The set $(\mathbb{Z},+)$ is a group. The identity is \(0\); the inverse of \(n\) is \(-n\). The even integers \(2\mathbb{Z}\) form a subgroup. This gives two cosets:
    \[
        2\mathbb{Z} = \{\ldots, -2, 0, 2, \ldots\}, \quad 1 + 2\mathbb{Z} = \{\ldots, -1, 1, 3, \ldots\}.
    \]
    These correspond to the even and odd integers — a familiar example of partitioning via cosets.
\end{example}

\begin{definition}[Homomorphism]
    A map \(\psi: G \to H\) between groups is a \textbf{homomorphism} if it preserves the group operation:
    \[
        \psi(g_1 \circ_G g_2) = \psi(g_1) \circ_H \psi(g_2), \quad \text{for all } g_1, g_2 \in G.
    \]
    If \(\psi\) is also bijective, it is called a \textbf{group isomorphism}.
\end{definition}

Homomorphisms are the natural notion of "structure-preserving" maps between groups.

\textbf{Cyclic Groups and Modular Arithmetic.}

In this paper, we focus on modular addition, which forms the cyclic group.

\begin{definition}[Cyclic Group \(C_n\)]
The \textbf{cyclic group of order \(n\)}, denoted \(C_n\) or \(\mathbb{Z}_n\), is the set \(\{0, 1, \dots, n-1\}\) equipped with addition modulo \(n\). The group operation is defined by
\[
a \circ b = (a + b) \mod n,
\]
with identity element \(0\) and inverses given by \(a^{-1} = (n - a) \mod n\).
\end{definition}

Subgroups of \(C_n\) correspond to evenly spaced subsets, and their cosets partition \(C_n\) into congruence classes modulo a divisor of \(n\).

\begin{example}[Cosets mod 4 in \(\mathbb{Z}_8\)]
Let \(n = 8\), and consider the subgroup \(H = \{0, 4\}\). The left cosets are:
\[
0 + H = \{0, 4\},\quad 1 + H = \{1, 5\},\quad 2 + H = \{2, 6\},\quad 3 + H = \{3, 7\}.
\]
This partitions \(\mathbb{Z}_8\) into four disjoint cosets of size 2.
\end{example}

\begin{definition}[Modular Inverse]
Let \(a \in \mathbb{Z}_n\). The \textbf{modular inverse} of \(a\) is an element \(b \in \mathbb{Z}_n\) such that
\[
a \cdot b \equiv 1 \mod n.
\]
A modular inverse exists if and only if \(\gcd(a, n) = 1\), i.e., \(a\) and \(n\) are coprime.
\end{definition}

\textbf{The Chinese Remainder Theorem.}

The Chinese Remainder Theorem (CRT) gives a powerful way to decompose modular arithmetic over a large modulus into multiple, independent modular systems over smaller, coprime moduli. This decomposition mirrors the modular structure learned by networks trained on addition tasks, and it is central to our concept of the \emph{approximate CRT}.

\begin{theorem}[Chinese Remainder Theorem]
    Let \( n = q_1 q_2 \cdots q_k \) be a product of pairwise coprime integers. Then the map
    \[
        \phi : \mathbb{Z}_n \to \mathbb{Z}_{q_1} \times \cdots \times \mathbb{Z}_{q_k}, \quad x \mapsto (x \bmod q_1, \ldots, x \bmod q_k)
    \]
    is a group isomorphism. That is, each element in \(\mathbb{Z}_n\) corresponds uniquely to a tuple of residues modulo the \(q_i\), and vice versa.
\end{theorem}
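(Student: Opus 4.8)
The plan is to verify that $\phi$ is a well-defined group homomorphism, note that its domain and codomain are finite sets of equal cardinality, and then establish injectivity; bijectivity — and hence the isomorphism claim — follows automatically. The only genuinely substantive ingredient is an elementary lemma: pairwise coprime divisors of a common multiple have product dividing that multiple.

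First I would check well-definedness and the homomorphism property, both of which are routine. Since each $q_i$ divides $n$, any two representatives of the same class in $\mathbb{Z}_n$ are congruent mod $q_i$, so $x \mapsto x \bmod q_i$ descends to a map $\mathbb{Z}_n \to \mathbb{Z}_{q_i}$; assembling these for $i = 1,\ldots,k$ gives $\phi$. Reduction mod $q_i$ is additive, so
\[
\phi(x + y) = \bigl((x+y)\bmod q_1,\ldots,(x+y)\bmod q_k\bigr) = \phi(x) + \phi(y),
\]
with componentwise addition on the right; $\phi$ also sends $0$ to $0$ and negatives to negatives, hence is a group homomorphism.

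Next I would count: $\lvert \mathbb{Z}_n \rvert = n = q_1 q_2 \cdots q_k = \lvert \mathbb{Z}_{q_1} \times \cdots \times \mathbb{Z}_{q_k} \rvert$, both finite, so a homomorphism between them is bijective iff injective, i.e. iff $\ker \phi = \{0\}$. Suppose $\phi(x) = 0$, meaning $q_i \mid x$ for every $i$. The key step is to deduce $n = q_1\cdots q_k \mid x$. I would prove by induction on $k$ that pairwise coprime divisors of an integer have product dividing it: the base case is trivial, and the inductive step uses the standard fact that if $u \mid x$, $v \mid x$, and $\gcd(u,v) = 1$, then $uv \mid x$ (write $x = uw$; since $v \mid uw$ with $\gcd(u,v)=1$, Bézout forces $v \mid w$, so $uv \mid x$), together with the observation that a product of integers each coprime to $v$ is itself coprime to $v$. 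Applying this with the $q_i$ in place of the divisors gives $n \mid x$, so $x = 0$ in $\mathbb{Z}_n$, proving injectivity and therefore that $\phi$ is a group isomorphism.

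The main (and only) obstacle is the coprimality-to-common-multiple lemma invoked above; everything else is bookkeeping. An alternative route avoids the counting argument by exhibiting an explicit inverse: setting $N_i = n/q_i$, one uses $\gcd(N_i, q_i) = 1$ and Bézout to find $M_i$ with $N_i M_i \equiv 1 \pmod{q_i}$, and then checks that $(r_1,\ldots,r_k) \mapsto \sum_i r_i N_i M_i \bmod n$ is a two-sided inverse of $\phi$. This simultaneously proves surjectivity and yields a constructive reconstruction formula, which is the form most directly relevant to the aCRT discussion in the main text.
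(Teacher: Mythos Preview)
Your argument is correct and complete: the well-definedness and homomorphism checks are routine, the cardinality count reduces bijectivity to injectivity, and the induction on pairwise coprime divisors (via B\'ezout) establishes that $\ker\phi=\{0\}$. The alternative explicit-inverse construction you sketch is also valid and indeed more constructive.

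However, there is nothing to compare against: the paper states the Chinese Remainder Theorem as classical background in the appendix but does \emph{not} supply a proof. It is followed immediately by a worked example (the $n=91=7\cdot 13$ coset-intersection illustration), not an argument. So your proposal is not an alternative to the paper's proof; it simply fills in a standard result the authors assumed.
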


\begin{example}[Coset Intersection View]
    Let \(n = 91 = 7 \cdot 13\). Suppose we want to solve:
    \[
        x \equiv 3 \mod 7, \quad x \equiv 10 \mod 13.
    \]
    Each congruence defines a coset:
    \begin{align*}
        x \equiv 3 \mod 7 &\Rightarrow \{3, 10, 17, 24, 31, \dots\}, \\
        x \equiv 10 \mod 13 &\Rightarrow \{10, 23, 36, 49, \dots\}.
    \end{align*}
    The unique solution mod 91 is the number common to both cosets: \(\boxed{10}\).
\end{example}

\textbf{Cayley Graphs.}

\begin{definition}[Generating Set]
Let \(G\) be a group. A subset \(S \subseteq G\) is called a \textbf{generating set} of \(G\) if every element \(g \in G\) can be written as a finite product of elements from \(S\) and their inverses. That is, for all \(g \in G\), there exist \(s_1, \ldots, s_k \in S\) and signs \(\epsilon_i \in \{-1, 1\}\) such that:
\[
g = s_1^{\epsilon_1} s_2^{\epsilon_2} \cdots s_k^{\epsilon_k}.
\]
\end{definition}

\begin{example}[Generating Sets in \(\mathbb{Z}_6\)]
The group \(\mathbb{Z}_6 = \{0,1,2,3,4,5\}\) under addition mod 6 can be:
\begin{itemize}
    \item Generated by \(\{1\}\), since repeated addition gives all elements: \(1, 2, \ldots, 5, 0\).
    \item Generated by \(\{5\}\), since \(5 + 5 = 10 \equiv 4 \mod 6\), and so on.
    \item Not generated by \(\{2\}\), since \(2+2=4\), \(4+2=0\), and the subgroup \(\{0,2,4\}\) is too small.
\end{itemize}
\end{example}

\begin{definition}[Cayley Graph]
Let \(G\) be a group and let \(S \subseteq G\) be any subset (not necessarily a generating set). The \textbf{Cayley graph} \(\Gamma(G, S)\) is a directed graph with one vertex for each element of \(G\), and a directed edge from \(g\) to \(g \cdot s\) for every \(s \in S\). 

If \(S\) is symmetric (i.e., \(s \in S \Rightarrow s^{-1} \in S\)), then the graph is often treated as undirected. The graph is:
\begin{itemize}
    \item \textbf{Connected} if and only if \(S\) generates \(G\).
    \item \textbf{Disconnected} if \(S\) generates a proper subgroup of \(G\).
\end{itemize}
\end{definition}

\begin{example}[Cayley Graphs in \(\mathbb{Z}_6\)]
Let \(G = \mathbb{Z}_6 = \{0, 1, 2, 3, 4, 5\}\) under addition mod 6.

\begin{itemize}
    \item Using \(S = \{1\}\), the graph connects:
    \[
    0 \to 1 \to 2 \to 3 \to 4 \to 5 \to 0
    \]
    forming a connected 6-cycle. Here, \(S = \{1\}\) is a generating set.

    \item Using \(S = \{2\}\), we only get:
    \[
    0 \to 2 \to 4 \to 0,\quad 1 \to 3 \to 5 \to 1
    \]
    which are two disconnected 3-cycles. This reflects the subgroup \(\{0, 2, 4\}\) and its coset \(\{1, 3, 5\}\). The set \(S = \{2\}\) does \textit{not} generate \(\mathbb{Z}_6\), and the graph is \textit{disconnected}.
\end{itemize}
\end{example}

\textbf{Group representations and the Discrete Fourier Transform (DFT)}

\begin{definition}[Group representation]
    A \textbf{representation} of a group \(G\) on a vector space $V$ is a homomorphism \(\rho: G \to \mathrm{GL}(V)\), where $\mathrm{GL}(V)$ is the group of invertible linear maps on $V$.
\end{definition}

\begin{example}[Discrete Fourier Transform]
    The \textbf{discrete Fourier transform (DFT)} comes from the complex representations of the cyclic group \(C_n\). Each element \(j \in C_n\) is mapped to the complex number \(\exp\left(2\pi i \frac{kj}{n}\right)\) for integer \(k\in \{0, 1, ..., n-1\}\), encoding modular structure as rotations in the complex plane. These complex representations correspond to the DFT. They also induce real-valued representations as 2D rotation matrices acting on \((\cos, \sin)\) components.
\end{example}

\subsubsection{Examples: cosets, Cayley graphs, step size \texorpdfstring{$d$}{d}}
\label{app:examples}

In the background (Section \ref{sec:background}) we gave examples of cosets on $C_6$. In this section we will show these examples visually to help the reader better understand approximate cosets, generating Cayley graphs, and labeling the vertices on them with the step size (Definition \ref{def:step-size}). Recall, the step size: $d:=(\frac{f}{\gcd(f,n)})^{-1} (\bmod{\frac{n}{\gcd(f,n)}})$, where the modular inverse is used. There are \( n' = \frac{n}{\gcd(f,n)} \) distinct positions reachable in this way, thus we could write: $d:=(\frac{f}{\gcd(f,n)})^{-1} (\bmod{n'})$. Note: $d$ is the step size in the graph $C_{n'}$.

We consider the cyclic group on 6 elements, $C_6$. In this example, we will show how cosines of different frequencies encode coset information and how we can construct the corresponding Cayley graphs from this. Since $n=6$, then  $\lfloor\frac{6}{2}\rfloor=3$, and we have 3 possible frequencies $f=1, 2, 3$ for $\cos\left(\frac{2\pi \cdot f \cdot a}{6}\right)$ across $a\in C_6$. These three cosines will each step around $C_6$ in different ways, depending on $f$. 

If $f=1$, our cosine is $\cos\left(\frac{2\pi\cdot 1 \cdot a}{6}\right)$. We compute $n'=6$, thus there are 6 elements we can reach in $C_6$. This gives 6 cosets of size 1:
\[
    \{0\}, \{1\}, \{2\}, \{3\}, \{4\}, \{5\}.
\]

If $f=2$, our cosine is $\cos\left(\frac{2\pi\cdot 2 \cdot a}{6}\right)$. We compute $n'=3$, thus we're in $C_3$, which has 3 elements. This gives 3 cosets, each of size 2, corresponding to three different 2-cycles in the original $C_6$ graph, but where each coset is now a point in our new graph for $C_{n'}=C_3$:
\[
\{0,3\}, \{1,4\}, \{2,5\}.
\]

If $f=3$, our cosine is $\cos\left(\frac{2\pi\cdot 3 \cdot a}{6}\right)$. We compute $n'=2$, thus we're in $C_2$, which has 2 elements. This gives 2 cosets, each of size 3, corresponding to two different 3-cycles in the original $C_6$ graph, but where each coset is now a point in our new graph for $C_{n'}=C_2$:
\[
    \{0,2,4\}, \{1,3,5\}.
\]
See these cosines in Figure \ref{app:appendix-cosines}. 

\begin{figure}[h]
    \centering
    \includegraphics[width=0.85\linewidth]{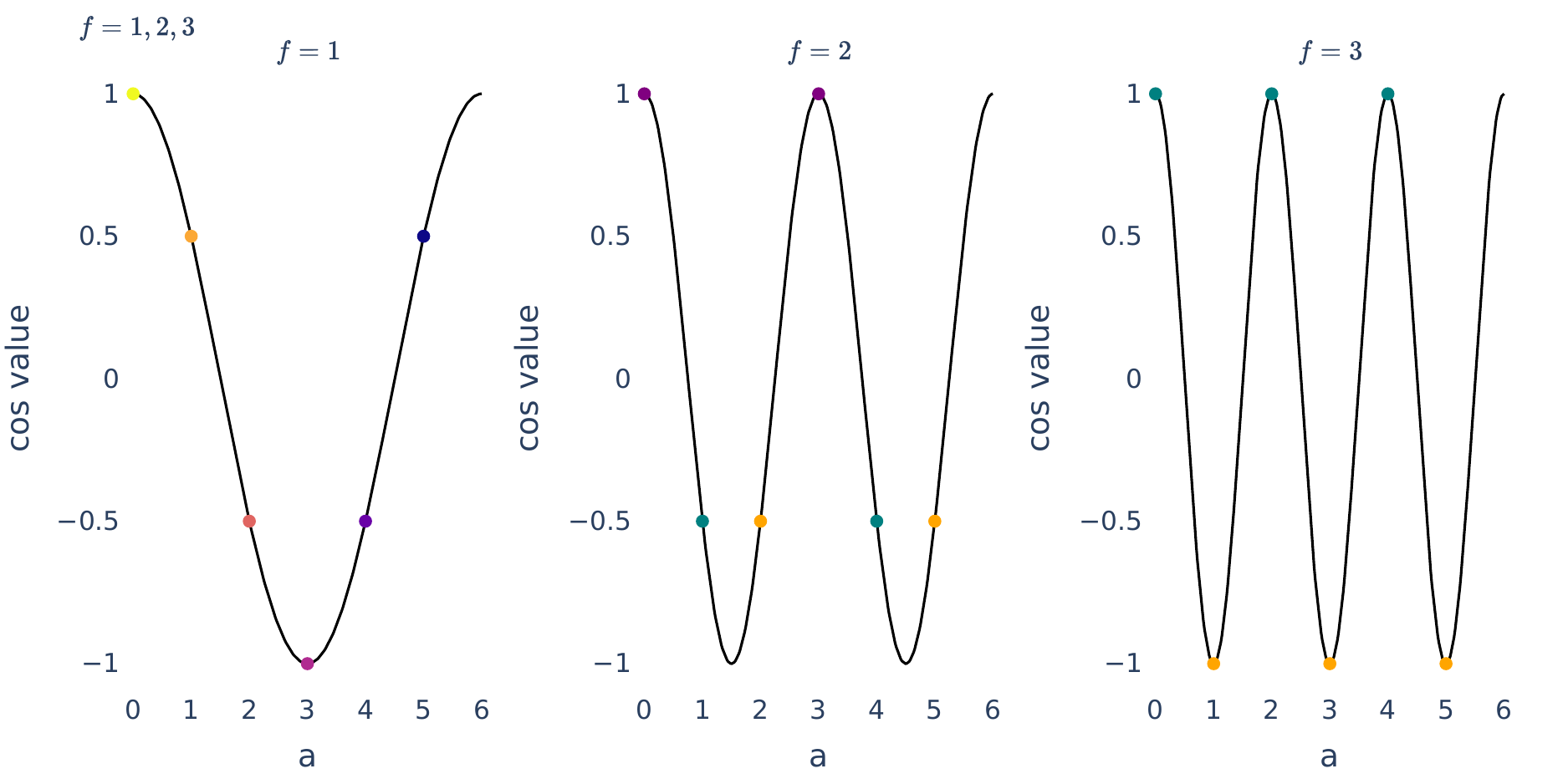}
    \caption{Cosine functions centered at 0, with $f=1,\ f=2,\ f=3$. The points are colored based on their coset membership, \textit{i.e.} equivalence class. }
    \label{app:appendix-cosines}
\end{figure}

We can calculate the step size, Definition \ref{def:step-size}, and in all three cases we get $d=1$. We will visualize these sets in a figure coming shortly. 

Row 1 shows the original Cayley graph and row 2 shows the new graph after collapsing cosets. Row 1 will be the cyclic group on 6 elements, $C_6$: in other words, we will not collapse the cosets yet into their equivalence classes. This means we aren't making the graphs $C_{n'}$ and resultantly, we plot $C_6$ with distances from the vertices to the first vertex $a=0$. We also show the cosets, the 3-cycles and 2-cycles. In row 2, we present an equivalent picture that makes things clearer: we plot $C_{n'}$. Furthermore, the step size $d$ is actually the step size in this cyclic group, $C_{n'}$, making the distances easier to see compared to looking at $C_6$. By collapsing vertices into their cosets (equivalence classes), it becomes clear that everything is distance 1 from the yellow coset. We will do this in row 2, giving $C_6$ corresponding to $f=1$, $C_3$ corresponding to $f=2$ and $C_2$ corresponding to $f=3$. See this in Figure \ref{appfig:approx-coset-example}.

\begin{figure}
    \centering
    \includegraphics[width=1.0\linewidth]{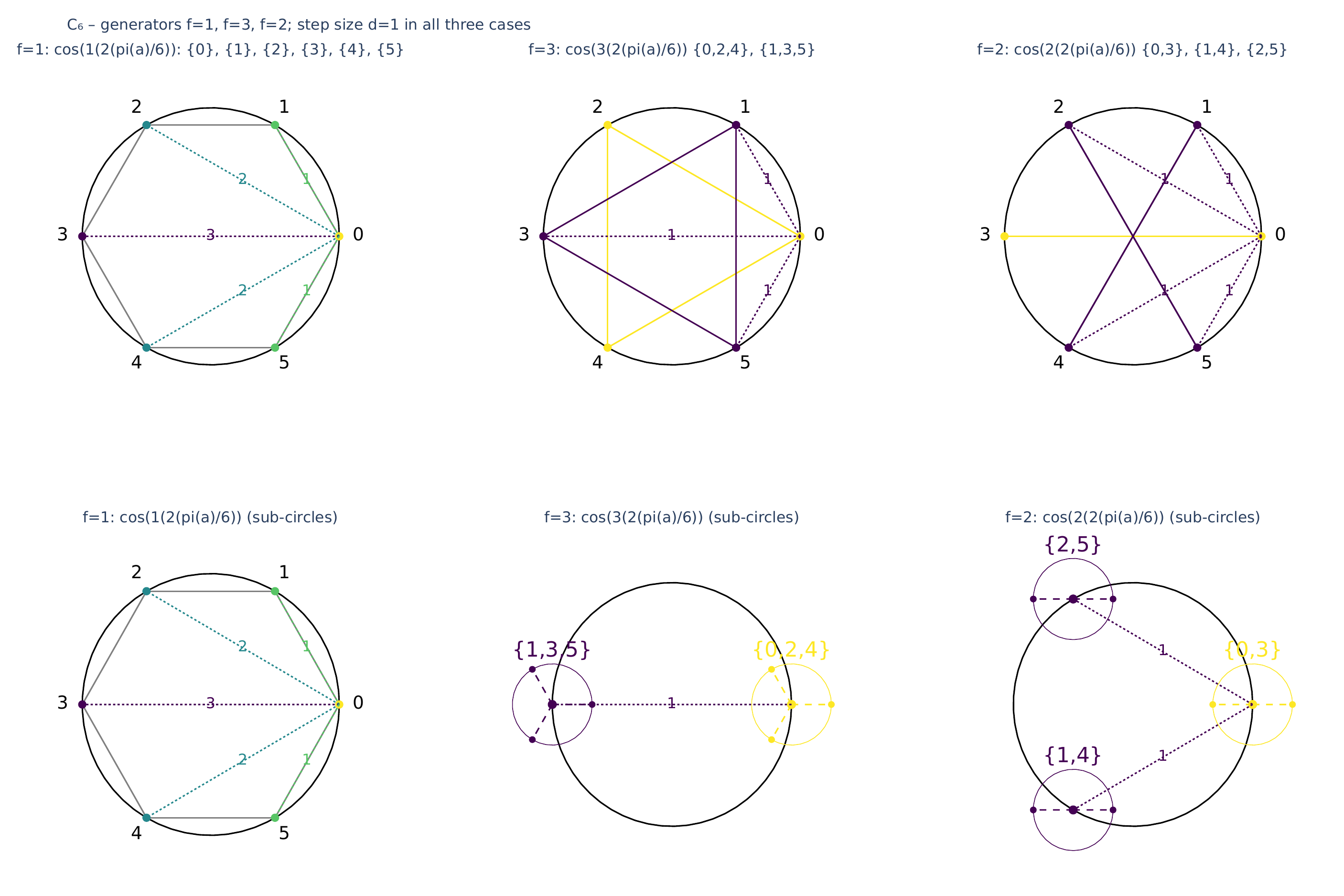}
    \caption{In row 1 we show the Cayley graphs and the distances, which is a bit confusing in panel 3 because we don't collapse the cosets into their equivalence classes. To make it easier to see the distances and where they come from, in row 2 we collapse equivalent points (all points in a coset) into a subcircle. Doing this gives us less points on the graph, giving us $C_{n'}$, which is the graph $d$ is the step size on. It is now easier to see why the distances to approximate cosets are what they are. In row 2 panel 3, we chose to put the two darkly colored cosets on the other side of the circle to emphasize that ReLU is 0 on them.}
    \label{appfig:approx-coset-example}
\end{figure}

In the case of approximate cosets, nothing changes. Calculate $n'$, calculate $d$, and step around the cyclic group $C_{n'}$ using $d$, labeling the distances of elements. To build an approximate coset, for some multiples $k_1,k_2$, take $k_1$ steps backward and $k_2$ steps forward. The path is the approximate coset---all elements that are ``close'' on the Cayley graph.

\FloatBarrier

\section{Proof of Theorem \ref{thm:good-defn}}
\label{proof:good-defn}
For empirical evidence supporting this theorem, see fig \ref{fig:approximate_cosets} in the main paper. Every point $> 0$ is in the approximate coset colored by viridis colors, with strength of viridis decaying as the point gets farther from the center element of the approximate coset (an element getting closer to where ReLU won't activate, means it less bright). 

In reality, all sinusoidal functions, \textit{i.e.} our simple neuron assumption, will satisfy this theorem. The simplicity of this proof therefore results from the fact that we came up with a very powerful definition for approximate cosets that actually reflects what neurons in the network are learning. This proof requires the assumption that neurons learn periodic functions, and this is why the majority of the paper was dedicated to empirically proving that simple neurons are indeed learned by networks.
\begin{proof}
\textbf{Simple neurons learn approximate cosets.} A neuron satisfying the simple neuron model computes a trigonometric function that has its maxima on the elements of a coset or ``approximate coset''. If $g:=\gcd(f,n)>1$, the neuron has learned the \textbf{coset} of order $g$ containing $s_A+s_B$. More precisely: writing $n=n'g$ and $f=f'g$ for $g=\gcd(f,n)$, we can rewrite $\frac{2\pi f}{n}=\frac{2\pi f'}{n'}$. So if the input neurons are at positions $a$ and $b$ where $a\equiv s_A\pmod{n'}$ and $b\equiv s_B\pmod{n'}$, then the activation of the neuron has a maximum: $\cos\frac{2\pi f(a-s_A)}{n}=\cos\frac{2\pi f(b-s_B)}{n}=1$. The neuron points most strongly to every logit satisfying $c\equiv s_A+s_B\pmod{n'}$, because for all such output logits $\cos\frac{2\pi f(c-s_A-s_B)}{n}=1.$ We see that the neuron strongly associates elements of $C_n$ that are congruent modulo $n'$.

Whether $f$ is a divisor of the order $n$ or not, the neuron will activate on what we defined as an approximate coset. More precisely, we can ask the following: for $a\not\equiv s_A\pmod{n'}$, which values of $a$ have the largest activation? We have $\cos\frac{2\pi f'(a-s_A)}{n'}$ very close to $1$ if and only if $f'(a-s_A)$ is very close to an integer multiple of $n'$; that is, say, $f'(a-s_A)\equiv m\pmod{n'}$ for some integer $m$ with small absolute value. Letting $d$ denote a modular inverse of $f'$ mod $n$, this is equivalent to $a-s_A\equiv dm\pmod{n'}$. In other words, by taking $a=s_A+dm$ for small integers $m$, the neuron will be activated very strongly. Likewise if $b=s_B+dm'$ for some other small integer $m'$. 
Now this neuron will point most strongly to $c\equiv s_A+s_B\pmod{n'}$ as discussed above, but if $c$ is a small number of steps of size $d$ away from $s_A+s_B$, it will still have large activation. To summarize: if you can reach each of $a,b,c$ via a small number of steps of size $d$ from $s_A,s_B,s_A+s_B$, respectively, then $N$ fires strongly on inputs $a,b$, and points strongly at $c$.

After ReLU, it follows that since all neurons output (activate) only on approximate cosets, all neurons in the following layers activate on linear combinations of approximate cosets (follows from networks being fully connected, and ReLU only having the potential to make the cosets smaller, i.e. elements below 0 are cut off). 
\end{proof}

\section{Proofs and details for Theorem \ref{main:theorem} and Corollary \ref{main:corollary}} 
\label{app:heuristic}

\begin{figure}[H]
    \centering
    \includegraphics[width=0.80\textwidth]{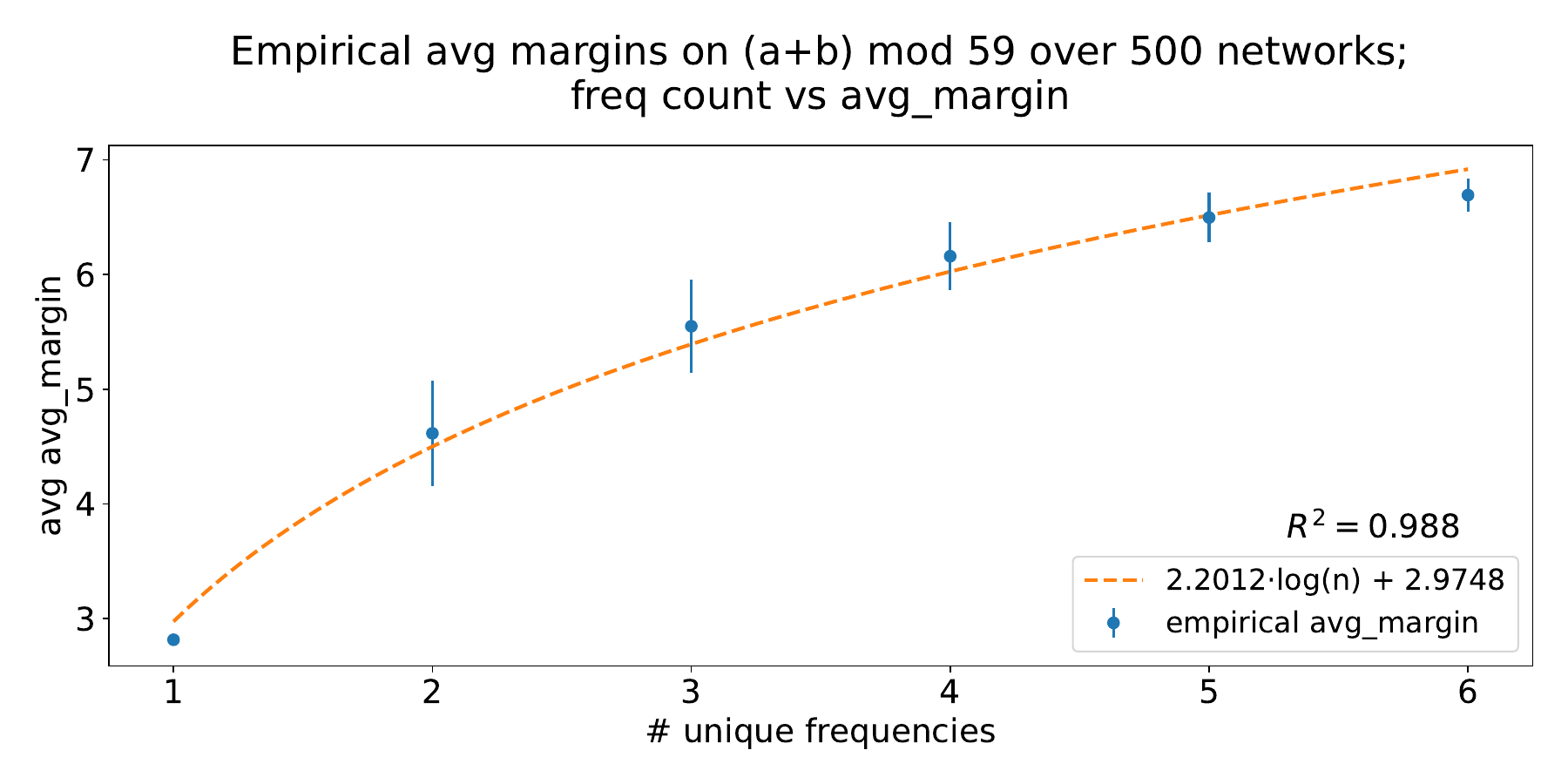}
    \caption{Empirically, 1 hidden layer, 1 trainable embedding matrix, networks have margins grow like $\mathcal{O}(\log(n))$ as more frequencies are learned. In Corollary \ref{main:corollary} we prove it. Note: 1 std dev error bars are on (x=1, y=2.82); they are just small, ranging from 2.8-2.84. This plot is made by computing the average margin of each network across the full dataset, given the network had learned $x=$\# unique frequencies. The plot shows, for networks that learned the same number of frequencies, the average average margin and the 1 std dev error bars of this data.}
    \label{appfig:margins}
\end{figure}

Assume that training results in neurons learning the simple neuron model. 

\begin{proof}
Each simple neuron maximally activates a single output, namely $s_A+s_B$, (or possibly maximally activates on a coset of outputs containing $s_A+s_B$). However, if we combine the contributions from all simple neurons in a single cluster (i.e. all with the same frequency $f$), we observe that the activation level can be maximized at any desired output; more precisely, the activation level at output $k$ given inputs $i,j$ will be of the form $A \cos\left(2\pi\left(f(k-i-j)/n\right)\right)$.
Note this has been observed experimentally \textit{e.g.} see Fig \ref{appfig:simple-neuron-phase-histograms} and Fig. \ref{appfig:equivariant-cosets}, and has also been previously noticed in the literature: see e.g. the last equation of Section 3 in \cite{chughtai2023neural}. In fact, the analysis below still works even if $i+j$ is only somewhat close to the maximal activation of the cluster (see Figure \ref{appfig:equivariant-cosets}), though we assume the maximum is at $i+j$ for simplicity. However, even in this case there will be many output neurons that all activate nearly as strongly on the correct answer. To isolate a single answer, we use a superposition of sine waves of multiple frequencies; we observe experimentally in Figure \ref{appfig:equivariant-cosets} that this process makes the correct answer stand out from the rest.

In light of the above and Section \ref{sec:simple-neuron-model}, if we fix input neurons $i,j\in C_n$ then combining the contributions from all clusters (and assuming for the heuristic that the contributions from all clusters have the same amplitude), the sum
$h(k):=\sum_{\ell=1}^m \cos\left(\frac{2\pi f_\ell}{n}(k-i-j)\right)$
gives a model for the activation energy at output logit $k$.
If $k=i+j$, then $h(k)$ takes on the maximum value $m$. If we want to guarantee the neural net will consistently select $k$, we need to show that $h(k)$ is significantly less than $m$ for all other values of $k$. We'll assume a random model where $m$ frequencies $f_1,\ldots,f_m$ are chosen uniformly at random from $1,2,\ldots,n-1$. 
Fix a parameter $0<\delta<1$; we will compute an approximation for the probability that $m-h(k)>\delta m$ for all $k\neq i+j\pmod m$. 

Let $\{x\}:=x-\lfloor x+\frac12\rfloor$ be the signed distance to the nearest integer and set $d:=k-i-j$. Then using a Taylor expansion,
\begin{align*}
    m-h(k)&=m-\sum_{\ell=1}^m \cos\left(2\pi\left\{\frac{f_\ell d}{n}\right\}\right)
    \approx m-\sum_{\ell=1}^m \left(1-\frac12\left(2\pi\left\{\frac{f_\ell d}{n}\right\}\right)^2\right)
    =2\pi^2 \sum_{\ell=1}^m \left\{\frac{f_\ell d}{n}\right\}^2.
\end{align*}
Note that the Taylor approximation is quite bad when $f_\ell d/n$ is far from an integer, and if $k$ is close to an integer then $\{k\}$ is close to 0. It is reasonable to expect that $m-h(k)$ will be minimized when the values $f_\ell d/n$ are all close to integers, in which case the approximation is more accurate.

Thus the condition $m-h(k)>\delta m$ is related to the following condition: defining the vector $v:=\frac1n(f_1,\ldots,f_m)\in [0,1]
^m$,
we need that for all $1\leq d\leq n-1$, the point $dv$ has distance at least 
$\sqrt{\delta m / 2\pi^2}$ 
away from any point in $\mathbb{Z}^m$. Note that $nv$ is an integer point, so $(n-d)v$ is always the same distance from an integer point as $dv$ is. Thus it suffices to require $v$ to be at least $\frac1d\sqrt{\delta m / 2\pi^2}$ away from a point in $\frac1d\mathbb{Z}^m$ for $d=1,\ldots,\lfloor n/2\rfloor$.

We compute an upper bound on the volume of the region to be avoided: that is, the set of all points in $[0,1]^m$ within $\frac1d\sqrt{\delta m / 2\pi^2}$ of a point of $\frac1d\mathbb{Z}^m$ for some $d=1,\ldots,n/2$. For each $d$, there are $d^m$ points in this region, and each has a ball of radius $\frac1d\sqrt{\delta m / 2\pi^2}$ around it; the total volume of the region to be avoided is therefore bounded above by
$\frac{n}{2\Gamma(m/2+1)}\left(\frac{\delta m}{2\pi}\right)^{m/2}.$
Thus the probability that $m-h(k)>\delta m$ is approximately equal to $1$ minus this value.

For a given $n$, let's compute the value of $m$ that makes this probability greater than, say, $\rho$.
\begin{align*}
    1-\frac{n}{2\Gamma(m/2+1)}\left(\frac{\delta m}{2\pi}\right)^{m/2}&>\rho \iff
    \Gamma(m/2+1)\left(\frac{2\pi}{\delta m}\right)^{m/2}>\frac{n}{2-2\rho}.
    \intertext{Taking a natural logarithm, applying Stirling's approximation $\log_e\Gamma(x+1)\approx x\log_e(x)-x$, and solving for $m$,}
    \log_e\Gamma(m/2+1)+\frac m2\log_e\left(\frac{2\pi}{\delta m}\right)&>\log_e n-\log_e(2-2\rho)\\
    m&>\frac{2\log_e n-2\log_e(2-2\rho)}{\log_e (\pi/\delta)-1}.
\end{align*} 
Thus if the number of neuron clusters $m$ is greater than this expression, then with probability at least $\rho$, the separation $m-h(k)$ will be at least $\delta m$. We see that the number grows linearly in $\log_e n$.

Choosing the parameters $\rho$ and $\delta$ can significantly change the precise value of $m$ needed, and it's not clear which values most accurately model the true behavior of the neural net.

As an example, note that if we take $\delta=\pi/e^3\approx 0.1564$, and $\rho=\frac12$, then this whole expression simplifies to just $m>\log_e n$. Thus, if the neural net uses $m=\log_e n$ neuron clusters, then this heuristic predicts that it will guarantee a separation $m-h(k)>0.15m$ for all $k\neq i+j$ with $50\%$ certainty. For $n=89,91$ we have $\log_e n\approx 4.5$, which agrees with the number of clusters found in Figures \ref{appfig:num_clusters_scalling_width} and \ref{fig:log-scaling-plot}. This process can be interpreted as the approximate CRT; see Remark \ref{remark:signalCRT} for the analogy.
\end{proof}

\subsection{Proof of Corollary \ref{main:corollary}}
Recall $h$ is the model network evaluated at input $i,j$, $h(k)$ is the value of the output distribution at $k$ with maximum value $h(i+j)=m$ and $m$ is the number of distinct frequencies simple neurons of the network.
\begin{corollary}[Logarithmic number of frequencies suffices for a non-trivial margin]
Let \(0<\delta<1\) and \(0<\rho<1\) and define  
\[
C(\delta,\rho)\;=\;
\frac{2}{\log\!\bigl(\pi/\delta\bigr)-1}\;
\Bigl(1-\tfrac{1}{2}\log_{e}(2-2\rho)\Bigr)
\quad (>0).
\]
If the number $m$ of simple neurons of the network satisfies
\[
m \;\;\ge\;\; C(\delta,\rho)\,\log n
\]
then with probability at least \(\rho\)
\[
h(i+j)-h(k)\;>\;\delta\,m,
\]
i.e.\ the logit margin is \(\Omega(\log n)\).
\end{corollary}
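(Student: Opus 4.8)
\section*{Proof proposal for Corollary~\ref{main:corollary}}

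The plan is to obtain this corollary as a direct consequence of Theorem~\ref{main:theorem}: I only need to check that the hypothesis $m \ge C(\delta,\rho)\log n$ forces the sufficient condition $m > \frac{2\log_e n - 2\log_e(2-2\rho)}{\log_e(\pi/\delta)-1}$ under which Theorem~\ref{main:theorem} already guarantees, with probability at least $\rho$, that $h(i+j)-h(k)>\delta m$ for all $k\ne i+j\bmod n$. Once that implication is in hand the stated conclusion is immediate, and the ``$\Omega(\log n)$'' margin bound follows since $\delta m \ge \delta\, C(\delta,\rho)\log n$; the softmax version of the corollary stated in the main body then drops out of a one-line comparison of exponentials.

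For the implication, I would factor $\tfrac{2}{\log_e(\pi/\delta)-1}$ (which is positive because $\delta<1<\pi/e$) out of both thresholds: Theorem~\ref{main:theorem}'s threshold becomes $\tfrac{2}{\log_e(\pi/\delta)-1}\bigl(\log_e n-\log_e(2-2\rho)\bigr)$, while $C(\delta,\rho)\log n=\tfrac{2}{\log_e(\pi/\delta)-1}\bigl(\log_e n-\tfrac12\log_e(2-2\rho)\log_e n\bigr)$. So it suffices to verify $\tfrac12\log_e(2-2\rho)\log_e n\le \log_e(2-2\rho)$. In the operative regime $\rho\ge\tfrac12$ one has $\log_e(2-2\rho)\le 0$, so this is equivalent to $\log_e n\ge 2$, i.e.\ it holds for every $n\ge e^2$ — exactly the $n$-range meant by the $\mathcal{O}/\Omega$ notation. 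Plugging back into Theorem~\ref{main:theorem} yields the probability-$\rho$ margin bound verbatim.

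For the softmax statement I would then note that on the good event each incorrect class $k$ satisfies $h(k)\le h(i+j)-\delta m$, hence its softmax weight obeys $e^{h(k)}/\sum_{k'}e^{h(k')}\le e^{h(k)-h(i+j)}\le e^{-\delta m}\le n^{-\delta C(\delta,\rho)}$; summing over the at most $n-1$ incorrect classes bounds the total incorrect mass by $n^{\,1-\delta C(\delta,\rho)}=n^{-\Omega(1)}$, provided $\delta,\rho$ are chosen so that $\delta\,C(\delta,\rho)>1$ (feasible since $C(\delta,\rho)\to\infty$ as $\delta\uparrow\pi/e$, so $\delta\,C(\delta,\rho)$ is unbounded). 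I do not expect a genuine obstacle here — this is a corollary, not a new argument. The only delicate bookkeeping is the sign of $\log_e(2-2\rho)$, which flips at $\rho=\tfrac12$ and is what confines the clean threshold comparison to $n\ge e^2$ and to the natural high-confidence regime $\rho\ge\tfrac12$; and, for the softmax half, ensuring the product $\delta\,C(\delta,\rho)$ is pushed above $1$ so that $n^{\,1-\delta C(\delta,\rho)}$ genuinely decays.
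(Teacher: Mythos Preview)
Your proposal is correct and follows exactly the paper's approach: invoke Theorem~\ref{main:theorem}, check that $m\ge C(\delta,\rho)\log n$ meets its threshold, and read off $\delta m = \Omega(\log n)$. You are in fact more careful than the paper's own proof, which simply says ``choose the constant $m$ so $m\ge C(\delta,\rho)\log n$ and the theorem's inequality both hold'' without verifying the algebraic implication; your bookkeeping on the sign of $\log_e(2-2\rho)$ and the resulting restriction to $\rho\ge\tfrac12$, $n\ge e^2$ is a refinement the paper omits, and your softmax derivation is likewise absent from the paper's proof of this corollary.
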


\begin{proof}
Theorem~\ref{main:theorem} states that the inequality
\[
\;h(i+j)-h(k)\;>\;\delta m
\tag{†}
\]
holds for all $k\neq i+j$ with probability at least \(\rho\) provided
\[
m\;>\;\frac{2\log_e n-2\log_e(2-2\rho)}{\log_e(\pi/\delta)-1}.
\tag{‡}
\]

Choose the constant $m$ so \(m\ge C(\delta,\rho)\log n\) and inequality (\textdaggerdbl ) both hold.

Finally, the guaranteed margin
\(\delta m \geq \delta C(\delta,\rho)\log n\) is \(\Omega(\log n)\),
which is strictly larger than the \(O(1)\) margin attained with only a
constant number of frequencies (“minimal margin’’).
\end{proof}

\section{Embeddings contain projections of representations, not representations}

\label{sec:projections-of-reps}
\citet{chughtai2023toy} discover representation values in the embedding matrix. The first step in their GCR algorithm is not true in general. They state, ``Translates one-hot $a$, $b$ to representation matrices''. We provide evidence against this by training with a mini-batch size equal to the modulus $n$ and training with a full batch size. See the difference in the distribution of the resulting embedding matrices in Fig. \ref{fig:embedding-weights-histogram}. Furthermore, neurons in a cluster of frequency $f$ have different phase shifts, and $2\times2$ rotation matrices in the embeddings doesn't suffice to explain this behaviour.

Instead, the values found in the embedding matrix may encode scaled projections of a $2 \times 2$ rotation matrix onto a one dimensional subspace. Note that such structure is implied by the hypothesis that neural networks trained on group tasks learn representations, but is more general because of the existence of both amplitude and phase shifts. To get an exact equivalence, we note that this neuron structure can be obtained by an \emph{arbitrary scaled projection} of representations. Suppose
\[\rho(k)=\begin{pmatrix}
    \cos(2\pi f k/n)&-\sin(2\pi f k/n)\\
    \sin(2\pi f k/n) & \phantom{-}\cos(2\pi f k/n)
\end{pmatrix}\]

is a $2\times 2$ matrix representation of $C_n$. If we apply $\rho(k)$ to the vector $(1,0)$ and then take the dot product with $(\alpha\cos(2\pi f s_a/n),-\alpha\sin(2\pi f s_a/n))$ (which is the same as projecting onto the subspace spanned by this vector and scaling by $\alpha$) we obtain exactly
\[\alpha\cos\tfrac{2\pi f k}{n}\cos\tfrac{2\pi f s_A}{n}+\alpha\sin\tfrac{2\pi f k}{n}\sin\tfrac{2\pi f s_A}{n}=\alpha\cos\tfrac{2\pi f (k-s_A)}{n}= \alpha w(A_k,N).\]
Thus we have explained the phase shifts of different neurons in a cluster, and shown that it's not just the components of $\rho(k)$ that appear in the embeddings, but rather scaled projections of the representations onto arbitrary $1$-d subspaces.
In our model of simple neurons we ignore the amplitude to make the analysis simpler, but in general it does need to be included. See Fig \ref{fig:simple_vs_fine} for example where the amplitudes are greater than $2$.

\textbf{Inspecting the distribution of embedding matrix weights.} Contrary to findings by \cite{nanda2023progress, chughtai2023toy}, we did not observe the 2$\times$2 representation matrix values (used to encode rotations) in our embedding matrices outside their reported training conditions. As shown in Fig. \ref{fig:embedding-weights-histogram}, the distribution of embedding weights varies significantly between small and full batch size and the tails of the distributions are quite different. In the case of small batch size, numbers can be found in the range (-2, 2), whereas large batch size contains numbers between (-1.5, 1.5). Note that we choose to remove weights that are between (-0.025, 0.025) to make it easier to see the tails of the distribution; this was done due to 2.4million weights occurring within this range when training with the small batch size. Specifically, in the small batch size regime, around 5\% of the weights fell outside the interval $[-1,1]$, including some weights larger than $2$. These values are not consistent with rotation matrix entries. Other than this, we could not identify any significant differences in the core structures of what the neural net learns between the batch sizes.

Combining these experimental findings (Fig. \ref{fig:embedding-weights-histogram}) with this model (see \ref{sec:projections-of-reps}) explains that the embedding matrices may contain scaled projections of representations. This explains the different shifts in the periodic functions that can be seen in Figs \ref{appfig:simple-neuron-phase-histograms}, \ref{appfig:simple-neuron-phase-scatterplot} and \ref{appfig:cluster-of-simple-neurons}, which GCR \citep{chughtai2023toy} fails to explain.


\section{Main body experimental details}\label{app:expt-details}
The train and test splits are 90\% and 10\% respectively, unless stated otherwise. The optimizer used to update the neural networks weights is always Adam \citep{kingma2017adammethodstochasticoptimization}.

The only plot with error bars in the main text is Figure \ref{fig:log-scaling-plot}, which uses 1 standard deviation (std dev) error bars. If a plot has error bars, they are 1 std dev.

\subsection{Figure \ref{fig:simple-neurons}}
\label{app:expt_fig_simple_neurons}

The point of this figure is to show the reader that qualitatively, neurons are learning sinusoidal functions that are identical after normalization, even when secondary frequencies exist in the Discrete Fourier Transforms. Furthermore, it serves to immediately show the reader that the ``remapping: normalizing to frequency 1'' definition makes a sinusoidal function have frequency 1. The figure is very easy to generate, just grab arbitrary neurons, plot them, and plot their remapped version. The neuron from the MLP model comes from an MLP with frequency $14$. The MLP is one of the MLPs trained on mod 59 for figure \ref{fig:no_embed_freqs_avg}. The pizza transformer is model A (\texttt{model\_p99zdpze5l.pt} checkpoint) and the clock transformer is model B (\texttt{model\_l8k1hzciux.pt} checkpoint) from \citet{zhong2023the}'s \href{https://github.com/fjzzq2002/pizza/tree/main/code/save}{Github repository.} 

\subsection{Figure \ref{fig:approximate_cosets}}
\label{app:expt_fig_approximate_cosets}
This plot shows the simple neuron model, approximate cosets, and Cayley graphs that neurons understand distances on. The point of this figure is to familiarize the reader with our definitions as they are essential to understanding how we derive the abstract aCRT. The code to generate this plot is included in the supplementary materials "make\_figure\_1\_toy\_approx\_cosets.py". 

\subsection{Scaling experiment in Figure \ref{fig:log-scaling-plot}}
\label{app:expt_scaling_log_scaling_plot}

We report 1 std dev error bars here, as we do on all our plots, though since our arguments are probabilistic in nature, our growth rate is supposed to be in expectation. We just added the 1 std dev to show the std dev if a normal distribution was fit with the same average. Indeed, the standard deviations are low. For the transformer models (clock and pizza), we take the exact model classes from \cite{zhong2023the}'s \href{https://github.com/fjzzq2002/pizza/tree/main/code/save}{Github repository.}, and translate them into Jax. For clocks, the attention coefficient is set to 1.0 and for pizzas, the attention coefficient is set to 0.0. The d\textunderscore model is always taken to be the smallest power of 2 that is larger than $n$, because the architecture requires it. The number of heads is always 4, and the d\textunderscore head is such that 4 times that number is equal to the d\textunderscore model. For the hyperparameters used when training the transformers, see Tables \ref{tab:transformer-scaling-params}, \ref{tab:transformer-scaling-params-pizza} \ref{tab:transformer-scaling-params-clock}.

\begin{table}[h!]
\centering
\begin{tabular}{|c|c|c|c|c|}
\hline
\textbf{Number} & \textbf{Learning Rate} & \textbf{Batch Size} & \textbf{Weight Decay} & \textbf{Training Set Size} \\ \hline
7    & 0.001   & 7    & 0.0001      & 49      \\ \hline
17   & 0.001   & 34   & 0.0001      & 289      \\ \hline
27   & 0.001   & 100  & 0.0001      & 729      \\ \hline
59   & 0.001   & 200  & 0.0001      & 1770      \\ \hline
97   & 0.001   & 200  & 0.0001      & 4850      \\ \hline
113  & 0.001   & 500  & 0.0001      & 6780      \\ \hline
303  & 0.0002  & 909  & 0.00002     & 45450      \\ \hline
499  & 0.0002  & 1497 & 0.00002     & 124750      \\ \hline
977  & 0.0001  & 4885 & 0.00001     & 488500      \\ \hline
1977 & 0.000035 & 39540 & 0.0000075 & 2965500 \\ \hline
4013 & 0.00004 & 16052 & 0.000006  & 3691960 \\ \hline
\end{tabular}
\caption{Experimental results with Adam optimizer across varying parameters for both pizza and clock.}
\label{tab:transformer-scaling-params}
\end{table}

\begin{table}[h!]
\centering
\begin{tabular}{|c|c|c|c|c|}
\hline
\textbf{Number} & \textbf{Learning Rate} & \textbf{Batch Size} & \textbf{Weight Decay} & \textbf{Training Set Size} \\ \hline
64    & 0.001   & 64    & 0.0001      & 49      \\ \hline
128   & 0.001   & 128   & 0.0001      & 289      \\ \hline
256   & 0.001   & 256  & 0.0001      & 729      \\ \hline
310   & 0.001   & 310  & 0.0001      & 1770      \\ \hline
720   & 0.0001   & 720  & 0.00001      & 309600      \\ \hline
\end{tabular}
\caption{Experimental results with Adam optimizer across varying parameters in pizzas.}
\label{tab:transformer-scaling-params-pizza}
\end{table}

\begin{table}[h!]
\centering
\begin{tabular}{|c|c|c|c|c|}
\hline
\textbf{Number} & \textbf{Learning Rate} & \textbf{Batch Size} & \textbf{Weight Decay} & \textbf{Training Set Size} \\ \hline
64    & 0.001   & 64    & 0.0001      & 49      \\ \hline
128   & 0.001   & 128   & 0.0001      & 289      \\ \hline
256   & 0.001   & 256  & 0.0001      & 729      \\ \hline
310   & 0.001   & 310  & 0.0001      & 1770      \\ \hline
720   & 0.0001   & 720  & 0.00001      & 309600      \\ \hline
\end{tabular}
\caption{Experimental results with Adam optimizer across varying parameters in clocks.}
\label{tab:transformer-scaling-params-clock}
\end{table}

For the hyperparameters used when training the MLP, see Table \ref{tab:experimental_results}. The number of neurons is 8 times the moduli $n$. 

\begin{table}[h!]
\centering
\begin{tabular}{|c|c|c|c|c|}
\hline
\textbf{Number} & \textbf{Learning Rate} & \textbf{Batch Size} & \textbf{Weight Decay} & \textbf{Training Set Size} \\ \hline
3   & 0.01    & 3   & 0.005       & 9     \\ 
\hline
5   & 0.01    & 5   & 0.005       & 25     \\ 
\hline
7   & 0.01    & 7   & 0.005       & 49     \\ 
\hline
13   & 0.009    & 13   & 0.004       & 169     \\ 
\hline
17   & 0.009    & 1   & 0.004       & 169     \\ 
\hline
59   & 0.008    & 59   & 0.001       & 1770     \\ \hline
64   & 0.005    & 64   & 0.0005       & 2048     \\ \hline
113  & 0.004    & 113  & 0.0003      & 6780     \\ \hline
128  & 0.002    & 128  & 0.0002      & 13568     \\ \hline
193  & 0.003    & 193  & 0.0001      & 18914    \\ \hline
256  & 0.001   & 256  & 0.0001     & 34560    \\ \hline
310  & 0.0009   & 310  & 0.00007     & 51150    \\ \hline
433  & 0.0006   & 433  & 0.00005     & 86600    \\ \hline
499  & 0.0005   & 499  & 0.00003     & 124750   \\ \hline
720  & 0.0004   & 720  & 0.000015     & 259200   \\ \hline
757  & 0.0003   & 757  & 0.0000085   & 280090   \\ \hline
997  & 0.0003   & 997  & 0.0000015   & 498500   \\ \hline
1409 & 0.00028  & 1409 & 0.0000009   & 986300   \\ \hline
1999 & 0.00024  & 1999 & 0.0000008   & 2398800  \\ \hline
2999 & 0.00018  & 2999 & 0.0000007   & 4798400  \\ \hline
4999 & 0.0001  & 4999 & 0.0000005   & 14997000  \\ \hline
\end{tabular}
\caption{Experimental results with Adam optimizer across varying parameters for the MLP in Figure \ref{fig:log-scaling-plot}.}
\label{tab:experimental_results}
\end{table}

\subsection{Figure \ref{fig:no_embed_freqs_avg}}
\label{app:expt_fig_no_embed_freqs_avg}

We trained one hot encoded 1,2,3,4 hidden layer MLPs and also trained a trainable embedding matrix 1 hidden layer MLP over moduli 59-66. The classes for these models are in the ``mlp\_models\_multilayer.py'' file. 

\begin{table}[ht]
  \centering
  \caption{Hyperparameter configurations for one-hot models with varying hidden layers and embedding.}
  \label{tab:appendix-hyperparams}
  \begin{tabular}{lccccc}
    \toprule
    architecture                   & hidden layers & \# neurons & L2 regularization    & learning rate & train size \\
    \midrule
    One-hot          & 1             & 1024                & $1\times10^{-5}$       & 0.00075              & 90\%              \\
    One-hot        & 2             & 1024                & $1\times10^{-5}$       & 0.00075              & 90\%              \\
    One-hot         & 3             & 1024                & $1\times10^{-5}$       & 0.00075              & 90\%              \\
    One-hot         & 4             & 1024                & $1\times10^{-5}$       & 0.00075              & 90\%              \\
    Embedding       & 1             & 1024                & $1\times10^{-5}$       & 0.00075              & 90\%              \\
    \bottomrule
  \end{tabular}
\end{table}

\subsection{Figure \ref{fig:clock-r2-scaling-width}}
\label{app:expt_fig_clock_r2_scaling_width}
We trained 500 models of each architecture with 1-hidden layer, and each combination of number of neurons in [512, 2048, 8196, 16392] on mod 59. The models are only saved if final accuracy is $\geq 99.9999$. The neurons with a maximum preactivation below 0.01 across all of the data were deleted and considered ``dead'' neurons. We find that as the model width is increased, a single sine wave better and better approximates the preactivations of the neurons. 

The classes for the architectures are in the ``mlp\_models\_multilayer.py'', ``transformer\_train\_get\_data\_r2\_heatmap\_attn=0\_top-k\_layer\_all.py'' and ``transformer\_train\_get\_data\_r2\_heatmap\_attn=1\_top-k\_layer\_all.py''. files.  See Table \ref{table:clock-r2-scaling-width} for precise experimental details.

\begin{table}[ht]
\label{table:clock-r2-scaling-width}
  \centering
  \caption{Hyperparameter configurations for Figure \ref{fig:clock-r2-scaling-width} 1‐embed, 1‐hidden‐layer models with varying hidden unit sizes and architectures.}
  \label{tab:appendix-embed1-all}
  \begin{tabular}{lccccc}
    \toprule
    architecture                         & hidden layers & \# neurons & L2 regularization    & learning rate & train size \\
    \midrule
    \textbf{MLP (baseline)}              &               &            &                      &               &                   \\
    \quad Embed=128, MLP                   & 1             & 512        & $1\times10^{-5}$     & 0.00075       & 90\%              \\
    \quad Embed=128, MLP                   & 1             & 2048       & $1\times10^{-5}$     & 0.00075       & 90\%              \\
    \quad Embed=128, MLP                   & 1             & 8192       & $1\times10^{-5}$     & 0.00075       & 90\%              \\
    \quad Embed=128, MLP                   & 1             & 16384       & $1\times10^{-5}$     & 0.00075       & 90\%              \\
    \addlinespace
    \textbf{“Pizza” }     &               &            &                      &               &                   \\
    \quad Embed=128, Pizza                 & 1             & 512        & $1\times10^{-4}$     & 0.00050       & 90\%              \\
    \quad Embed=128, Pizza                 & 1             & 2048       & $1\times10^{-4}$     & 0.00050       & 90\%              \\
    \quad Embed=128, Pizza                 & 1             & 8192       & $1\times10^{-4}$     & 0.00050       & 90\%              \\
    \quad Embed=128, Pizza                 & 1             & 16384      & $1\times10^{-4}$     & 0.00050       & 90\%              \\
    \addlinespace
    \textbf{“Clock” }     &               &            &                      &               &                   \\
    \quad Embed=128, Clock                 & 1             & 512        & $1\times10^{-4}$     & 0.00050       & 90\%              \\
    \quad Embed=128, Clock                 & 1             & 2048       & $1\times10^{-4}$     & 0.00050       & 90\%              \\
    \quad Embed=128, Clock                 & 1             & 8192       & $1\times10^{-4}$     & 0.00050       & 90\%              \\
    \quad Embed=128, Clock                 & 1             & 16384      & $1\times10^{-4}$     & 0.00050       & 90\%              \\
    \bottomrule
  \end{tabular}
\end{table}

\subsection{Figure \ref{fig:depth-mlps}}
\label{app:expt_fig_depth_mlps}

There are 10 models trained with 10 different random seeds (different random init and different random train / test splits) for every (learning rate, weight decay) combination. This is the most pessimistic case for our green checkmark vs purple dot scenario because if a single model doesn't have 100\% accuracy after ablating the neurons for our fits, then it would receive a purple dot (assuming accuracy of the trained model with no ablations was 100\%). The learning rates and weight decays are: \\ learning\_rates = [0.0025, 0.001, 0.00075, 0.0005, 0.00025, 0.0001, 0.000075, 0.00005, 0.000025, 0.00001] \\\
weight\_decays = [
    0.001, 0.00075, 0.0005, 0.00025, 0.0001, 0.000075, 0.00005, 0.000025, 0.00001, 0.0000075,
    0.000005, 0.0000025, 0.000001, 0.00000075, 0.0000005, 0.00000025, 0.0000001, 0.000000075, 0.00000005, 0.00000001
]\\
There are 1024 neurons in every layer of the models. The embedding matrix has 128 features. 

\subsection{Figure \ref{fig:depth-heatmap-pizza-clock}} 
\label{app:expt_fig_depth_heatmap}

There are 10 models trained with 10 different random seeds (different random init and different random train / test splits) for every (learning rate, weight decay) combination. This is the most pessimistic case for our green checkmark vs purple dot scenario because if a single model doesn't have 100\% accuracy after ablating the neurons for our fits, then it would receive a purple dot (assuming accuracy of the trained model with no ablations was 100\%). The learning rates and weight decays are: \\ learning\_rates = [0.0025, 0.001, 0.00075, 0.0005, 0.00025, 0.0001, 0.000075, 0.00005, 0.000025, 0.00001] \\
weight\_decays = [
    0.001, 0.00075, 0.0005, 0.00025, 0.0001, 0.000075, 0.00005, 0.000025, 0.00001, 0.0000075,
    0.000005, 0.0000025, 0.000001, 0.00000075, 0.0000005, 0.00000025, 0.0000001, 0.000000075, 0.00000005, 0.00000001
]. \\
The embedding matrix has 128 features. There are 1024 neurons in every layer.

\subsection{Figure \ref{fig:percent-sin-cos}}
\label{app:expt_fig_percent_sin_cos}
There are 10 models trained with 10 different random seeds (different random init and different random train / test splits) for every (learning rate, weight decay) combination. This is the most pessimistic case for our green checkmark vs purple dot scenario because if a single model doesn't have 100\% accuracy after ablating the neurons for our fits, then it would receive a purple dot (assuming accuracy of the trained model with no ablations was 100\%). The learning rates and weight decays are: \\ 
learning\_rates = [0.0025, 0.001, 0.00075, 0.0005, 0.00025, 0.0001, 0.000075, 0.00005, 0.000025, 0.00001] \\
weight\_decays = [
    0.001, 0.00075, 0.0005, 0.00025, 0.0001, 0.000075, 0.00005, 0.000025, 0.00001, 0.0000075,
    0.000005, 0.0000025, 0.000001, 0.00000075, 0.0000005, 0.00000025, 0.0000001, 0.000000075, 0.00000005, 0.00000001
]\\
2048 neurons in each layer, 2 layers.

\subsection{Figure \ref{fig:frequency-histo-depth}}
\label{app:expt_fig_frequency_histo_depth}

5000 models were trained to make these figures. 1 and 4 hidden layers on mod 66. \\
The learning rate = 0.00075. \\
The weight decay penalty was L2 regularization = 0.0001. \\

\section{Further experimental results}\label{app:more-expts}

\subsection{Goodness of fit of the single sinusoidal approximation} \label{app:goodness-of-fit}

These plots ended up below as Figure \ref{appfig:histogram-mod-59-freq-lengths-distributions} and Figure \ref{appfig:histogram-mod-66-freq-lengths-distributions}. This section will be removed in future versions.

\section{Additional experiments we didn't put in the main text }
\label{app:additional-expmnt}
The point of this section is to hopefully provide interested readers with empirical results that weren't necessary for the main text. These experiments are omitted due to either space constraints or low priority. 

Note: we call the neurons learning non-integer frequencies over 2, e.g. $f=\frac{1}{2}$ after remapping ``fine-tuning'' neurons.

\subsection{Principal component analyses (PCA) of embeddings} \label{app:pca}

Below, we answer all the open problems of \citet{zhong2023the} involving non-circular embeddings. The idea of doing the Discrete Fourier Transform (DFT) on the embedding PCA's, gives that they are caused by two different frequency sines: one coming from each principle component (PC).

We replicate the results of \cite{zhong2023the} and add an additional Fourier transform plot next to their PCA plots, which makes it obvious that the principal components come from clusters with the same frequency. It can be seen that all non-circular embeddings and Lissajous embeddings are caused by the two principal components coming from different cosets. This means that we answer all of the open problems of \cite{zhong2023the}, involving non-circular embeddings. 

To make this easy to understand, please see Fig. \ref{appfig:133seed}, showing this random seed has four clusters, with key frequencies 35, 25, 8, 42. We now know what frequencies to look for in the DFT plots and thus, can figure out which PCs come from which frequency clusters. Doing so reveals that all PCs where both PCs come from the same frequency cluster are circular. Conversely, all PCs where the PCs come from different clusters are non-circular Lissajous curves. 

\begin{figure}[H]
    \centering
    \includegraphics[width=0.95\textwidth]{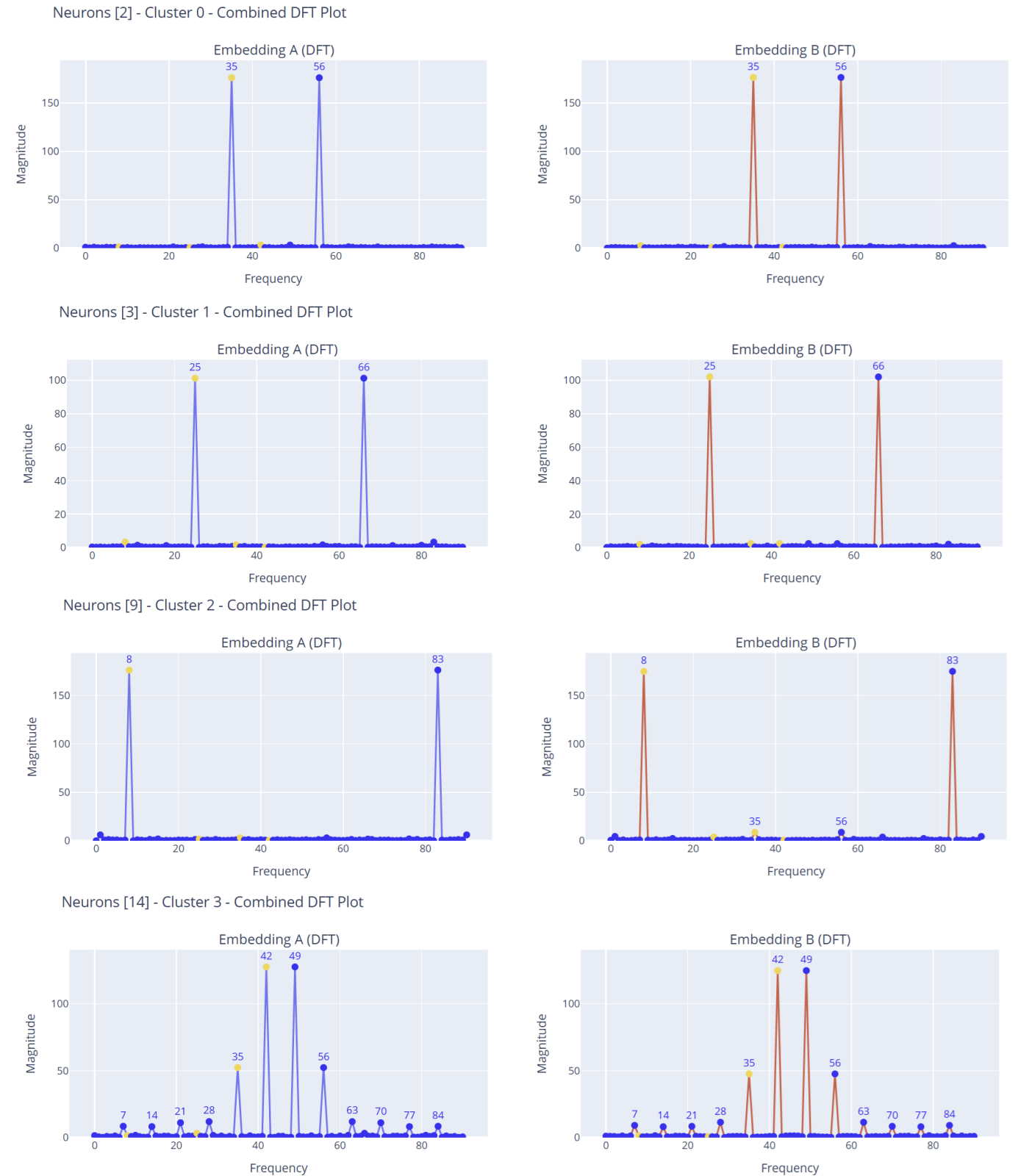}
    \caption{DFT of neurons in each of the four clusters in this random seed. Cluster 0 has frequency 35, cluster 1 has frequency 25, cluster 2 has frequency 8, and cluster 3 is a fine-tuning cluster with frequencies at multiples of 7, 14, 21, 28, 35, and 42.}
    \label{appfig:133seed}
\end{figure}

\begin{figure}[t]
    \centering
    \subfigure[(a) PC1 vs PC2 Scatter Plot]{
        \includegraphics[width=0.45\textwidth]{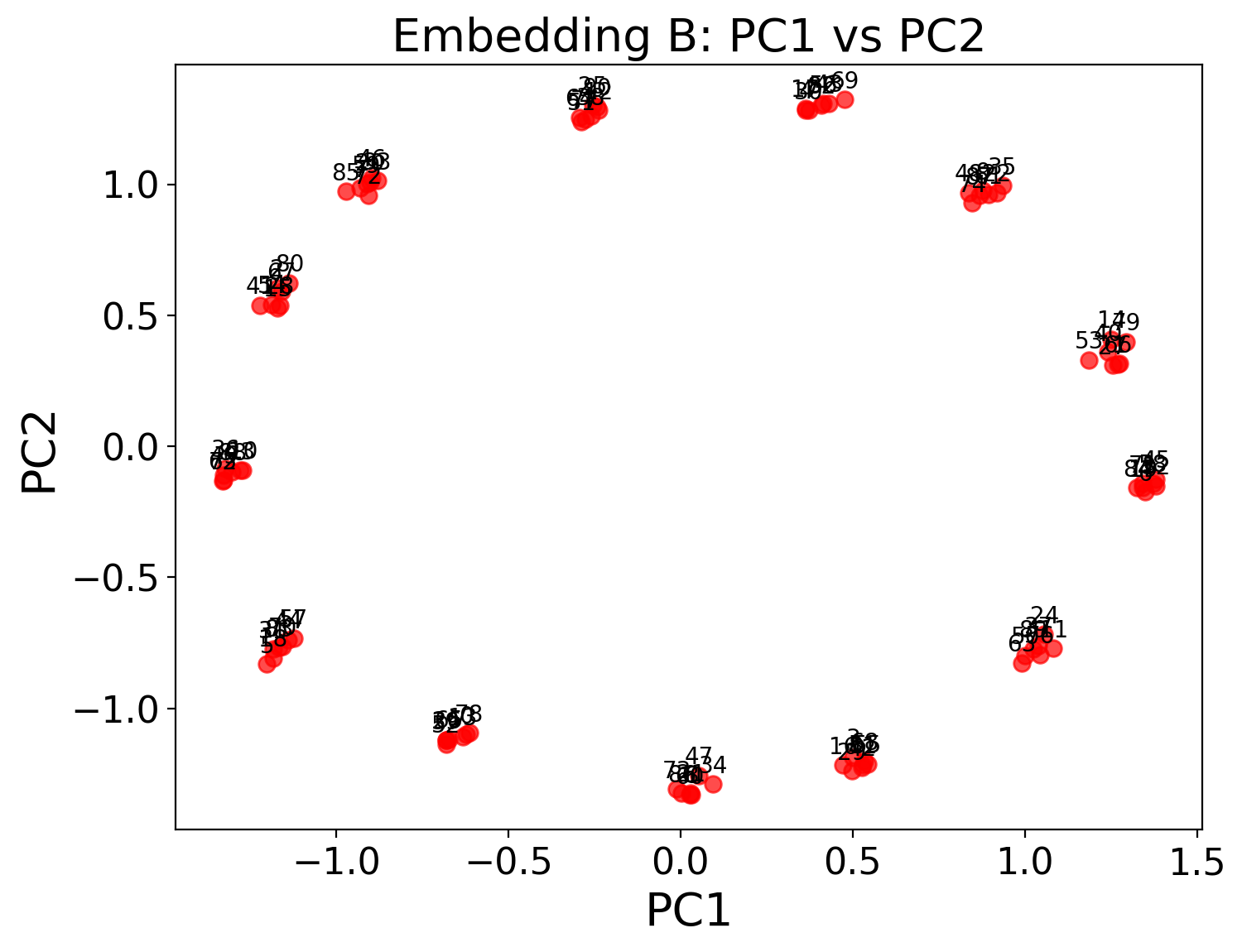}
        \label{fig:pc1-vs-pc2-scatter}
    }
    \hfill
    \subfigure[(b) PC1 vs PC2 DFT]{
        \includegraphics[width=0.45\textwidth]{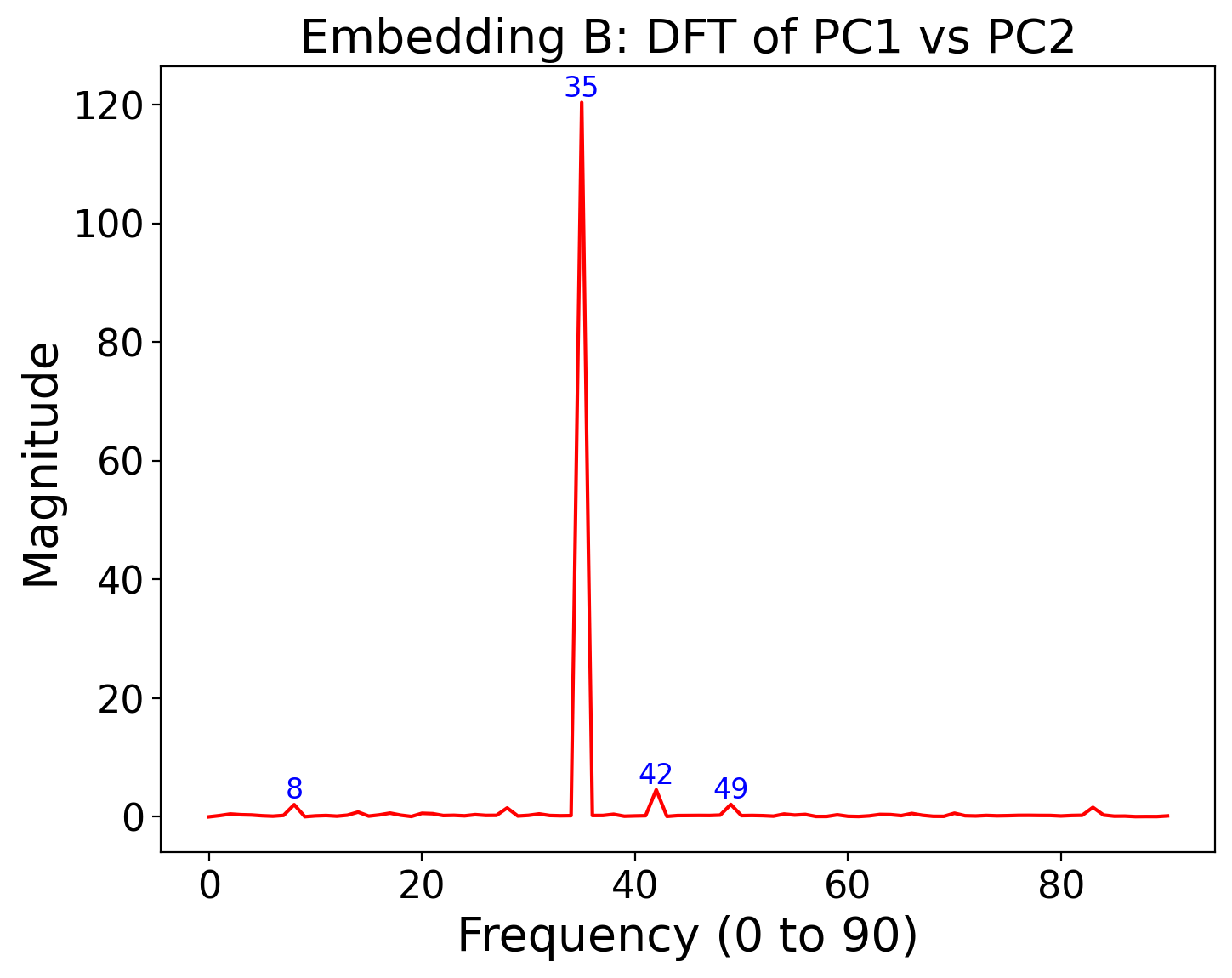}
        \label{fig:pc1-vs-pc2-dft}
    }

    \caption{PCA and DFT for PC1 vs PC2 showing a circular embedding clustered into cosets. The x and y axis of the left plot are the PC1 and PC2 values for the concatenated embedding matrix for each point $(a, b)\mod 91 \in (0,0), (1,1), ..., (90, 90)$. Note that this covers all output classes of the neural network exactly once. Also note that the embedding here is showing 13 cosets with 7 points in them each, \textit{i.e.} all 13 cosets $(a+b)\mod 13 = i, i \in \{0,\dots,12\}$ are in the plot. Both PC1 and PC2 have $f=35$ and since $gcd(35, 91)=7$, a prime factor, it's possible to learn the exact cosets.}
    \label{appfig:new-PCA-result}
\end{figure}

\begin{figure}[t]
    \centering
    \subfigure[PC1 vs PC3 Scatter Plot]{
        \includegraphics[width=0.45\textwidth]{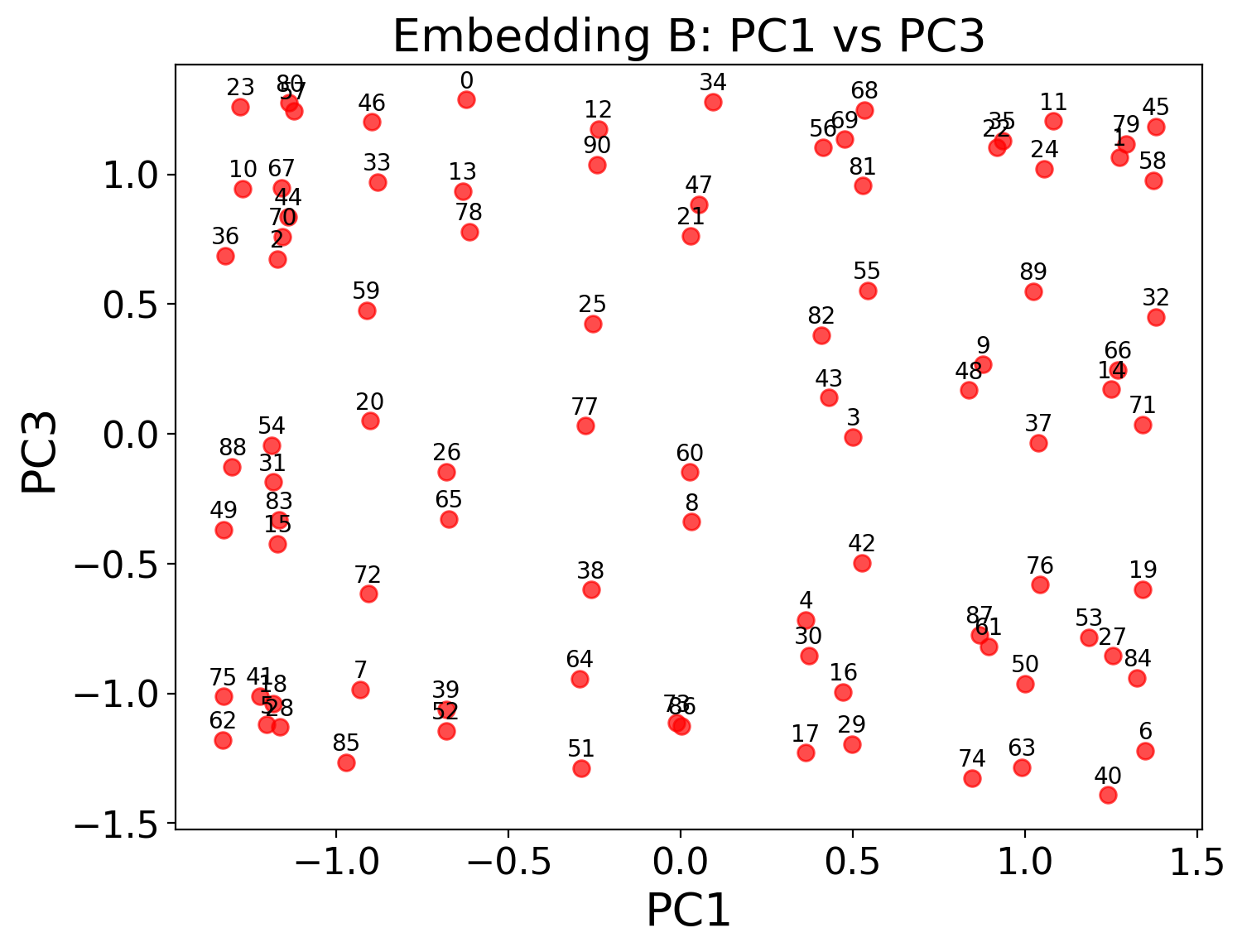}
        \label{fig:pc1-vs-pc3-scatter}
    }
    \hfill
    \subfigure[PC1 vs PC3 DFT]{
        \includegraphics[width=0.45\textwidth]{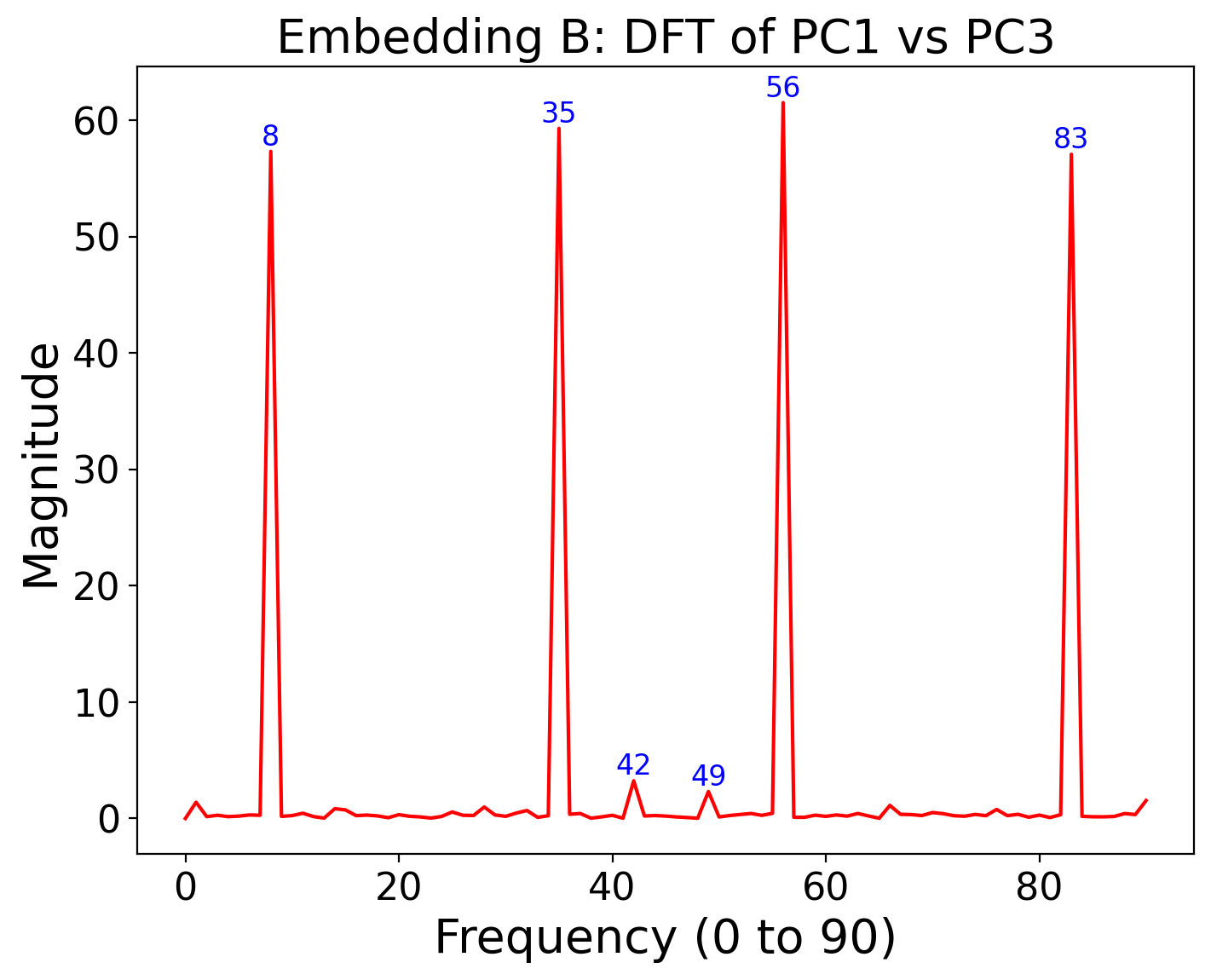}
        \label{fig:pc1-vs-pc3-dft}
    }
    
    \caption{PCA and DFT for PC1 ($f=35$) vs PC3 ($f=8$), a non-circular embedding.}
    \label{fig:pc1-vs-pc3}
\end{figure}

\begin{figure}[t]
    \centering
    \subfigure[PC1 vs PC4 Scatter Plot]{
        \includegraphics[width=0.45\textwidth]{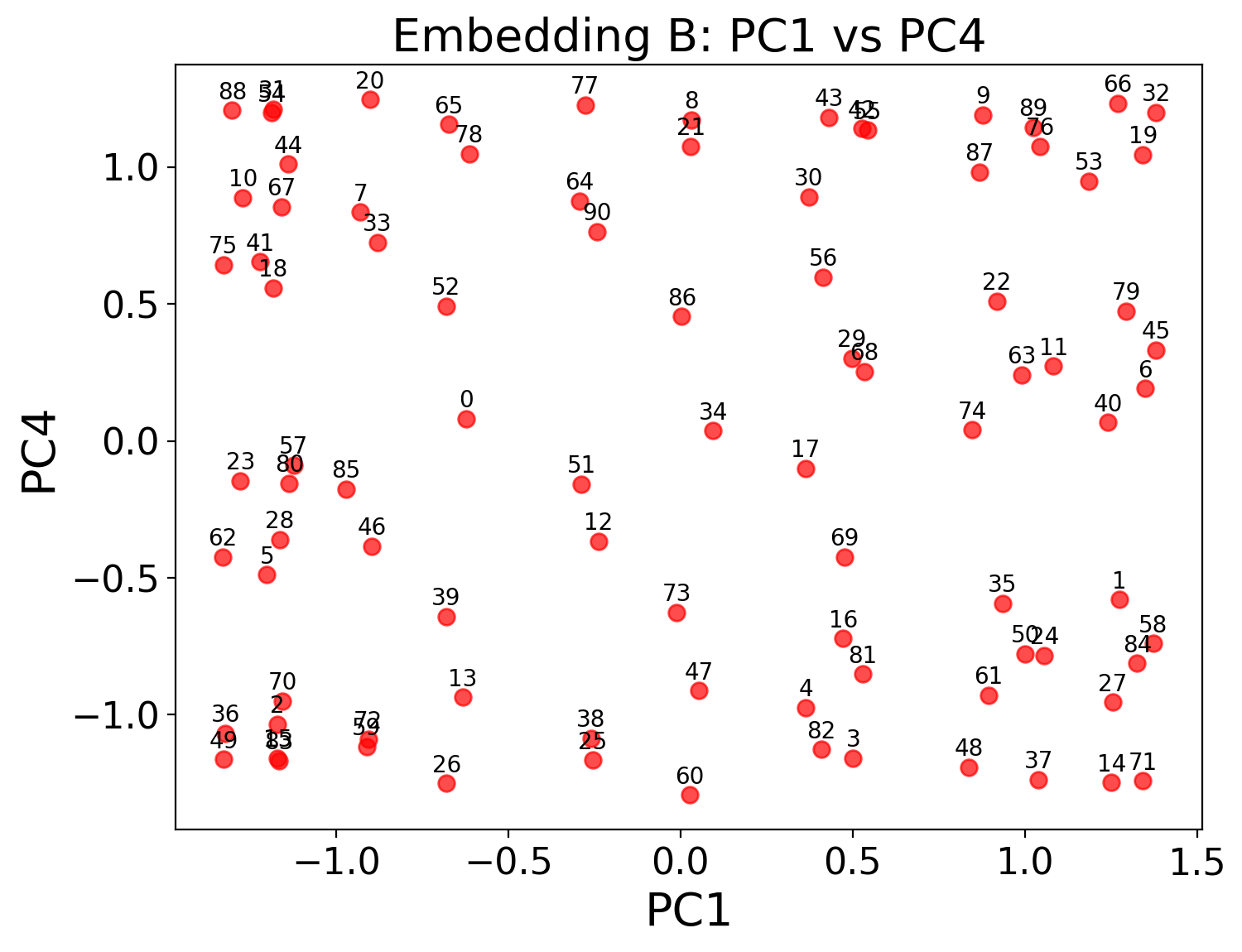}
        \label{fig:pc1-vs-pc4-scatter}
    }
    \hfill
    \subfigure[PC1 vs PC4 DFT]{
        \includegraphics[width=0.45\textwidth]{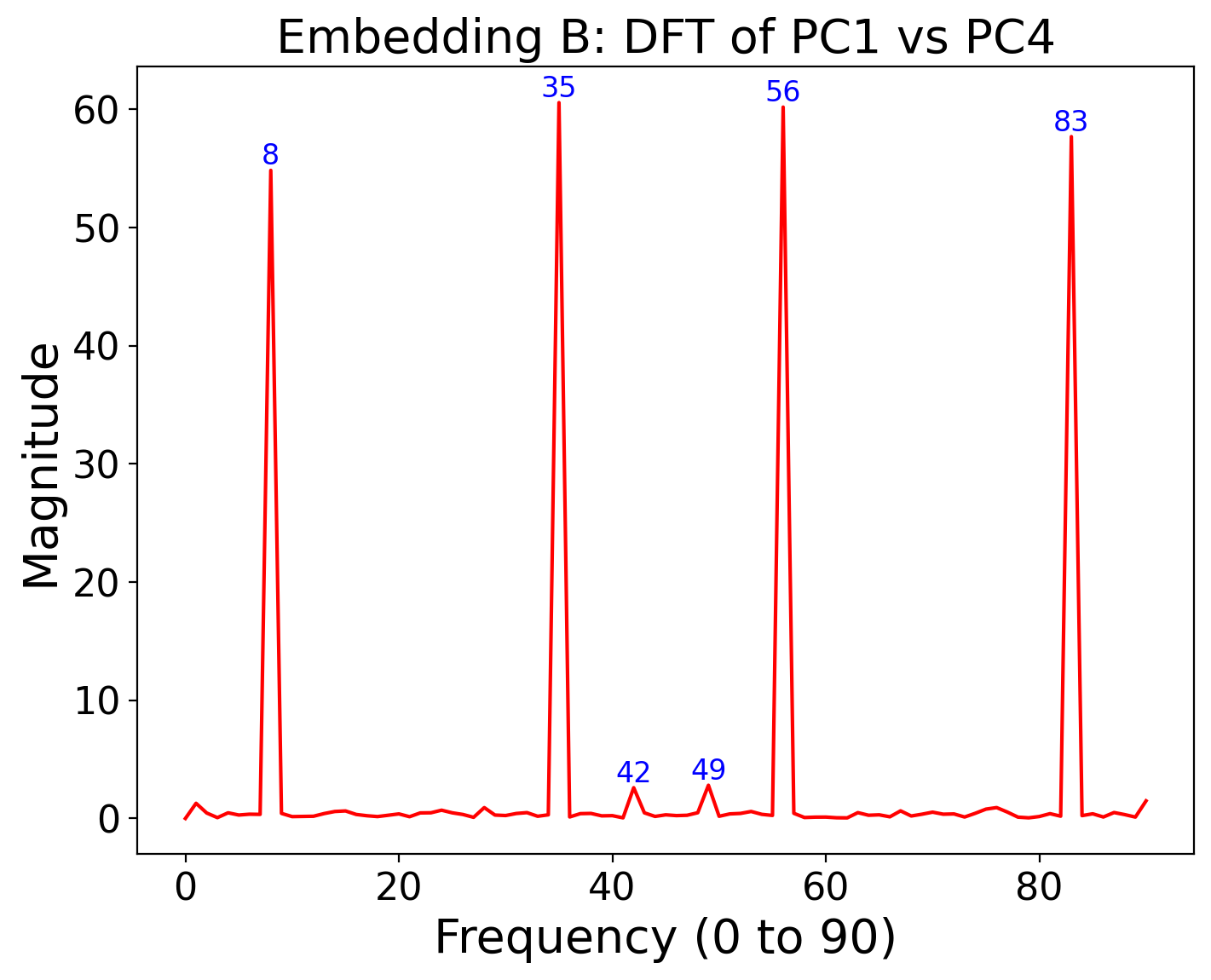}
        \label{fig:pc1-vs-pc4-dft}
    }
    
    \caption{PCA and DFT for PC1 ($f=35$) vs PC4 ($f=8$), a non-circular embedding.}
    \label{fig:pc1-vs-pc4}
\end{figure}

\begin{figure}[t]
    \centering
    \subfigure[PC2 vs PC3 Scatter Plot]{
        \includegraphics[width=0.45\textwidth]{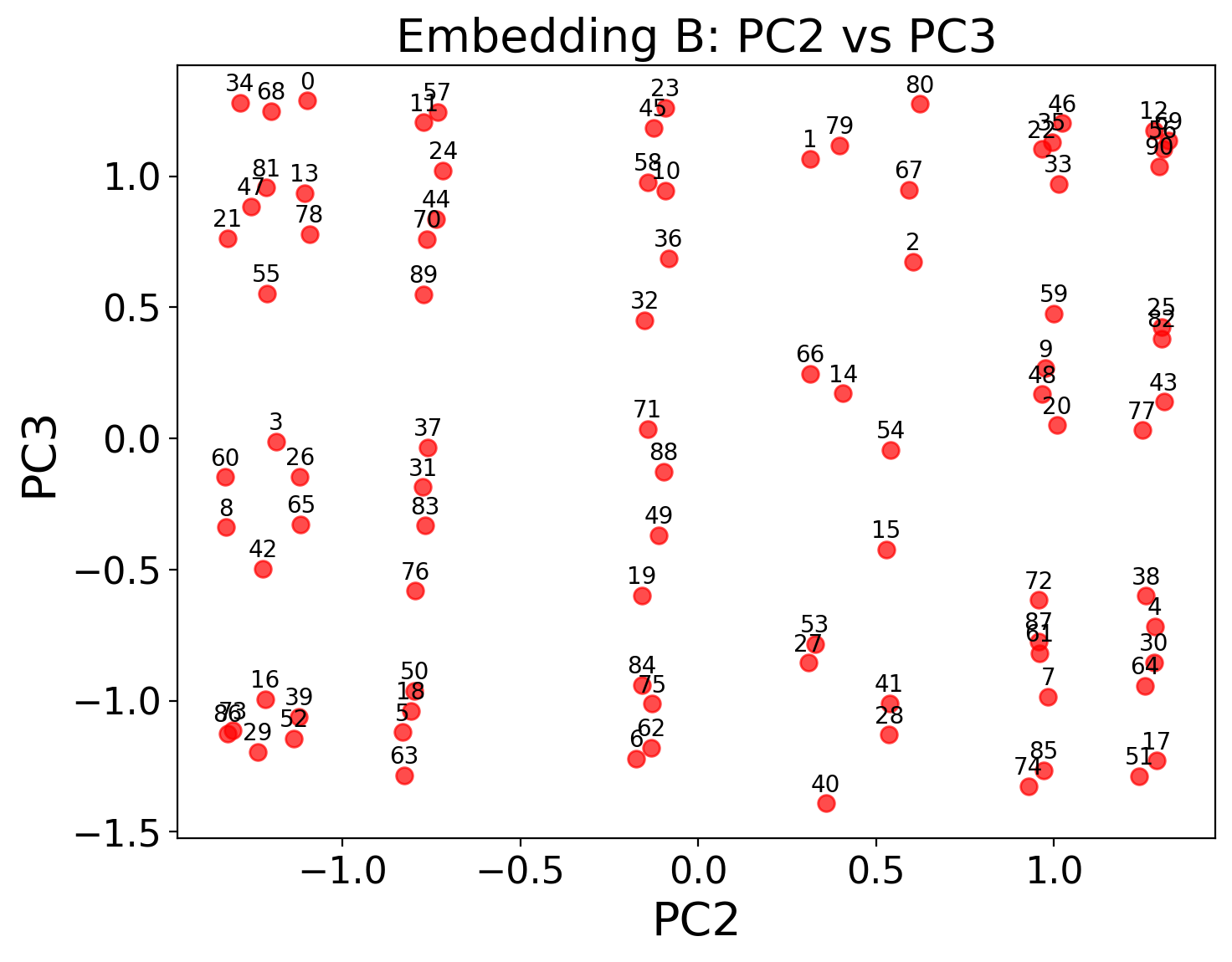}
        \label{fig:pc2-vs-pc3-scatter}
    }
    \hfill
    \subfigure[PC2 vs PC3 DFT]{
        \includegraphics[width=0.45\textwidth]{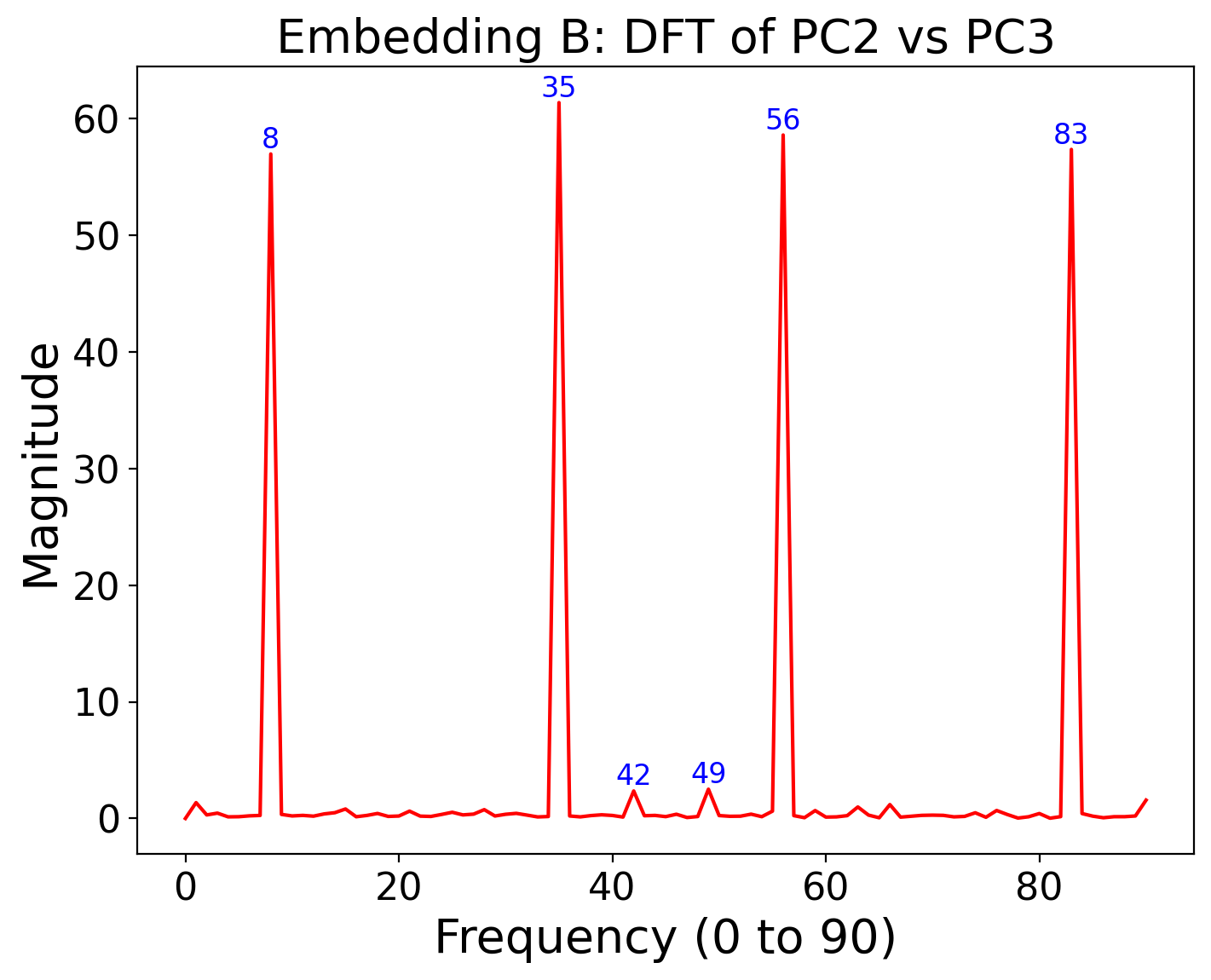}
        \label{fig:pc2-vs-pc3-dft}
    }
    
    \caption{PCA and DFT for PC2 ($f=35$) vs PC3 ($f=8$), a non-circular embedding.}
    \label{fig:pc2-vs-pc3}
\end{figure}

\begin{figure}[t]
    \centering
    \subfigure[PC3 vs PC4 Scatter Plot]{
        \includegraphics[width=0.45\textwidth]{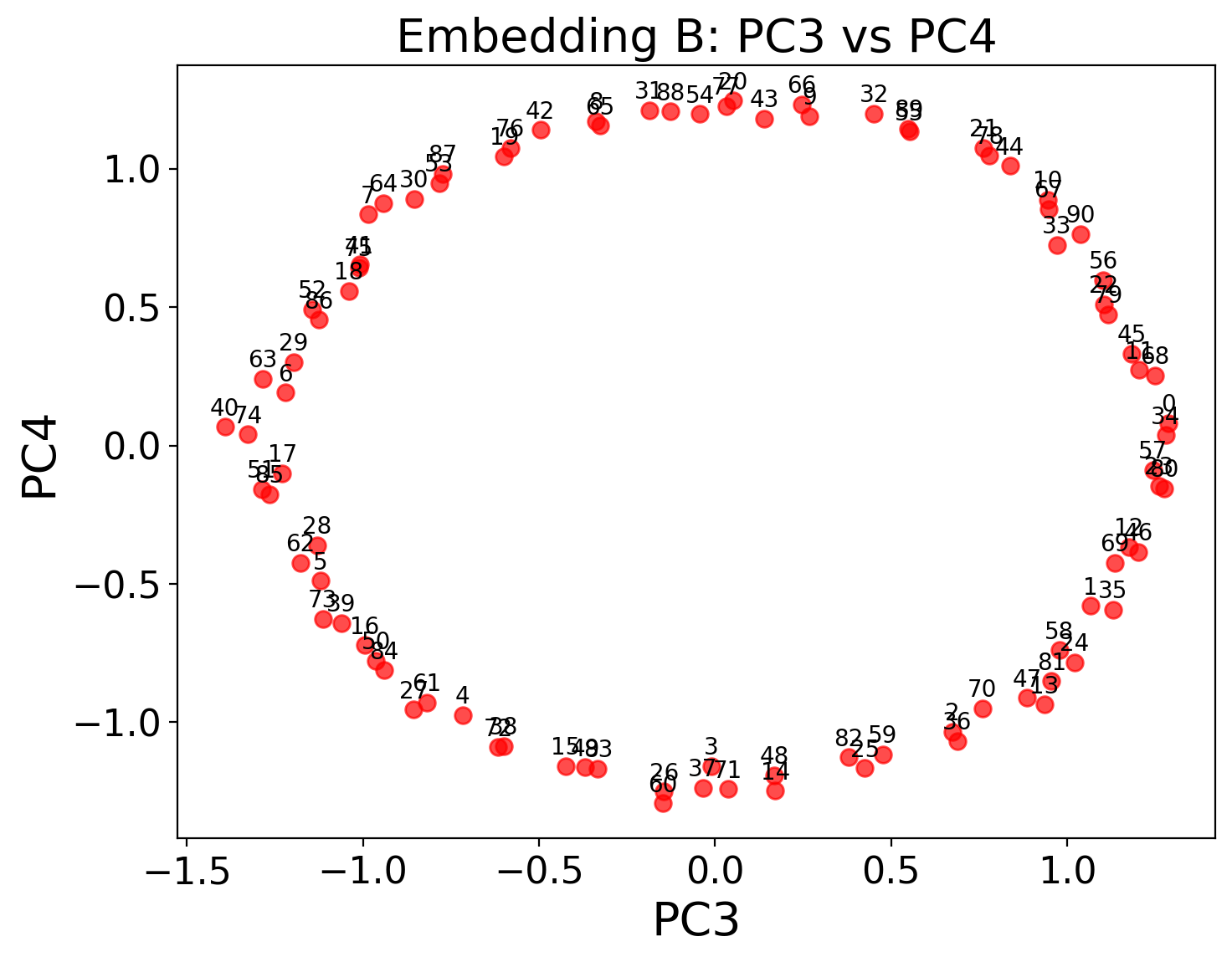}
        \label{fig:pc3-vs-pc4-scatter}
    }
    \hfill
    \subfigure[PC3 vs PC4 DFT]{
        \includegraphics[width=0.45\textwidth]{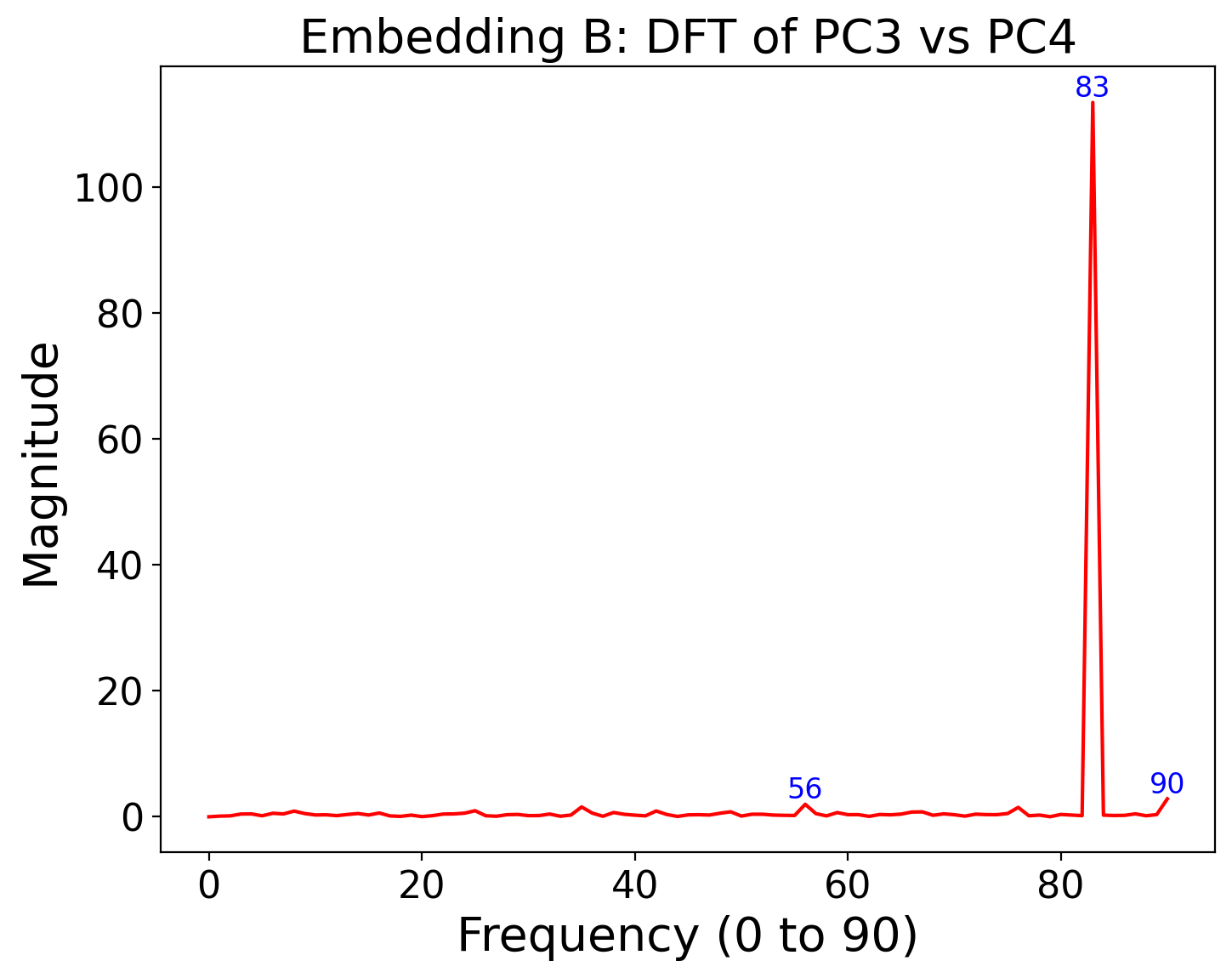}
        \label{fig:pc3-vs-pc4-dft}
    }

    \caption{PCA and DFT for PC3 ($f=8$) vs PC4 ($f=8$), which is a circular embedding because both PC's come from the same frequency cluster.}
    \label{appfig:circular-embedding-PCA}
\end{figure}

\begin{figure}[t]
    \centering
    \subfigure[PC3 vs PC5 Scatter Plot]{
        \includegraphics[width=0.45\textwidth]{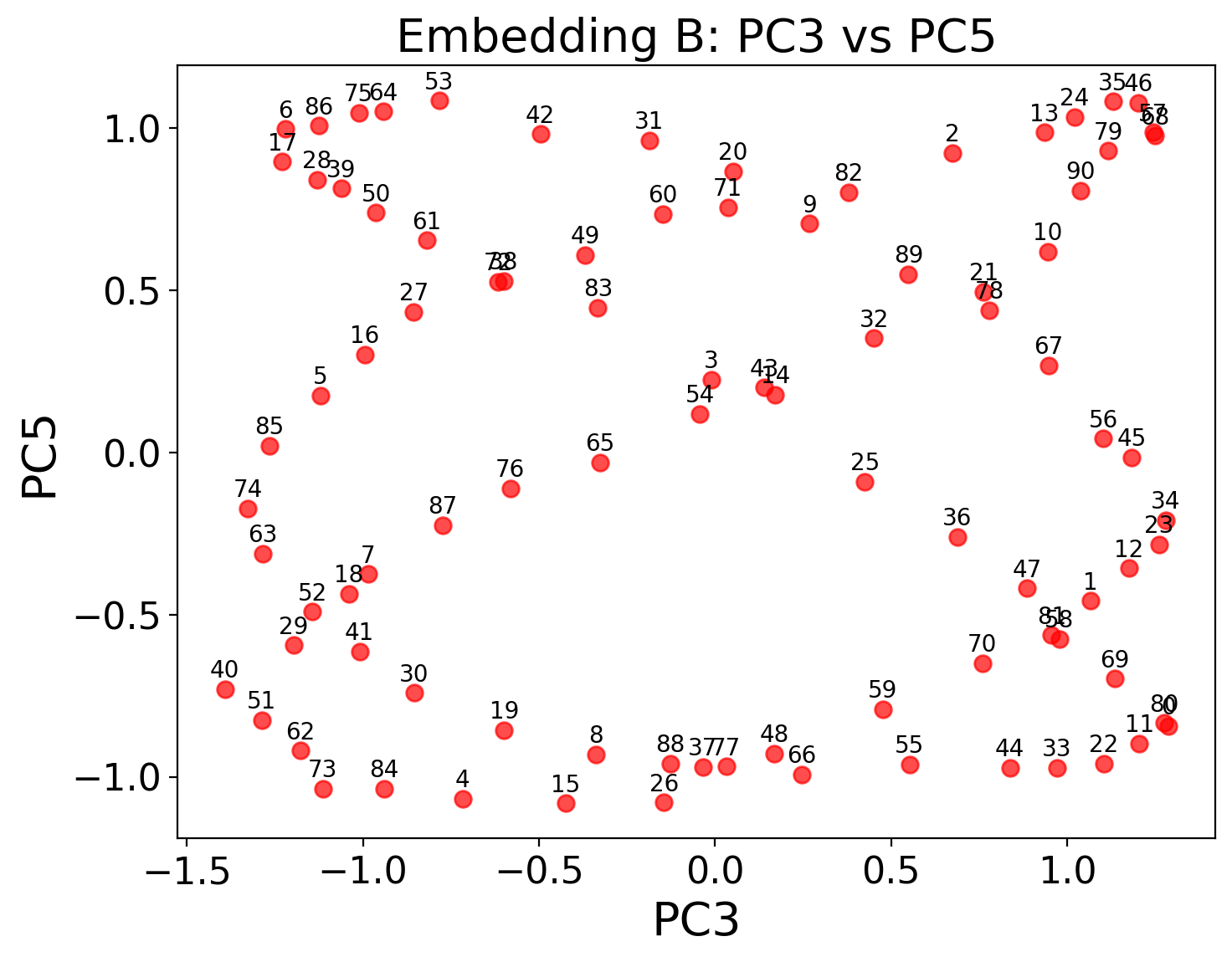}
        \label{fig:pc3-vs-pc5-scatter}
    }
    \hfill
    \subfigure[PC3 vs PC5 DFT]{
        \includegraphics[width=0.45\textwidth]{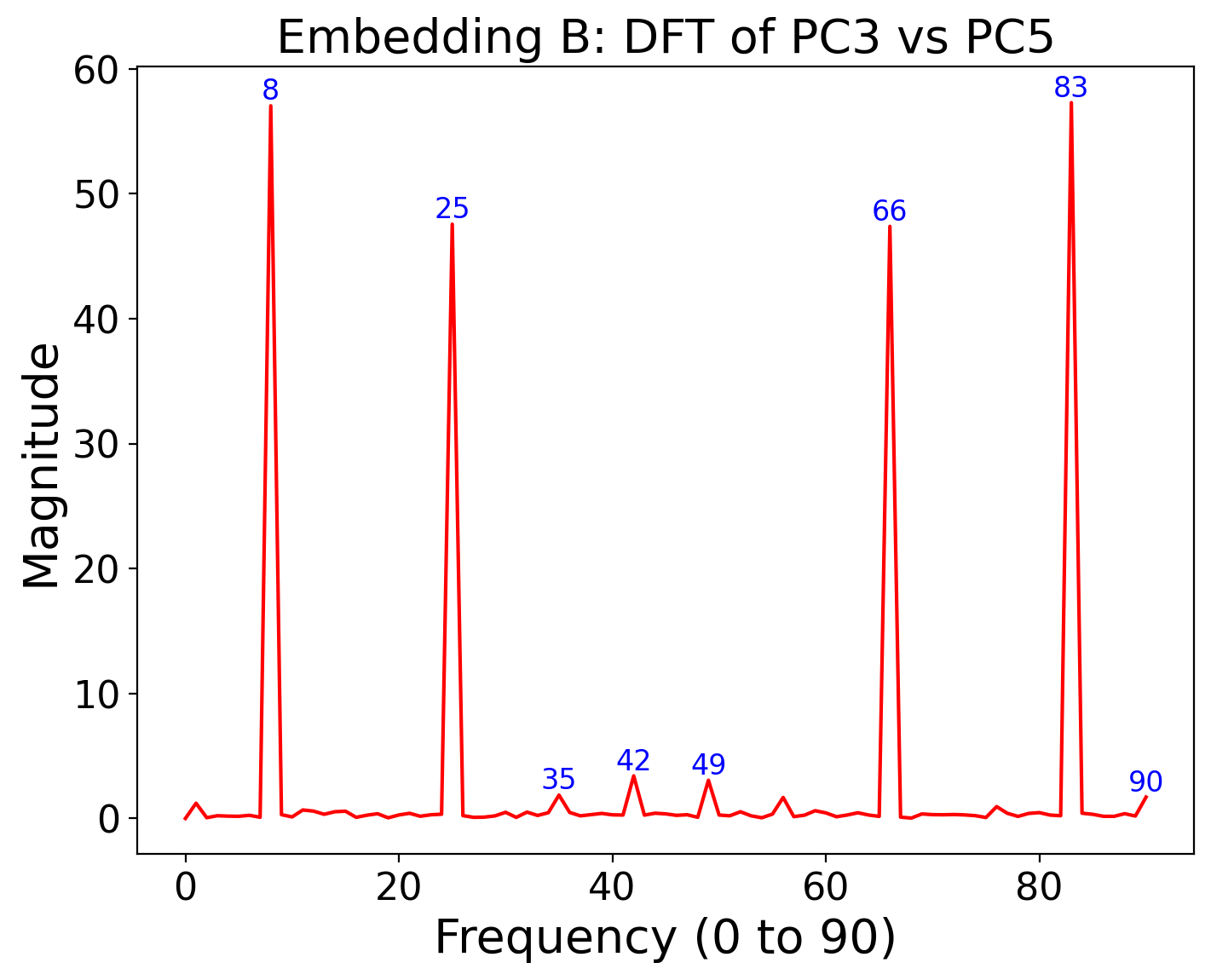}
        \label{fig:pc3-vs-pc5-dft}
    }
    
    \caption{PCA and DFT for PC3 ($f=8$) vs PC5 ($f=25$), a non-circular embedding.}
    \label{fig:pc3-vs-pc5}
\end{figure}

\begin{figure}[t]
    \centering
    \subfigure[PC3 vs PC6 Scatter Plot]{
        \includegraphics[width=0.45\textwidth]{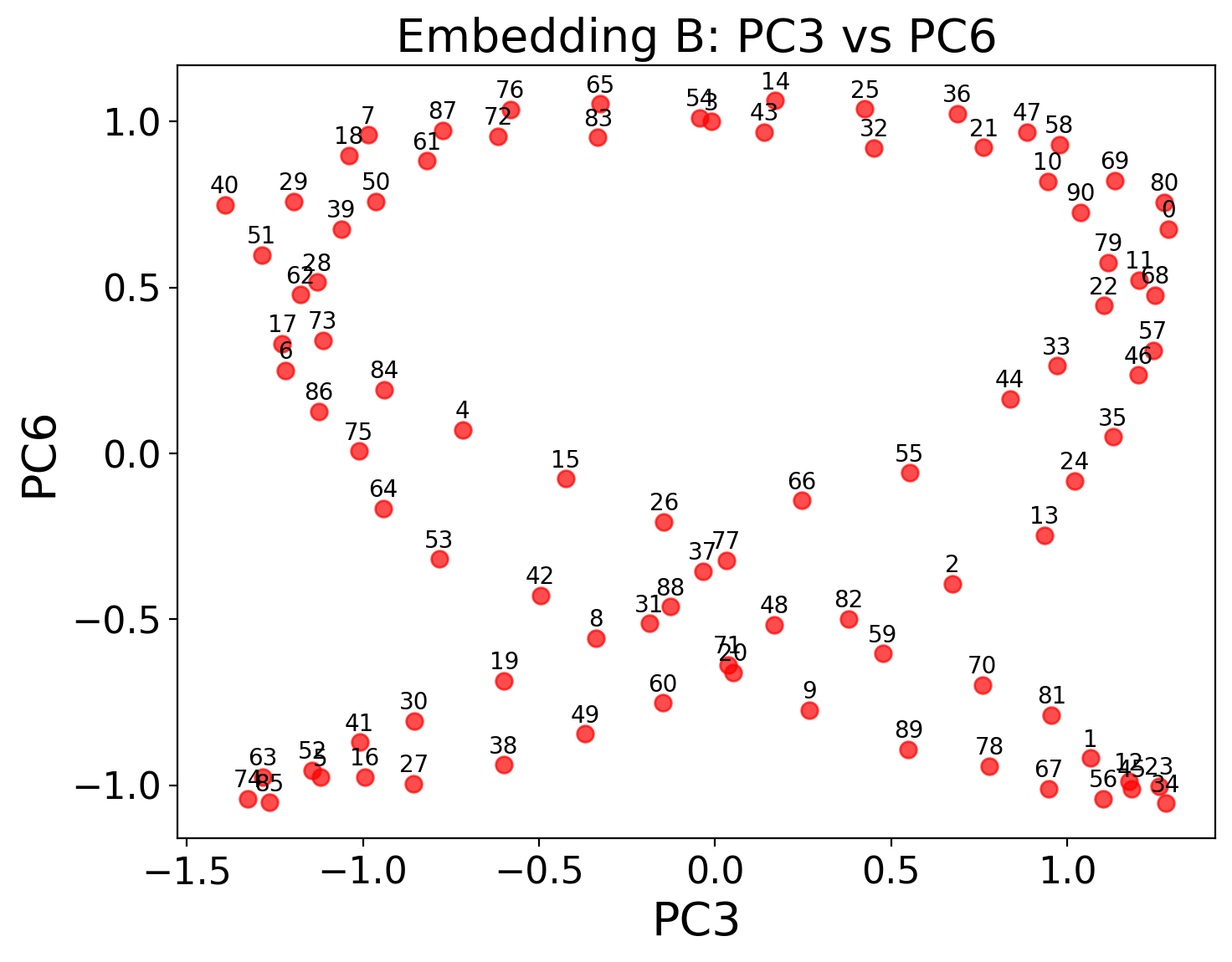}
        \label{fig:pc3-vs-pc6-scatter}
    }
    \hfill
    \subfigure[PC3 vs PC6 DFT]{
        \includegraphics[width=0.45\textwidth]{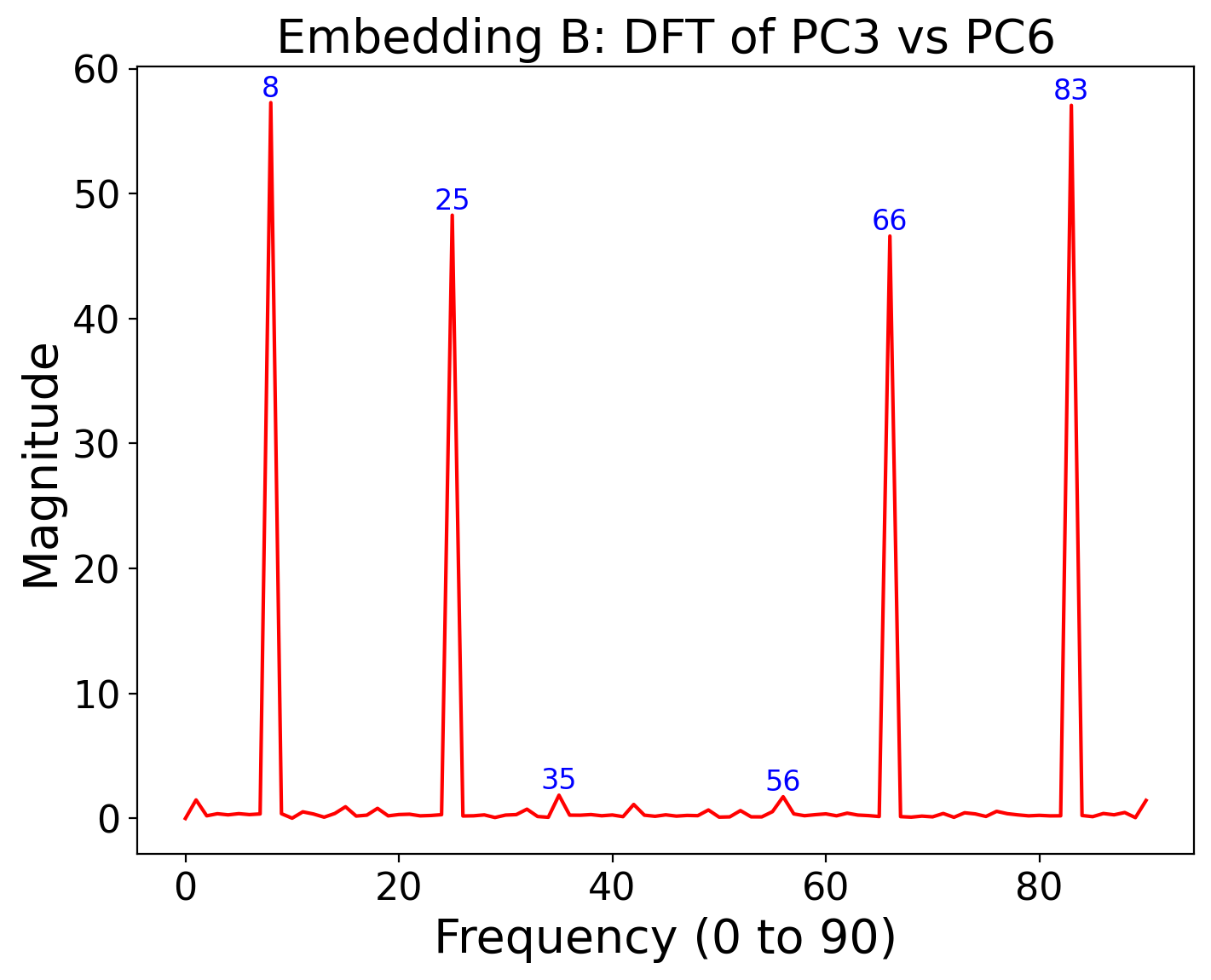}
        \label{fig:pc3-vs-pc6-dft}
    }
    
    \caption{PCA and DFT for PC3 ($f=8$) vs PC6 ($f=25$), a non-circular embedding.}
    \label{fig:pc3-vs-pc6}
\end{figure}

\begin{figure}[t]
    \centering
    \subfigure[PC4 vs PC5 Scatter Plot]{
        \includegraphics[width=0.45\textwidth]{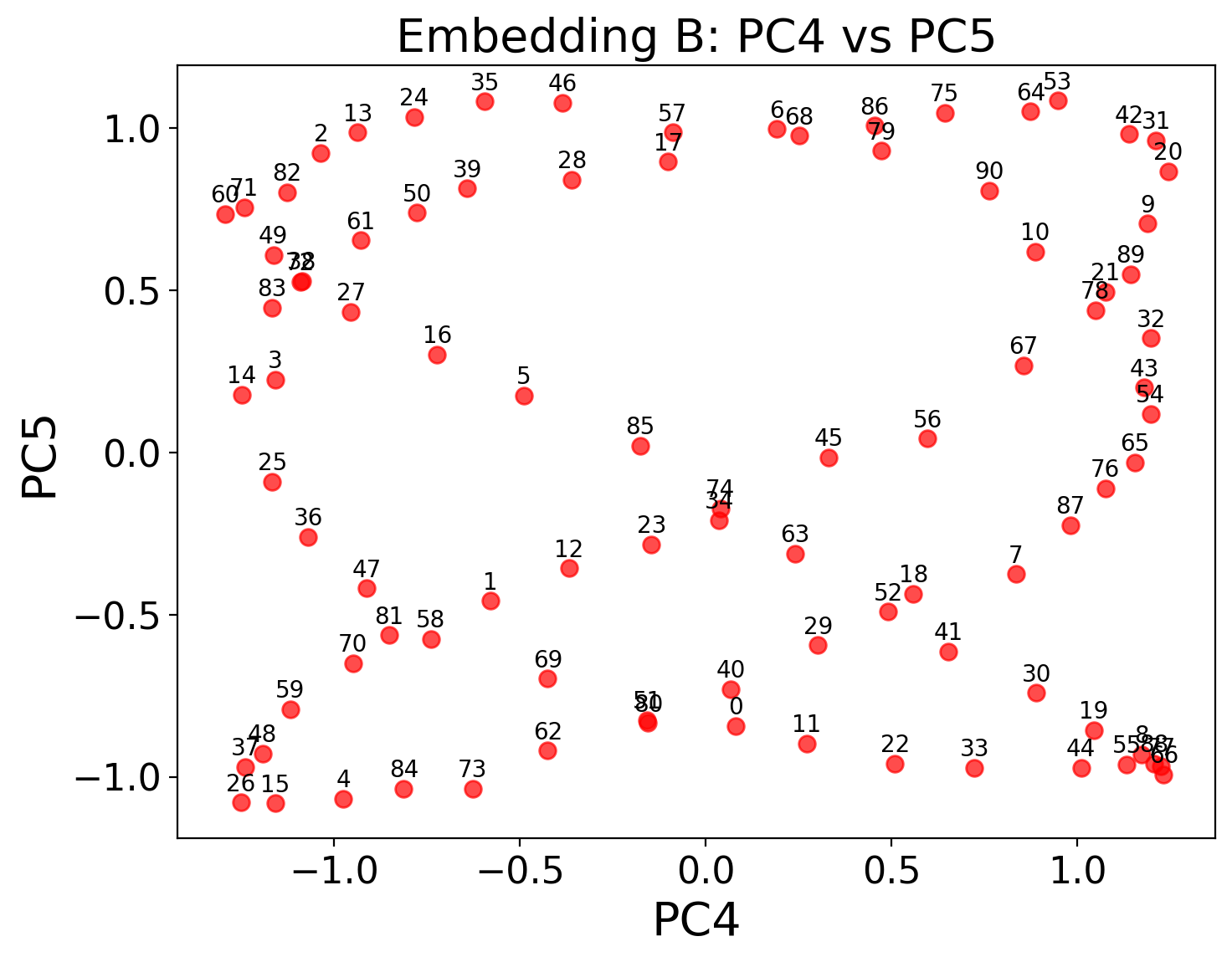}
        \label{fig:pc4-vs-pc5-scatter}
    }
    \hfill
    \subfigure[PC4 vs PC5 DFT]{
        \includegraphics[width=0.45\textwidth]{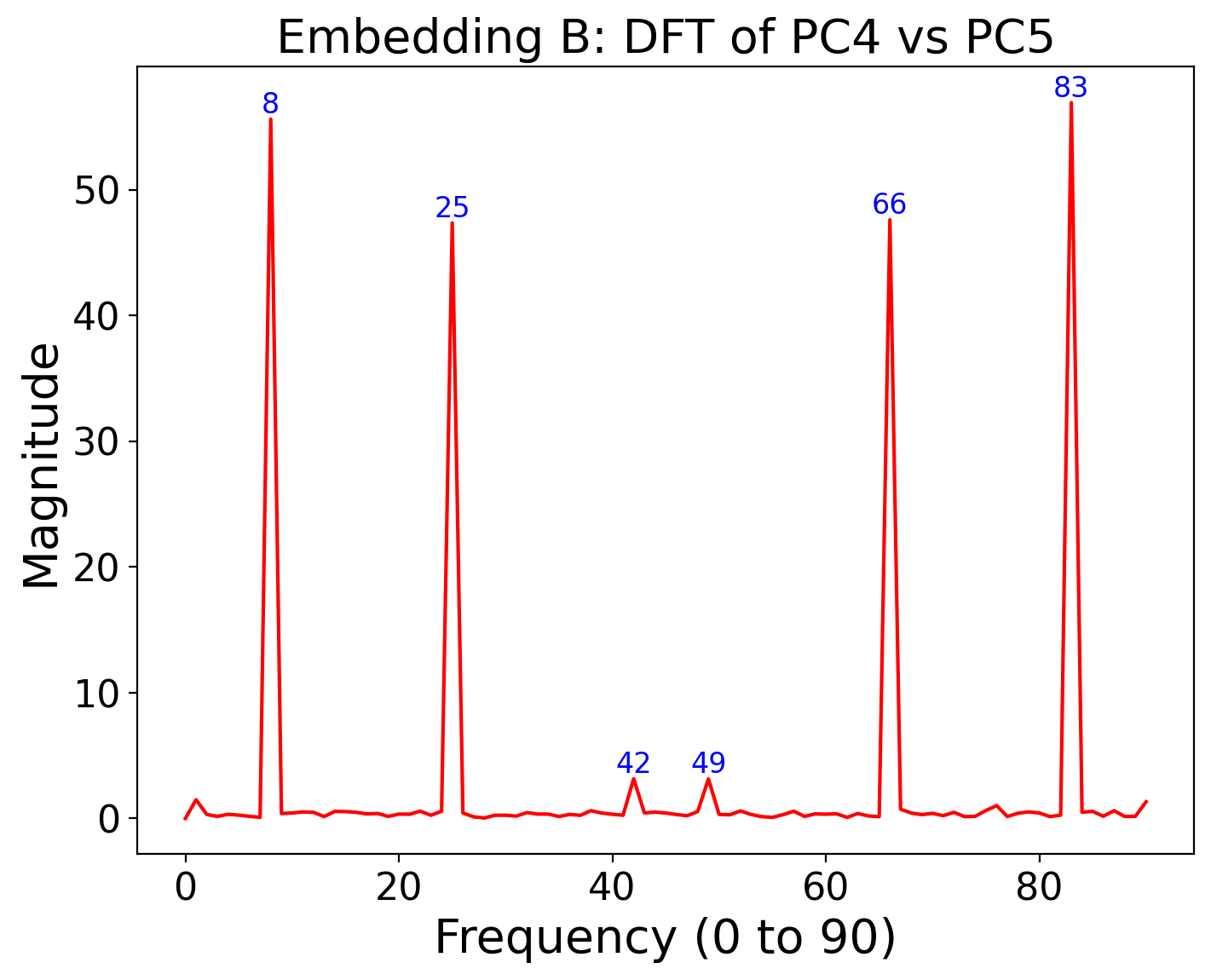}
        \label{fig:pc4-vs-pc5-dft}
    }
    
    \caption{PCA and DFT for PC4 ($f=8$) vs PC5 ($f=25$), a non-circular embedding.}
    \label{fig:pc4-vs-pc5}
\end{figure}

\begin{figure}[t]
    \centering
    \subfigure[PC5 vs PC6 Scatter Plot]{
        \includegraphics[width=0.45\textwidth]{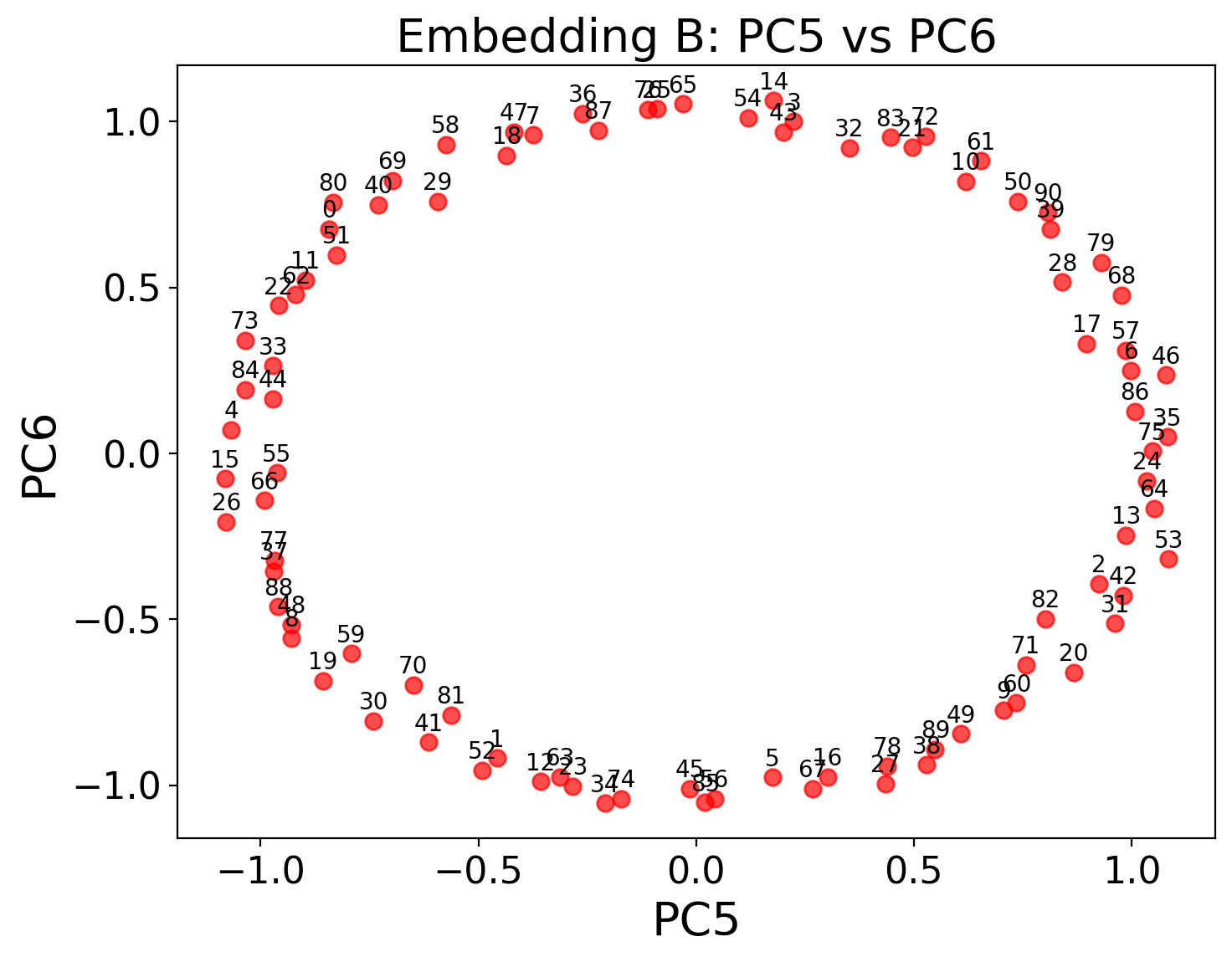}
        \label{fig:pc5-vs-pc6-scatter}
    }
    \hfill
    \subfigure[PC5 vs PC6 DFT]{
        \includegraphics[width=0.45\textwidth]{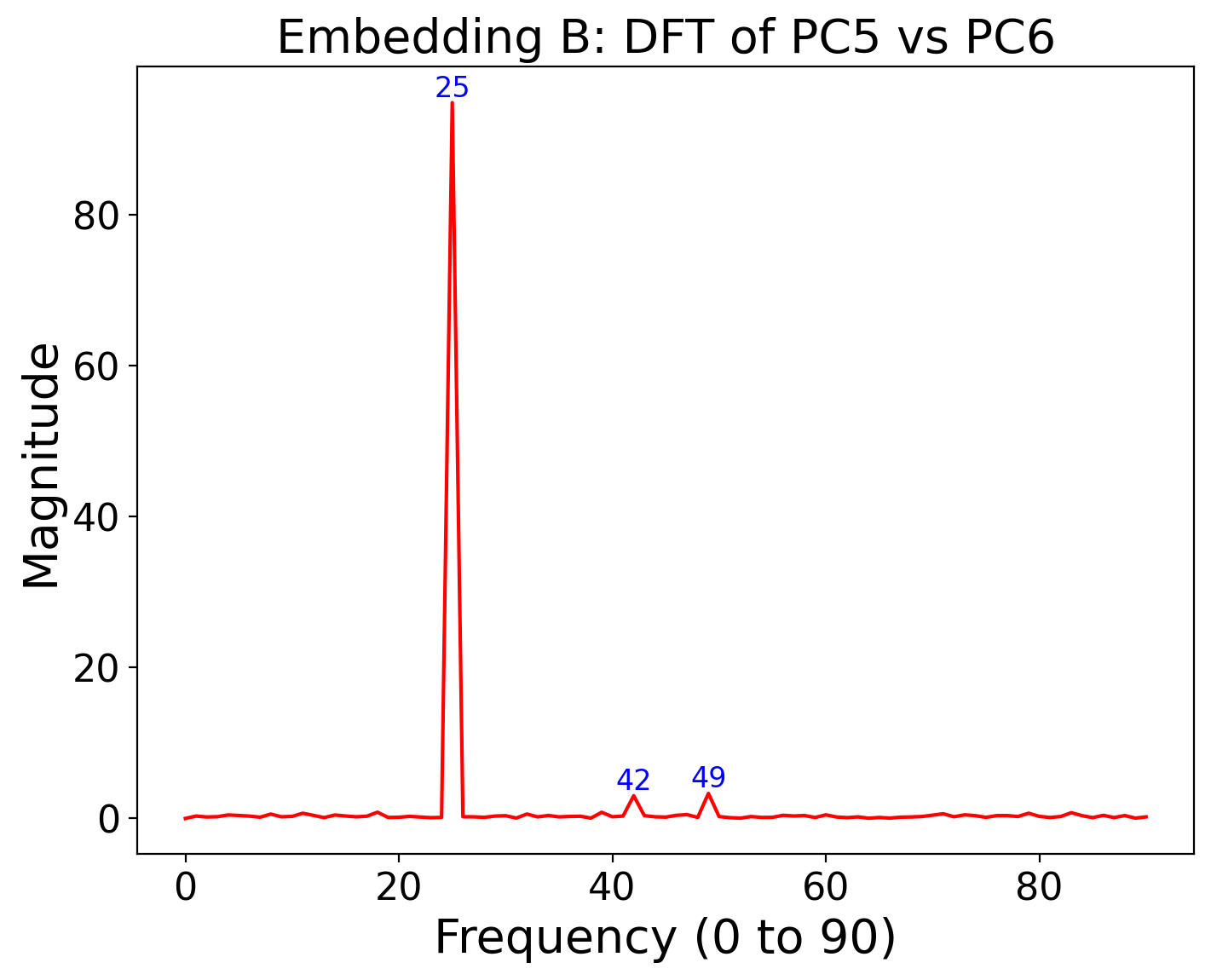}
        \label{fig:pc5-vs-pc6-dft}
    }
    
    \caption{PCA and DFT for PC4 ($f=25$) vs PC5 ($f=25$), a circular embedding as both PCs come from the same frequency cluster.}
    \label{fig:pc5-vs-pc5}
\end{figure}

\newpage

\FloatBarrier

\subsection{More examples of simple neurons}
\label{app:more-expts-simple}

See Fig. \ref{appfig:cluster-of-simple-neurons} for a cluster of simple neurons. The neuron remapping (via group isomorphism) can be seen in Fig. \ref{appfig:cluster-of-remapped-simple-neurons}.

\begin{figure}[H]
    \centering
    \includegraphics[width=\textwidth]{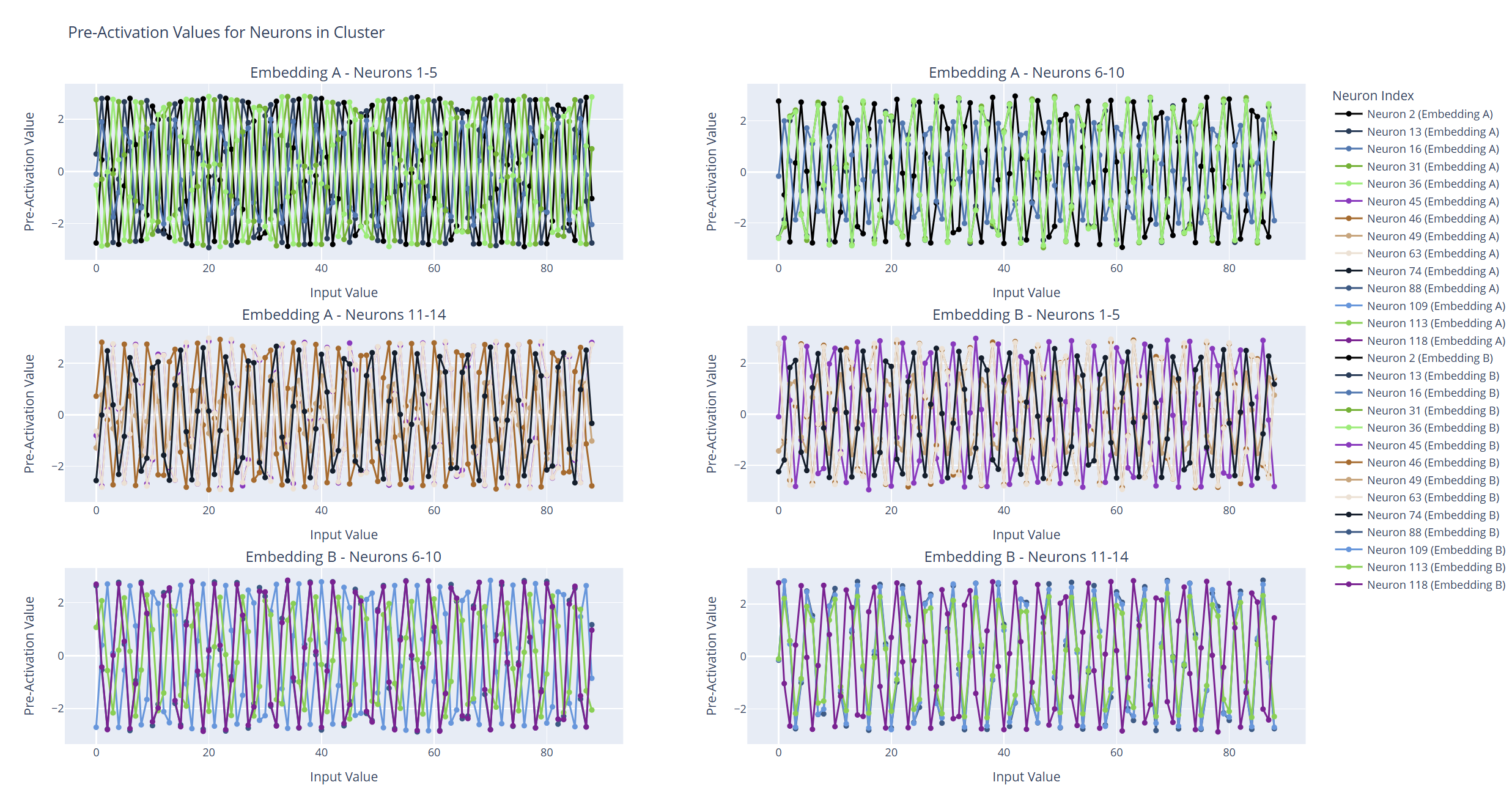}
    \caption{an example cluster of 14 simple neurons of frequency 21.}
    \label{appfig:cluster-of-simple-neurons}
\end{figure}

\begin{figure}[H]
    \centering
    \includegraphics[width=\textwidth]{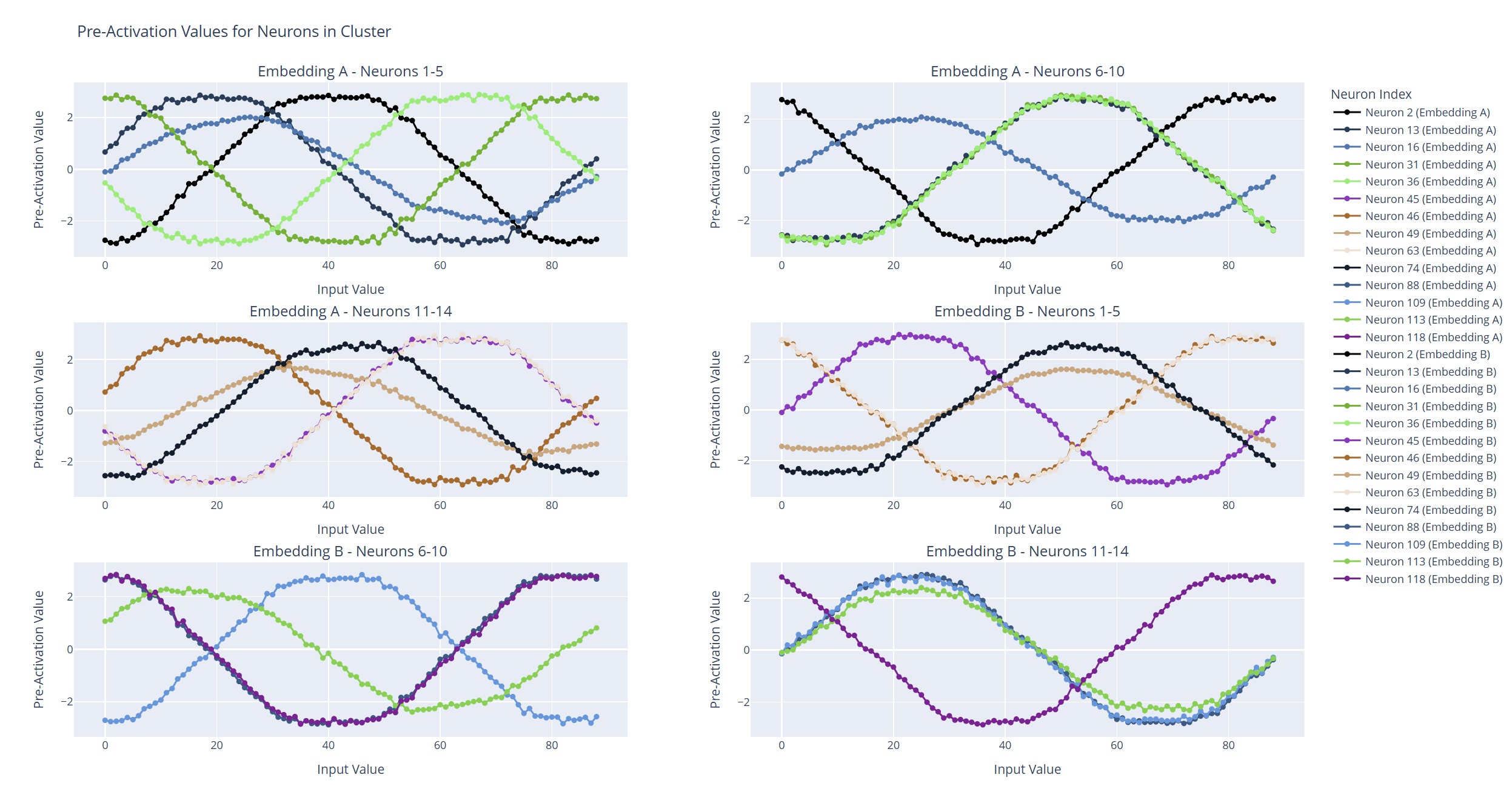}
    \caption{A cluster of simple neurons (from Fig. \ref{appfig:cluster-of-simple-neurons}) transformed by group isomorphism so that all neurons have period 1.}
    \label{appfig:cluster-of-remapped-simple-neurons}
\end{figure}

\subsection{Studying phase shifts in simple neurons}
\label{app:phase-shifts}

Here we show how the phases of different neurons in a cluster overlap to give some more information about how clusters of neurons function. See Fig \ref{appfig:simple-neuron-phase-histograms} for the histograms of the phases of the preactivations of the neurons in a cluster.

For a higher resolution view of what's going on, see a 2d scatter plot created by grouping the phases for each neuron's $a$ and $b$ preactivations into a pair (phase-a, phase-b) and plotting the points for all neurons in the cluster in the 2d plane as a black point, see Fig \ref{appfig:simple-neuron-phase-scatterplot}. It's worth noting that the phases are nice and spread out uniformly like in Fig \ref{appfig:simple-neuron-phase-scatterplot} only about half the time. In the other cases

\begin{figure}[t]
    \centering
    \subfigure[Histogram of preactivations for two neurons in a fine-tuning cluster. The x-axis is the input value into the network for \(a\) (left) and \(b\) (right).]{
        \includegraphics[width=\textwidth]{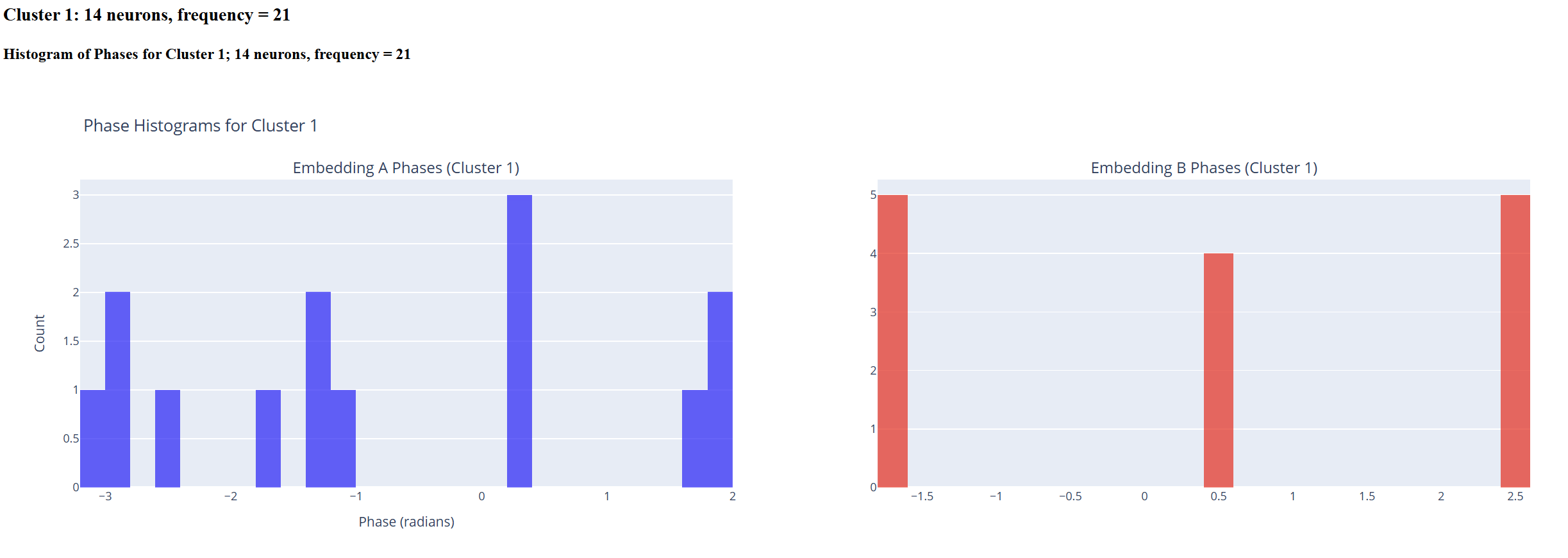}
        \label{appfig:simple-neuron-phase-histograms}
    }

    \vspace{1em} 

    \subfigure[2D scatter plot created by grouping the phases for each neurons $a$ and $b$ preactivations into a pair (phase-a, phase-b) and plotting the points for all neurons in the cluster in the 2D plane as a black point. In this case, the cluster has 14 neurons of frequency 21.]{
        \includegraphics[width=\textwidth]{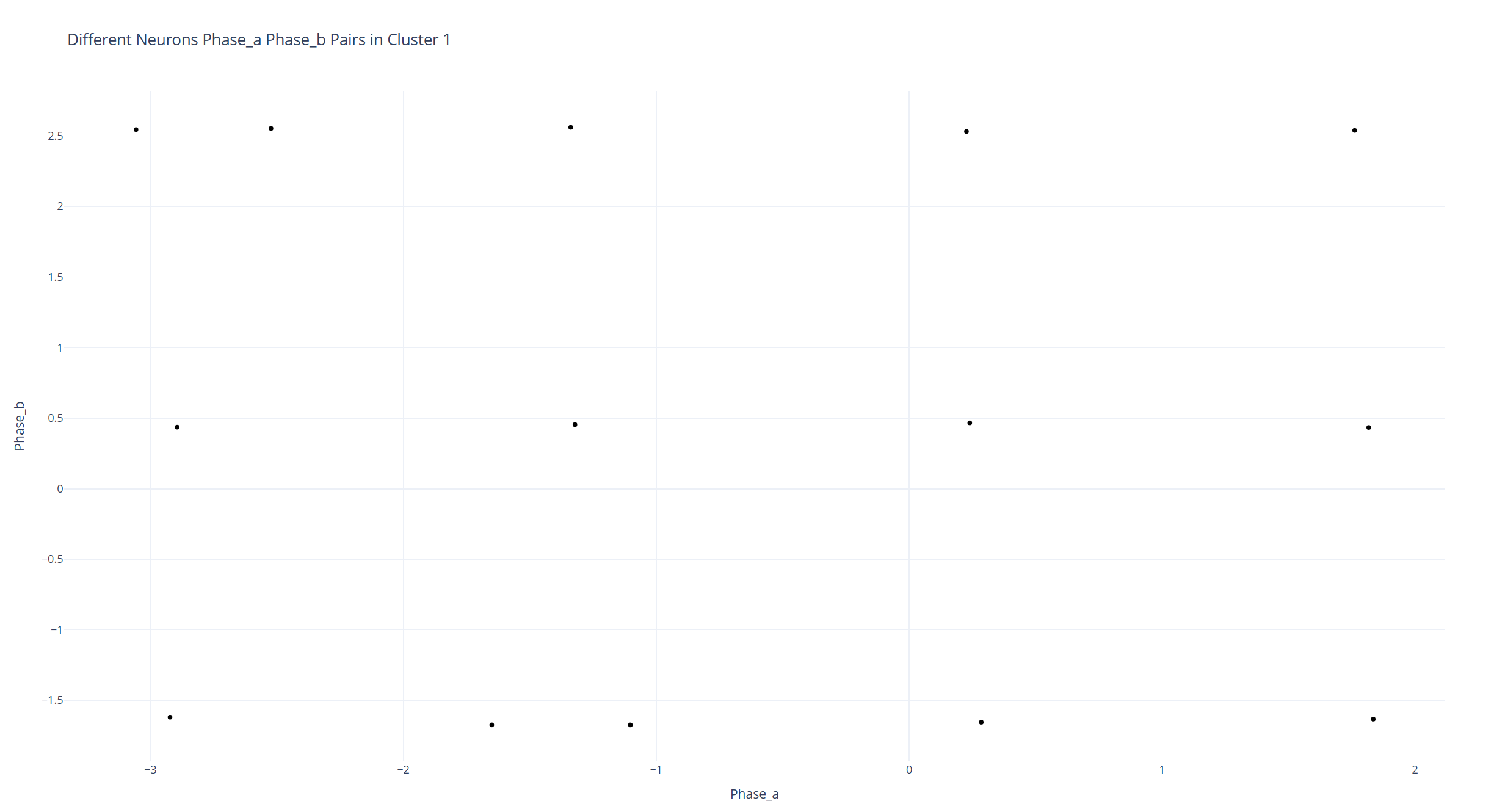}
        \label{appfig:simple-neuron-phase-scatterplot}
    }

    \caption{Histogram of phases (top) and 2D scatter plot of phases (bottom) for a simple neuron cluster with frequency 21.}
    \label{fig:stacked-simple-neuron-phases}
\end{figure}

\begin{figure}
    \centering
    \includegraphics[width=1.0\textwidth]{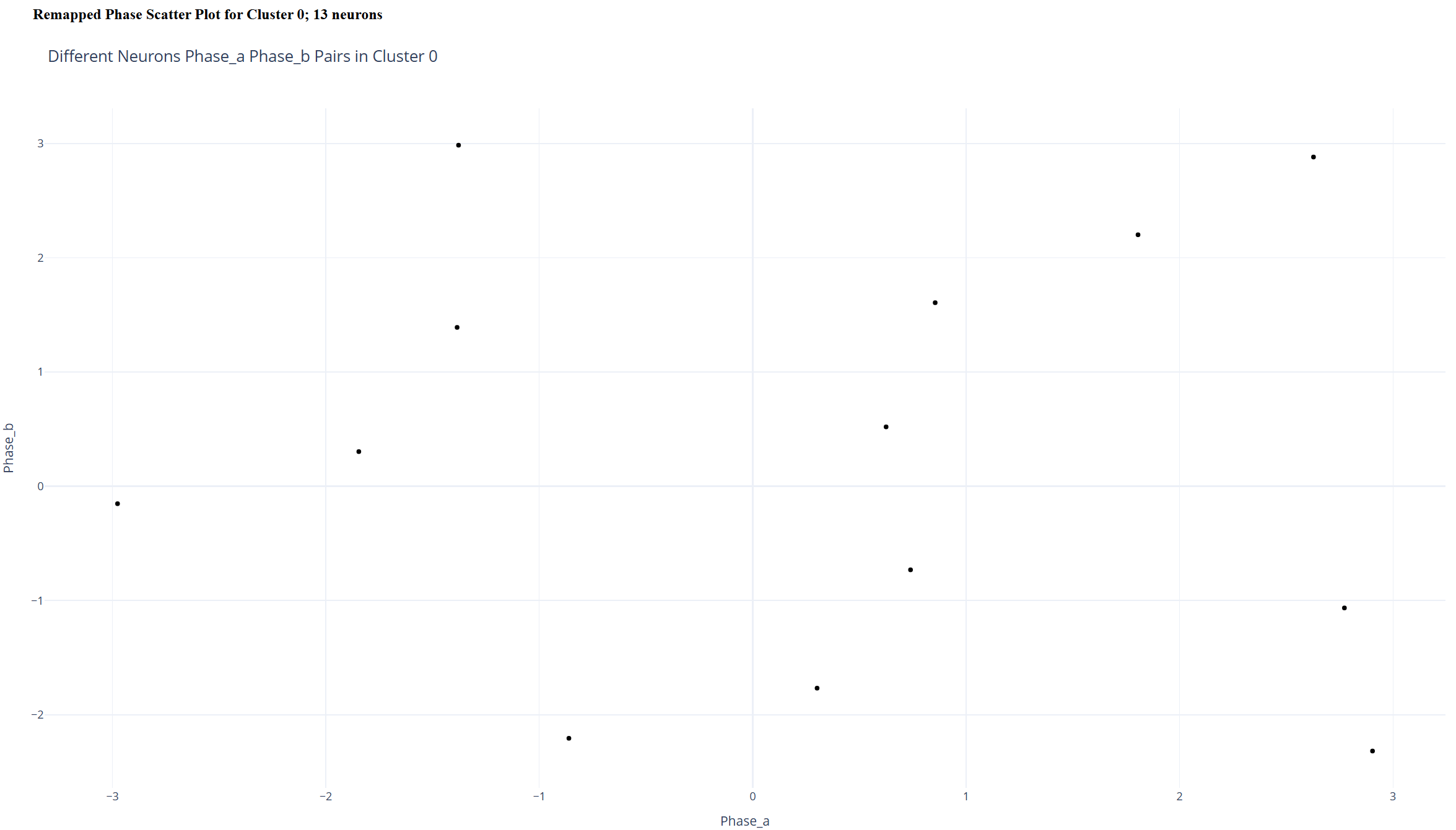}
    \caption{Here's an example where the phases aren't in a nice grid like they were in Figure \ref{fig:stacked-simple-neuron-phases}.}
    \label{fig:bad-phases}
\end{figure}

We aren't sure what causes the phases to sometimes align in a nice grid, vs a much more ``random'' looking configuration as seen in Figure \ref{fig:bad-phases}. Understanding this is likely essential for proving a realistic bound for the number of neurons needed with the same frequency. We leave this for future work.

\FloatBarrier

\subsection{Studying fine-tuning neurons}
\label{app:fine-tuning}
Fine-tuning neurons are composed of linear combinations of group representations, in contrast with simple neurons which are composed by one group representation. Additionally, fine-tuning neurons are composed of group representations for what are called harmonic frequencies of the main frequency, for $f=7$ these would be the multiples of $7$, \textit{e.g.} $\{14,21,28,35,42,...\}$.

\begin{figure}[t]
    \centering
    \includegraphics[width=0.83\textwidth]{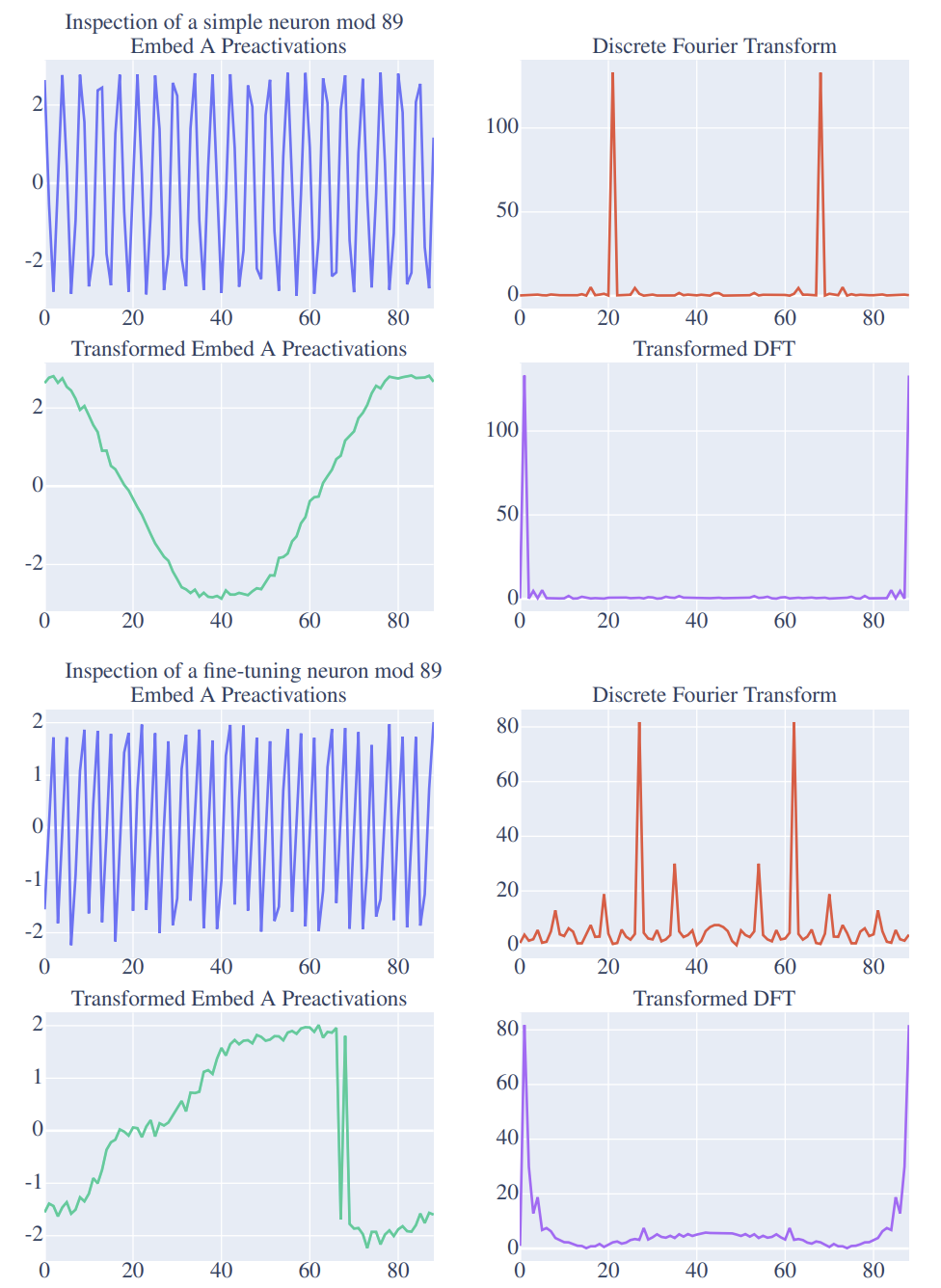}
    \caption{Comparing a simple neuron and fine-tuning neuron before and after transformation by a group isomorphism. The fine-tuning neuron has its DFT concentrating strongest on (27, 35, 19).}
    \label{fig:simple_vs_fine}
\end{figure}

\newpage 

We train a neural network with random seed 133 and discover a cluster of fine-tuning neurons. The preactivations for two of these neurons are shown in Fig. \ref{appfig:fine-tuning-neuron} and the DFT's for these two neurons are shown in Fig. \ref{appfig:fine-tuning-neuron-DFT}. 

We show that these neurons can be generated by linear combinations of representations in Fig. \ref{fig:constructing-fine-tuning-neuron}. 

\newpage

\begin{figure}[ht]
    \centering
    \includegraphics[width=\textwidth]{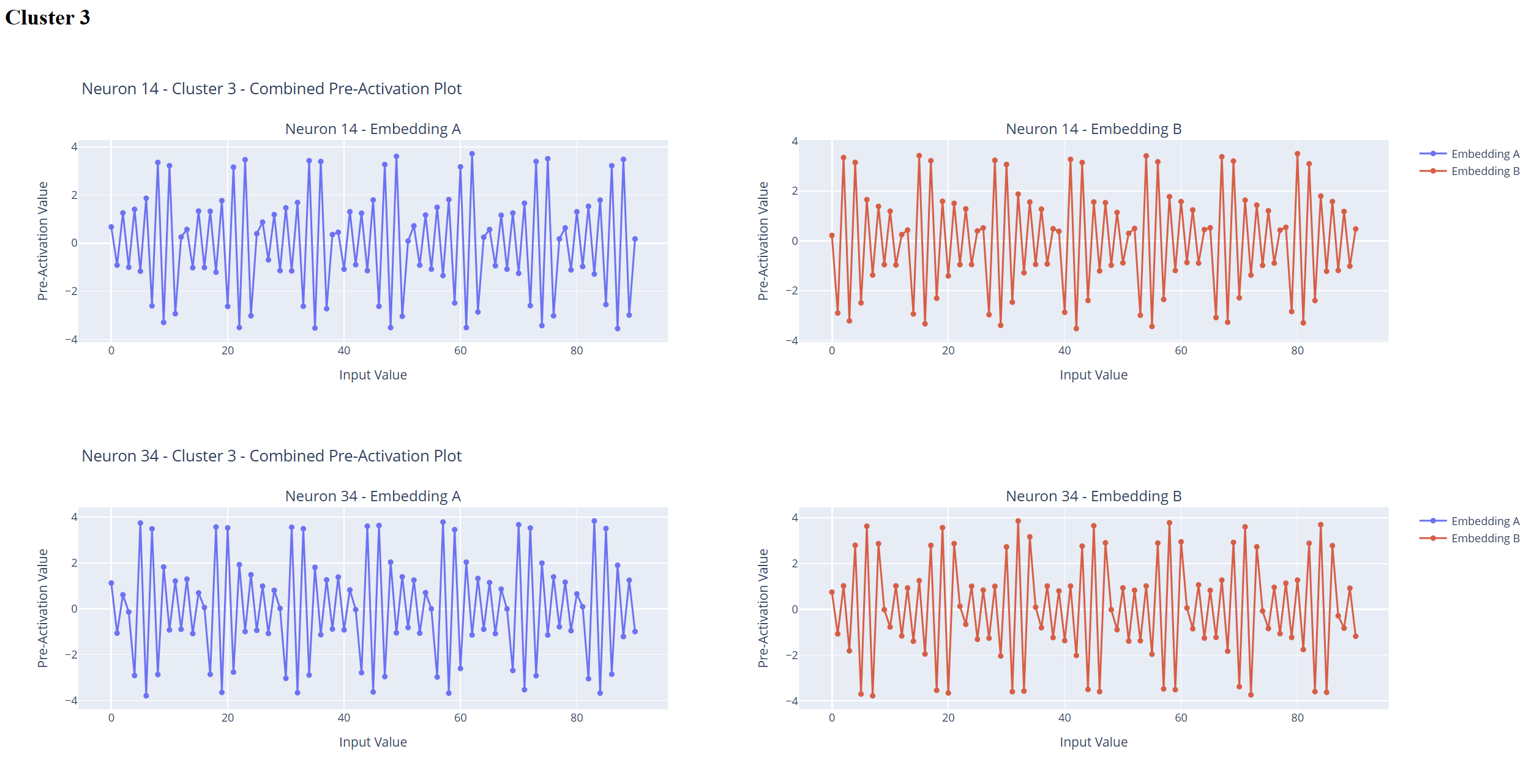}
    \caption{This shows a cluster of fine-tuning neurons and shows the preactivations of the first two neurons in the cluster. The x-axis is the input value into the network for $a$ on the left, and the input value for $b$ on the right.}
    \label{appfig:fine-tuning-neuron}
\end{figure}

\begin{figure}[ht]
    \centering
    \includegraphics[width=\textwidth]{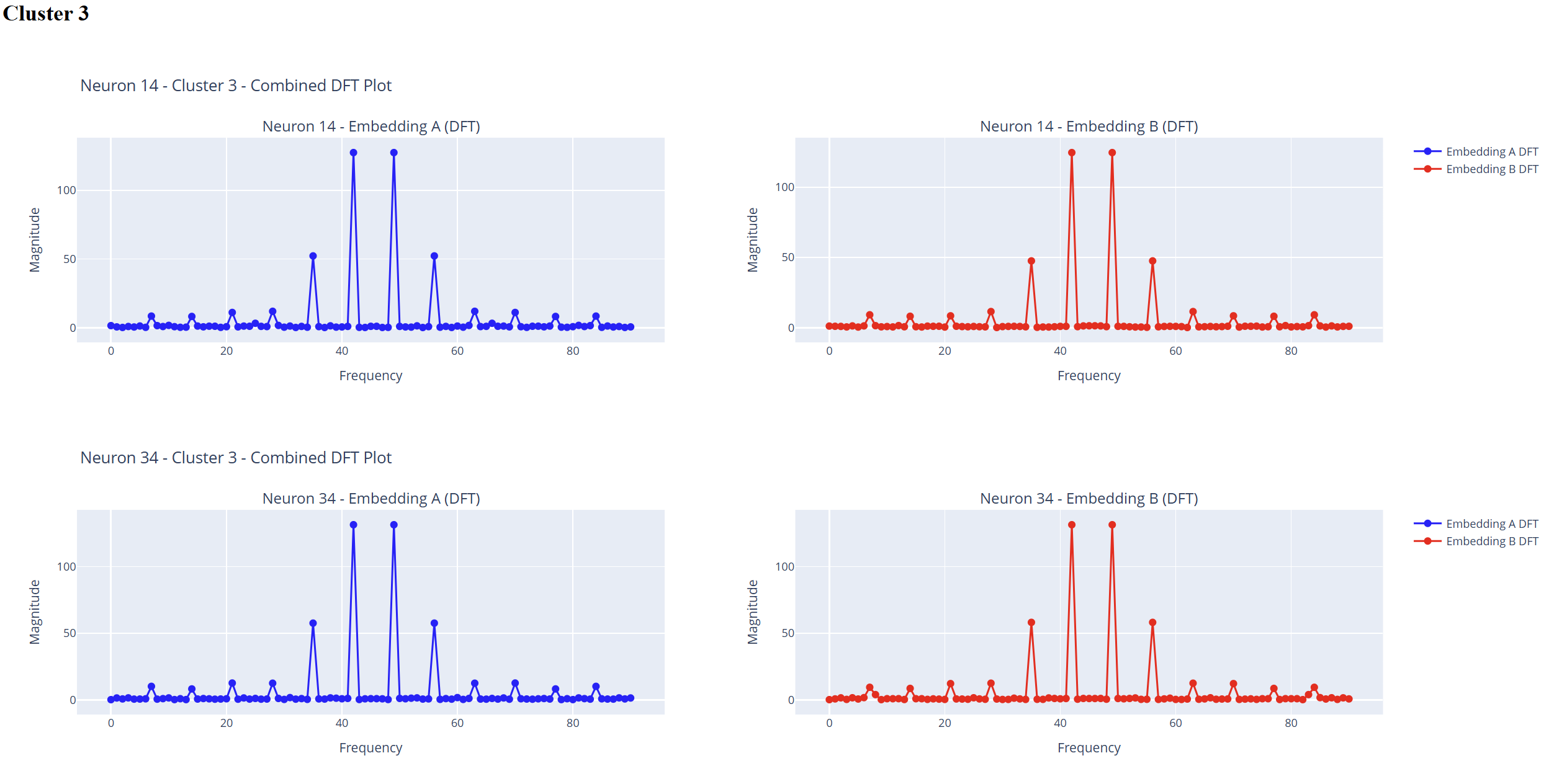}
    \caption{This shows the DFT's of the preactivations of the fine-tuning neurons seen in Fig. \ref{appfig:fine-tuning-neuron}. The x-axis is the frequency (from 0-90 because this is $(a+b)\mod 91$. The y-axis shows that the representations contributing are $42, 35, 28, 21, 14, 7$ in descending order. Note the DFT is symmetric about its midpoint so values after 45 contain the same information as the values up to 45.}
    \label{appfig:fine-tuning-neuron-DFT}
\end{figure}

\begin{figure}[ht]
    \centering
    \includegraphics[width=1.0\textwidth]{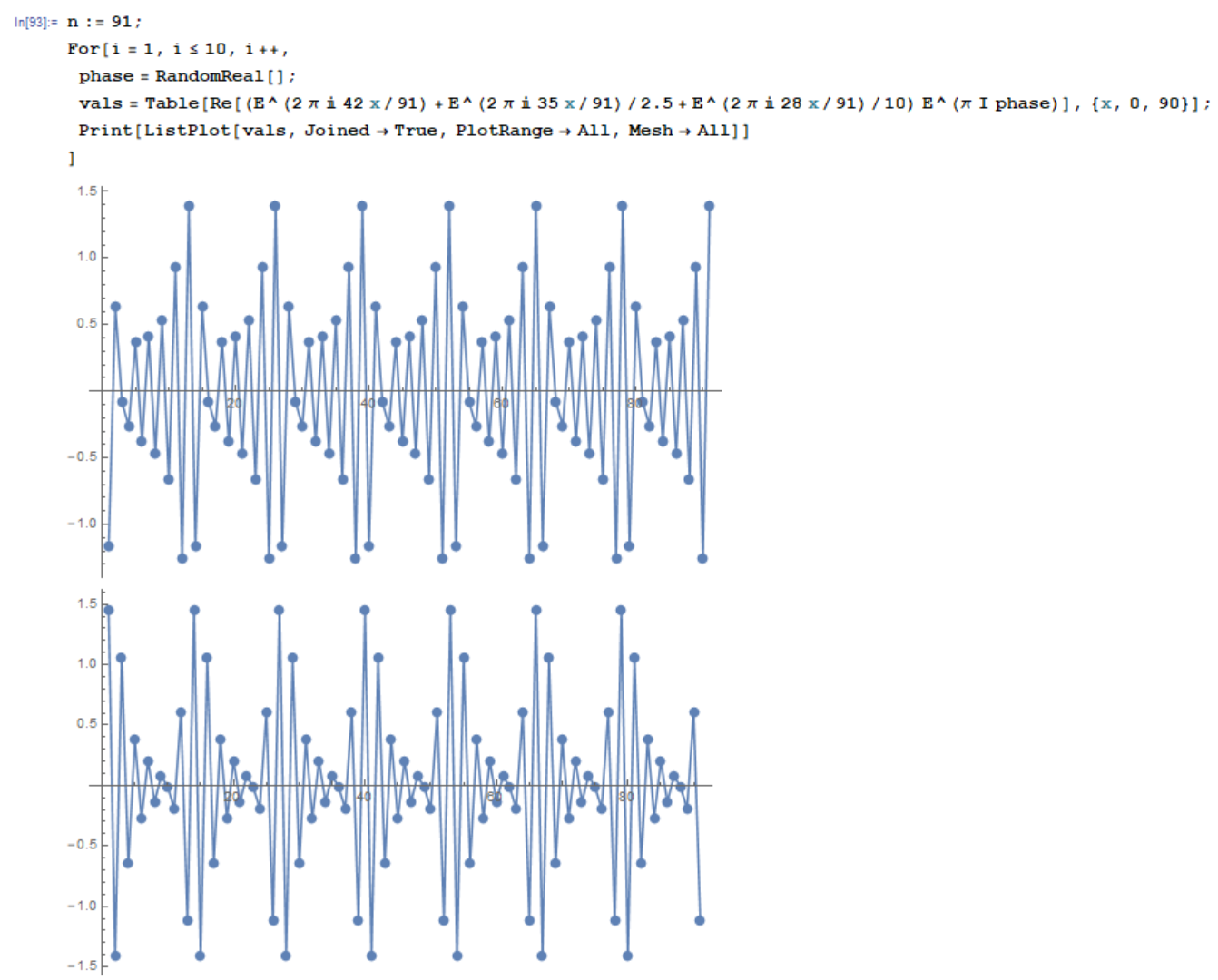}
    \caption{Constructing a fine-tuning neuron. This diagram illustrates the step-by-step process of constructing a fine-tuning neuron, highlighting that it is a linear combination of representations.}
    \label{fig:constructing-fine-tuning-neuron}
\end{figure}

\FloatBarrier

\subsection{Histograms of frequency learned counts for simple and fine-tuning neurons}
\label{app:histograms-fine-tuning}

\begin{figure}[htbp]
    \centering
    \subfigure{%
        \raisebox{0.3em}{(a)}
        \includegraphics[width=0.84\textwidth]{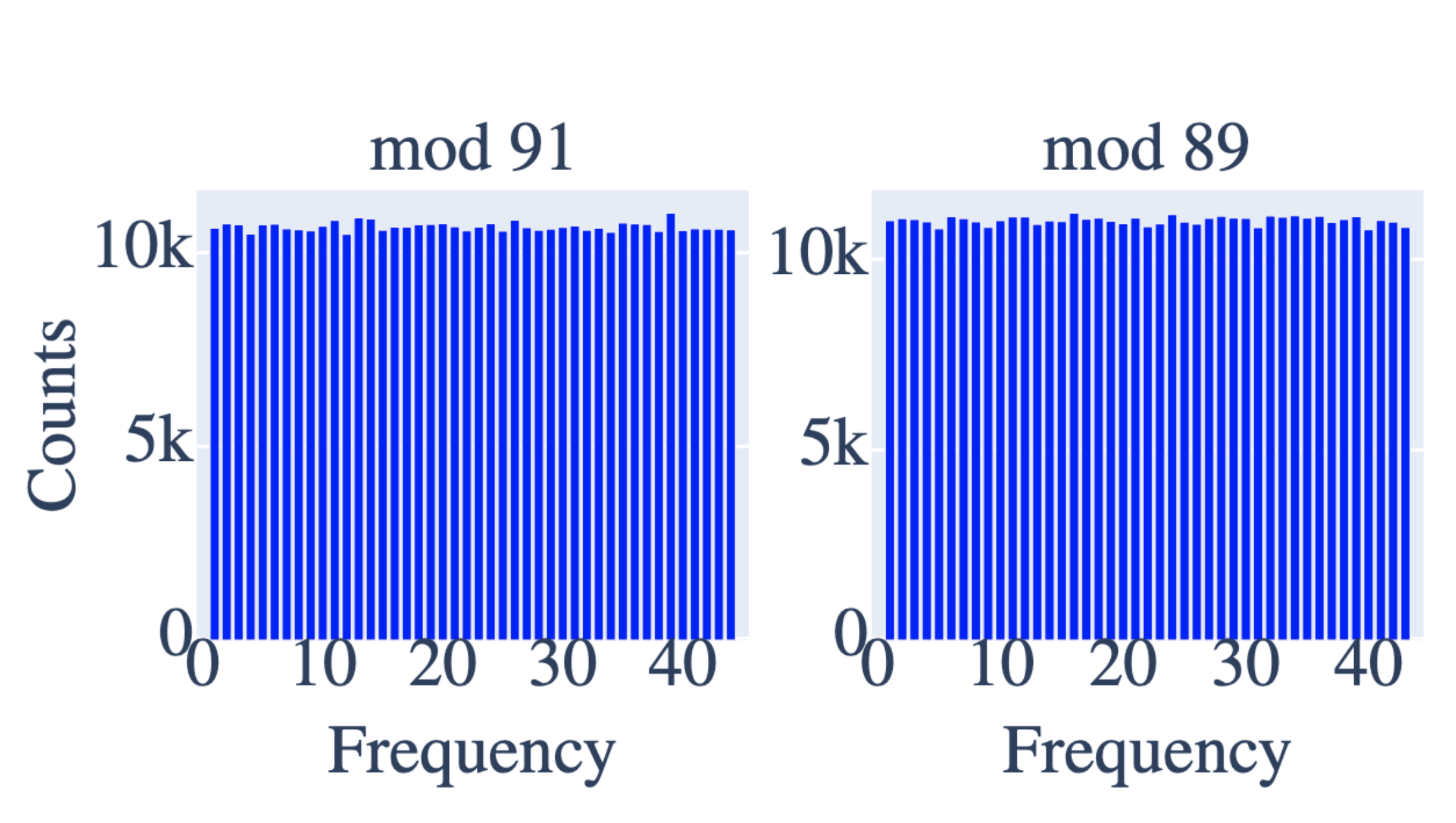}
        \label{fig:frequencies-histogram}
    }

    \vspace{-1.5em} 

    \subfigure{%
        \raisebox{0.3em}{(b)}
        \includegraphics[width=0.8\textwidth]{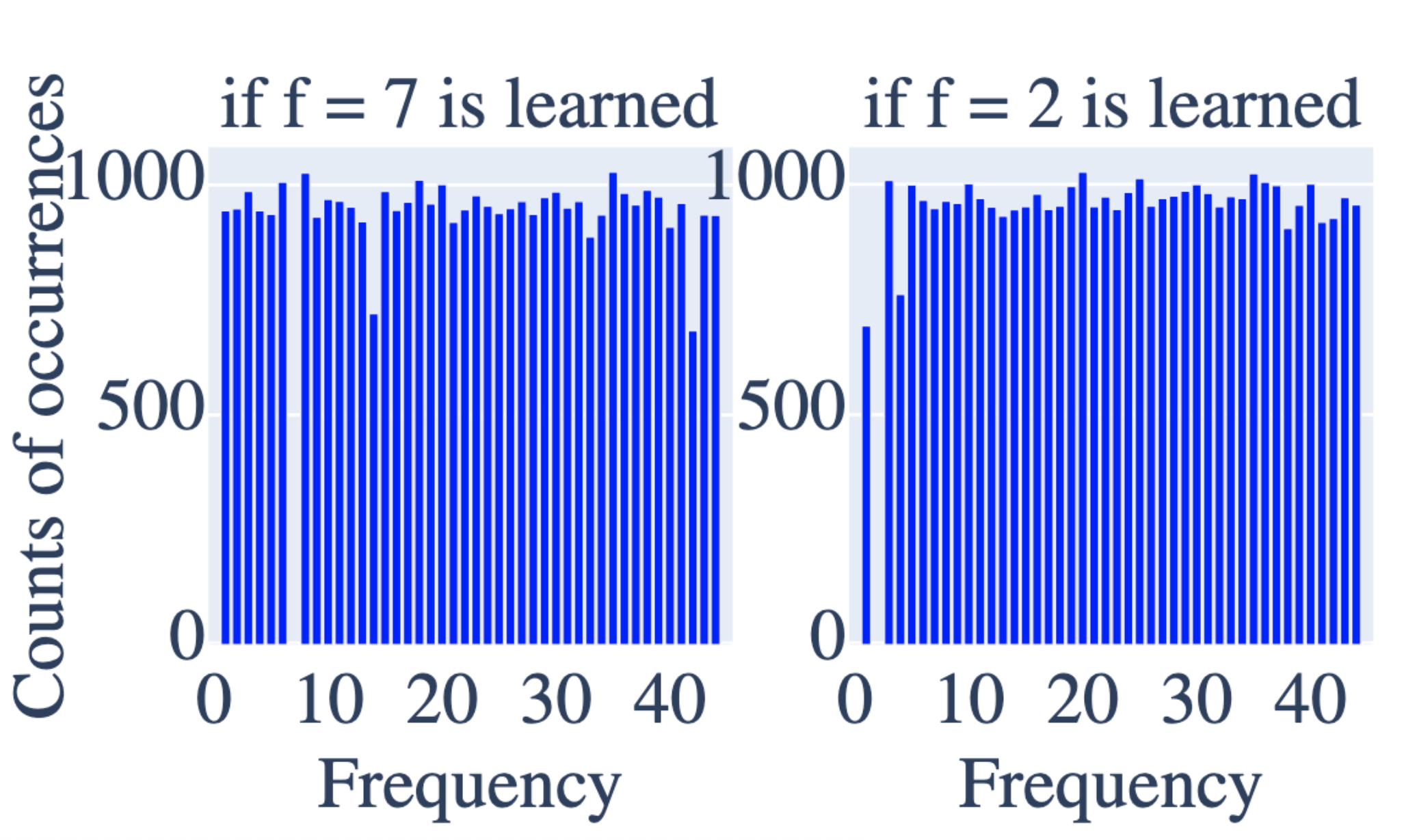}
        \label{fig:conditional-7-2-histogram}
    }

    \vspace{-0.5em} 

    \caption{(a) Histograms of frequencies found across 100k random seeds in MLPs mod 91 (factors $7$ and $13$) and mod 89 (prime) are both uniform. The fact both prime and composite numbers give uniform distributions is strong evidence that networks are learning the same thing in both problem settings. In fact, this observation -- that networks do not prefer prime factor frequencies (exact cosets) -- is what led us to define approximate cosets and identify the abstract approximate CRT algorithm, which is learned with both prime and composite moduli. (b) Conditional histograms of frequencies over 100k seeds, both mod 91. Left: if frequency 7 is found then neurons with $f=14$ or $f=43$ are less likely (note that $2\cdot 43\equiv -7\pmod{91}$). Right: if frequency 2 is present then frequencies 1 and 4 are less likely. The conditional histograms show that networks try to avoid learning frequencies with additive and subtractive relations. This is for a similar reason to why the CRT does not work unless all factors of $n$ are coprime -- they would intersect and boost the value of incorrect logits, substantially increasing the loss.}
    \label{fig:frequencies-conditional-histograms}
\end{figure}

Note that the next two histograms are created by recording frequencies with weights in the DFT in the range of (7.5, 30). This is not a sufficient way to always detect fine tuning neurons, and sometimes it will include simple neurons in its counts, however this is much more rare. If you consider the ability for neurons with preactivations of specific frequencies to contaminate other neurons frequencies slightly (because they may modify values in the embedding matrix by a small amount), you will see where this counting method can go awry. It is however the case that usually, the contamination coming from a different cluster of simple neurons is below 7.5. Thus, these plots should not be considered ``accurate'' and just approximations.

These plots are still useful to show the relative frequency of simple neurons vs. fine tuning neurons. The histogram of Frequencies found Fig. \ref{fig:frequencies-histogram} found a uniform distribution with each frequency showing up about 10k times. Removing the vast majority of contamination by filtering with 7.5 (usually the DFT magnitudes on other frequencies are 0 and if they aren't near 0 then they are less than 4 and there is a simple neuron making use of that frequency in a different cluster (\textit{i.e.} a simple neuron has one big spike with magnitude over 60 on that frequency). This gives us about 2200 fine-tuning neurons found with each frequency, including overcounting because fine-tuning neurons make use of linear combinations of representations and thus their DFT usually has three or more values in the range (7.5, 30). Thus the histograms of frequencies associated with fine-tuning neurons are upper bounds on the number of clusters that are identified across 100k random seeds to be fine-tuning neurons. Around 25 percent of runs have fine-tuning neurons in them, but we aren't sure how hyperparameter settings affect this.

\begin{figure}[t]
  \centering
  \includegraphics[width=1.0\textwidth]{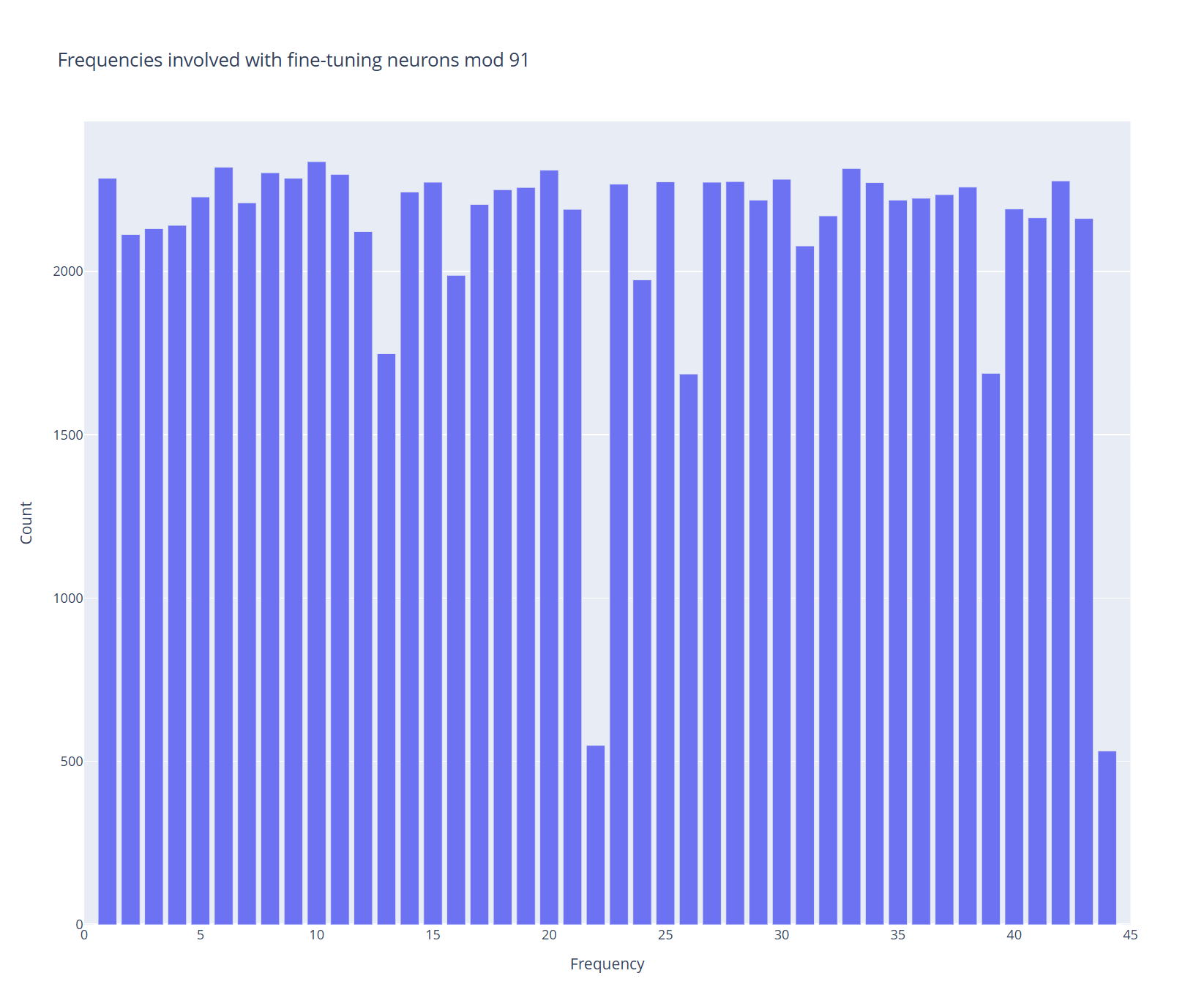}
  \caption[Fine‑tuning frequencies mod 91]{Histogram of frequencies (0–45) associated with fine‑tuning neurons over 100k random seeds for modulus91. Note that frequencies 22 and 44 are least common, and 13, 26, 39 also appear less frequently, giving a case where the neural net is less likely to find some prime factors. It makes sense that neural networks won't always discover the prime factors or they'd be solving prime factorization.}
  \label{appfig:histogram-of-frequencies-involved-w-finetuning-91}
\end{figure}

\begin{figure}[t]
  \centering
  \includegraphics[width=1.0\textwidth]{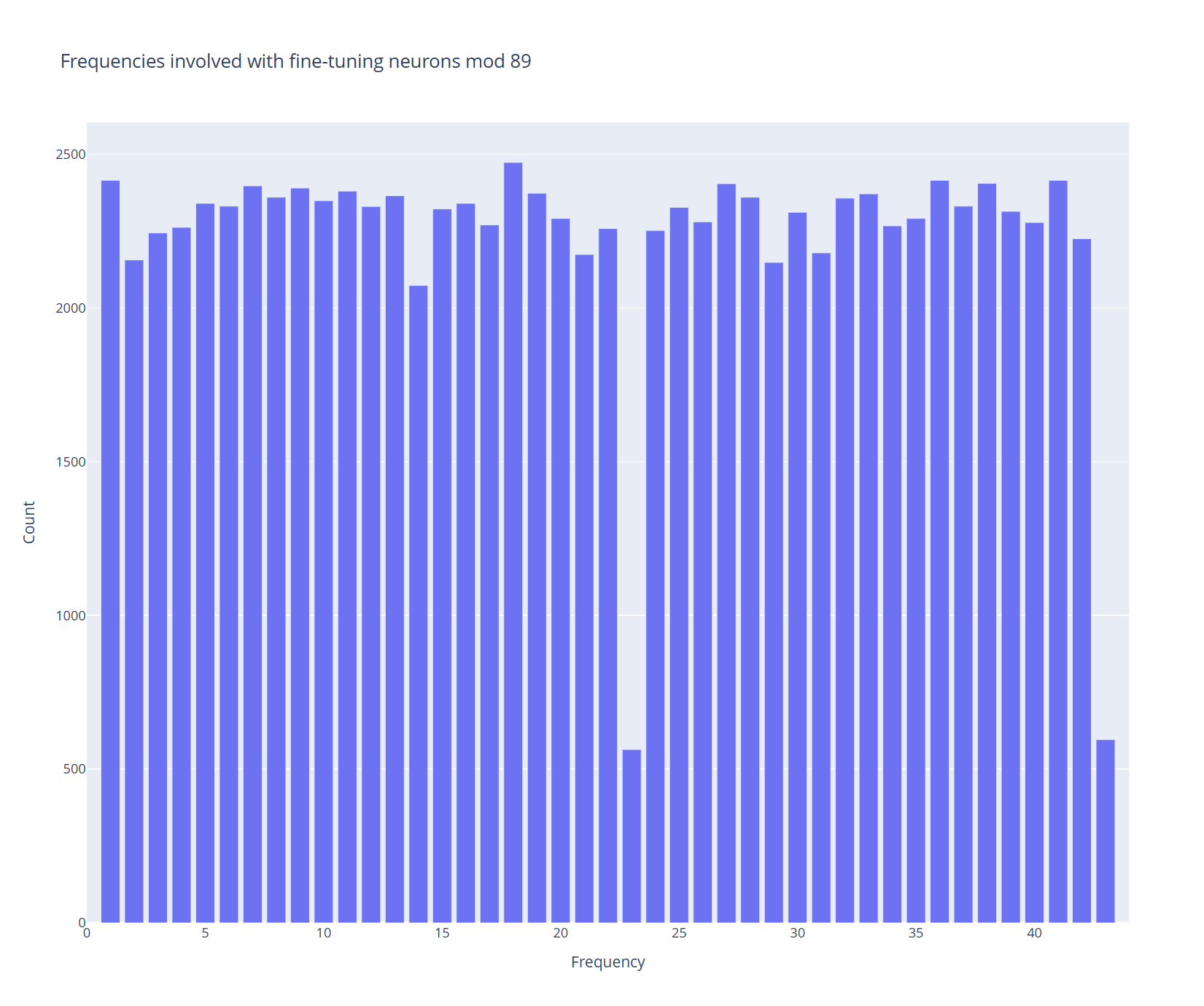}
  \caption[Fine‑tuning frequencies mod 89]{Histogram of frequencies (0–44) associated with fine‑tuning neurons over 100k random seeds for modulus 89. Note that frequencies 23 and 43 are the least common.}
  \label{appfig:histogram-of-frequencies-involved-w-finetuning-89}
\end{figure}

\FloatBarrier

\subsection{Fine-tuning neurons like additive and subtractive relations}
\label{app:fine-tuning-additive}

\begin{figure}[h]
    \centering
    \subfigure[Fine‐tuning Neuron Additive Relations given 15 is a frequency; $(a+b)\bmod 91$.]{
        \includegraphics[width=\textwidth]{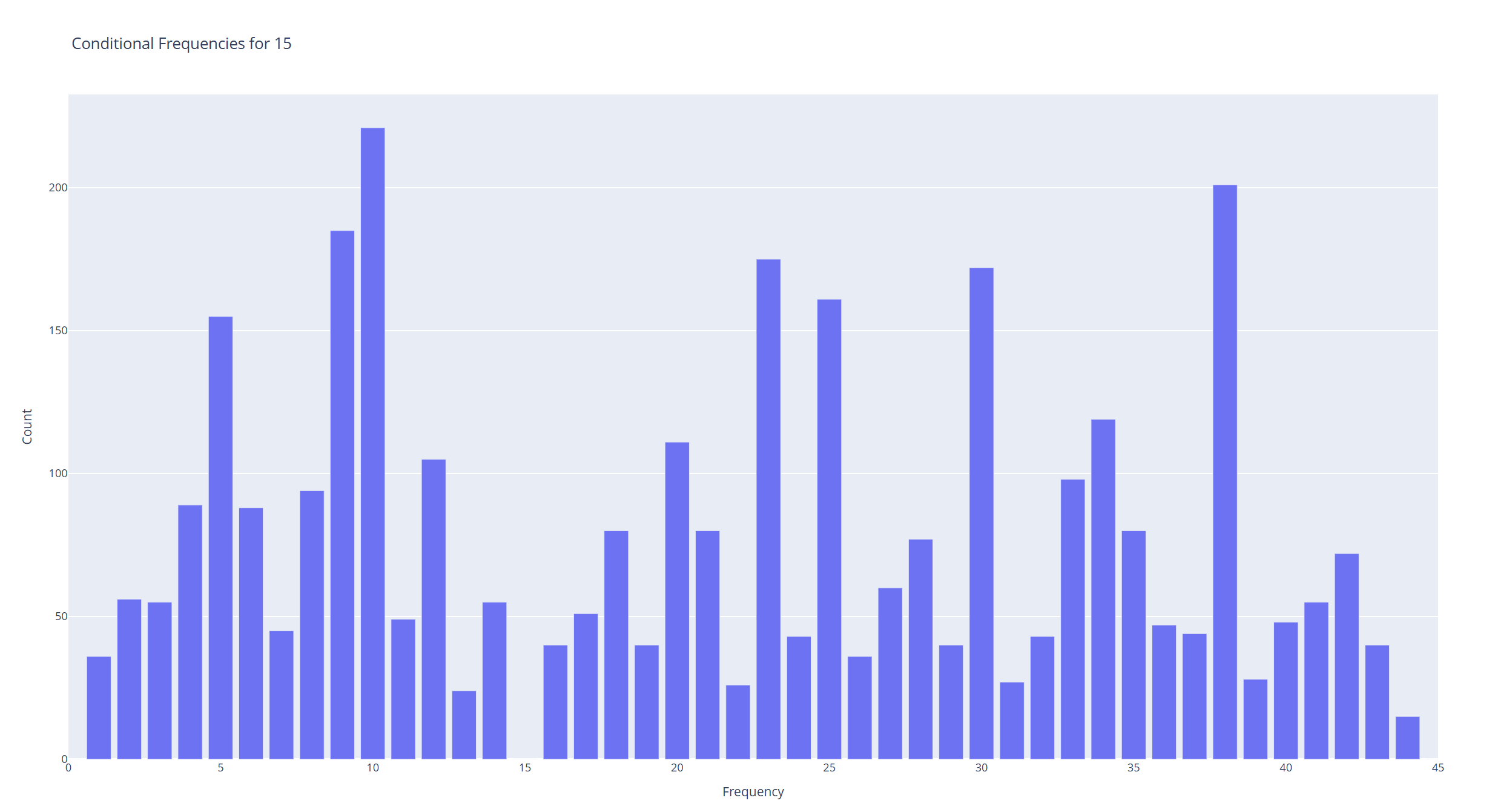}
        \label{fig:fine_tuning_neuron_15}
    }\\[1em] 
    \subfigure[Fine‐tuning Neuron Additive Relations given 7 is a frequency; $(a+b)\bmod 91$.]{
        \includegraphics[width=\textwidth]{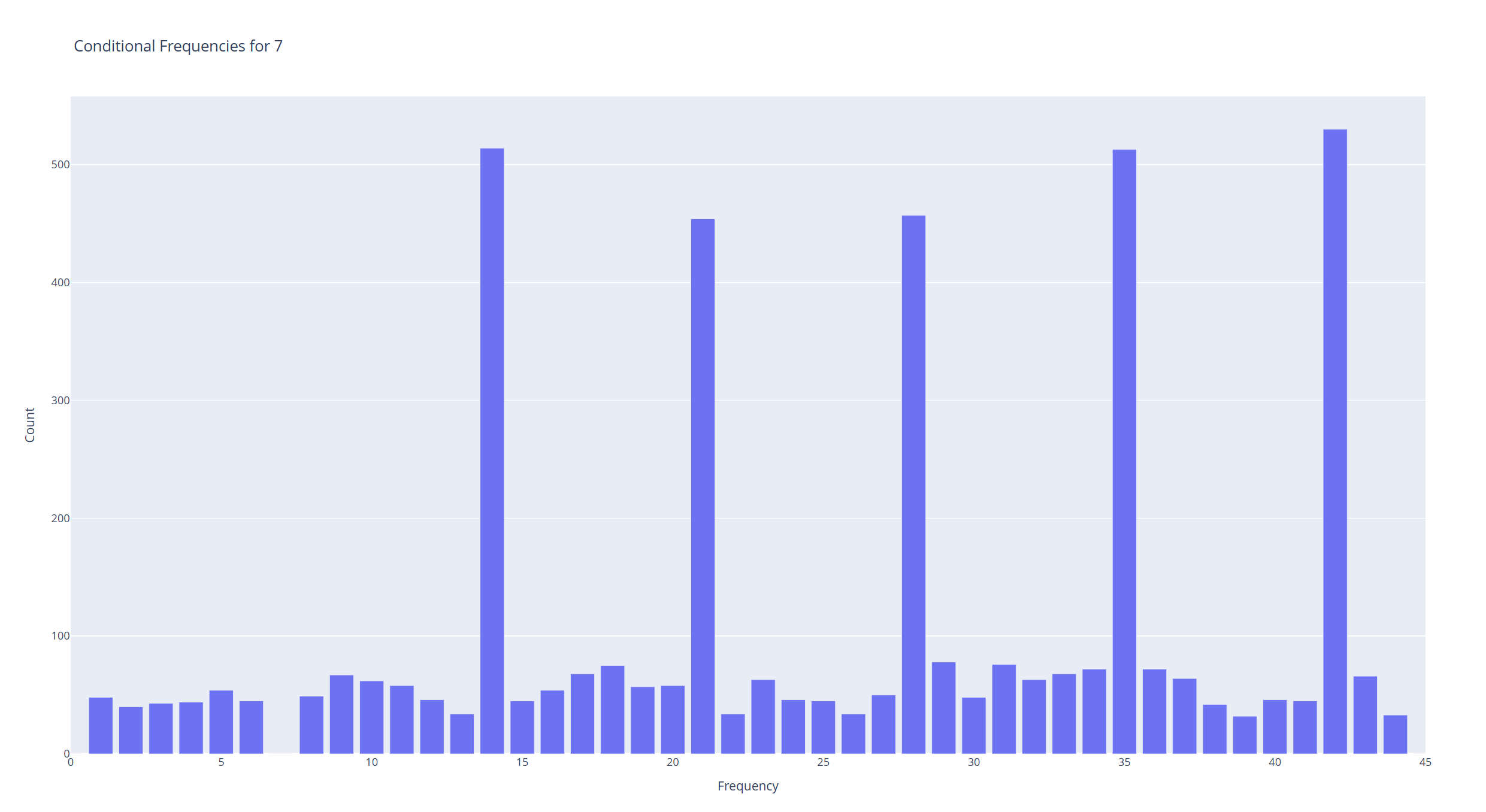}
        \label{fig:fine_tuning_neuron}
    }
    \caption{Fine‐tuning neuron additive relations for two different cases. If a neuron with frequency 15 is learned, frequencies that are multiples of 5 are more likely to be found. The same applies to 7, which is a prime factor of 91, the moduli.}
    \label{appfig:stacked_fine_tuning_neurons}
\end{figure}

\FloatBarrier

\subsection{Histograms of counts of frequencies being learned in mod 59 and mod 66 across varying depths in the pizza and clock transformers as well as MLPs (with 1 embedding layer)}
\label{app:neuron-length-histograms}

Here we show histograms counting the frequencies learned and the lengths of the average number of neurons involved in a frequency cluster given frequency $f$ was learned over 500 seeds for each architecture.

We check the goodness of fit as a function of frequency, and find that on mod 66 with prime factors 2, 3 and 11, the $R^2$ value is much closer to 1.0 for MLPs when frequencies 22 or 33 are learned $(2 \times 11)$ and $(3 \times 11)$. Indeed, if the network learns 22 or 33, we see that the length (the number of neurons with that frequency) is substantially lower than if it learns other frequencies. The $R^2$ is also higher for MLPs if it learns $(2 \times 6)$, however it's not as high in the previous cases, and doesn't have the change in the number of neurons that the other two cases have. Overall, this makes sense as in this situation, it can learn the exact cosets instead of needing to learn approximate cosets. Investigation into which frequencies require less neurons is an interesting subject for future study; why is it not all cosets? 

\begin{figure}
    \centering
    \includegraphics[width=1.0\linewidth]{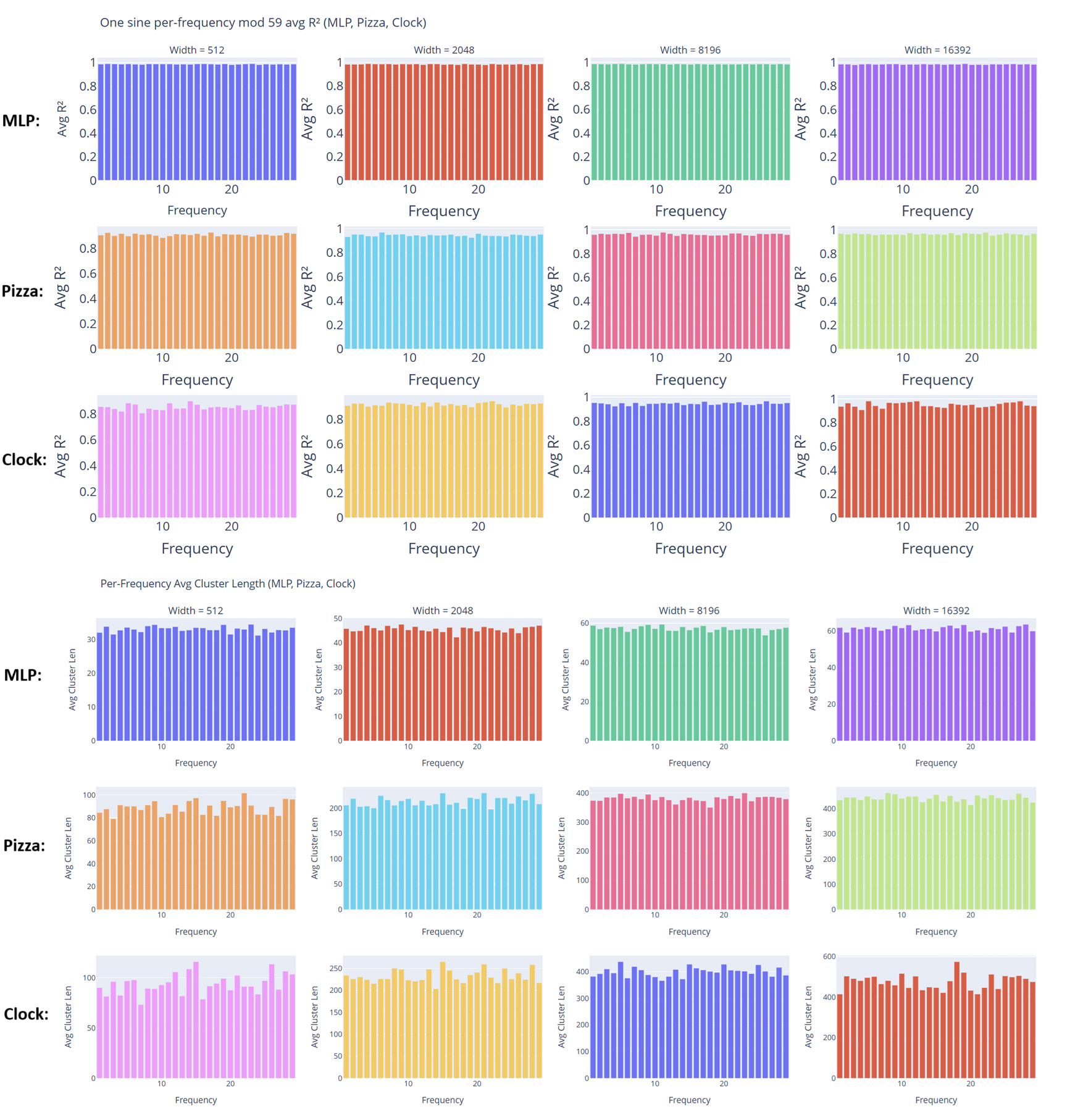}
    \caption{$(a+b\mod 59)$: The first three rows show the histograms of learned frequencies and the bottom three rows show histograms of the average length of a cluster, i.e. number of neurons in the cluster.}
    \label{appfig:histogram-mod-59-freq-lengths-distributions}
\end{figure}

\begin{figure}
    \centering
    \includegraphics[width=1.0\linewidth]{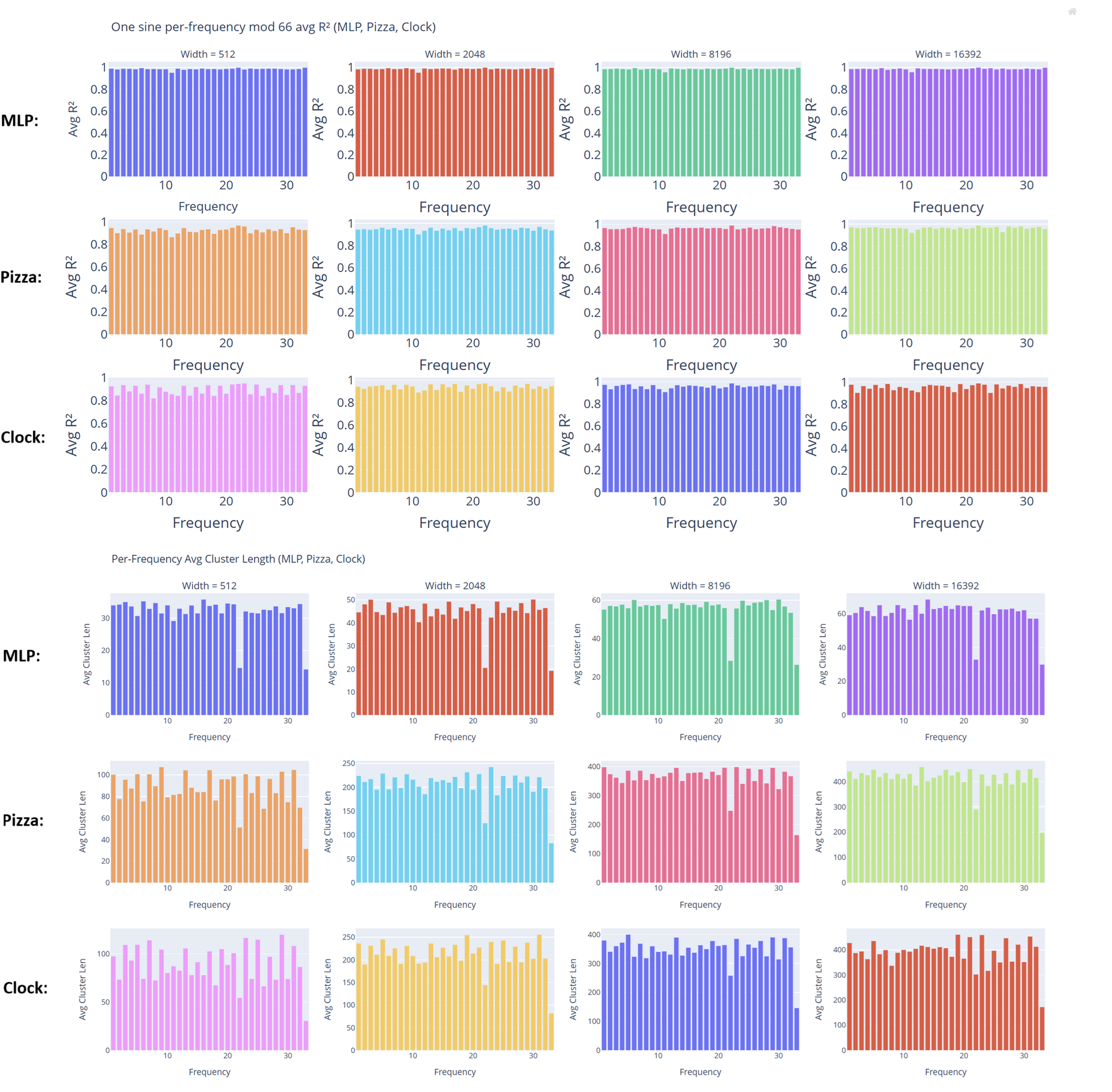}
    \caption{$(a+b\mod 66)$: The first three rows show the histograms of learned frequencies and the bottom three rows show histograms of the average length of a cluster, i.e. number of neurons in the cluster; note that learning precise cosets results in less neurons being required.}
    \label{appfig:histogram-mod-66-freq-lengths-distributions}
\end{figure}

Some cosets are more likely to be learned than others. 

\FloatBarrier

\subsection{Noise and ablation studies}
\label{app:noise-ablation}

In this section, we take the clusters from random seed 133 and we randomly inject multiplicative scaling noise into every weight attached to neurons in the cluster. We do this by multiplying the weight by $e^{s}$, $s \sim \mathcal{N}(0, \sigma)$, for various $\sigma$ on the x-axis in Fig. \ref{fig:multiplicative-noise-loss}

\begin{figure}[h]
    \centering
    \subfigure[ Multiplicative Noise injected into every weight of every neuron in a cluster from a normal distribution with std dev \(\sigma\).]{
        \includegraphics[width=\textwidth]{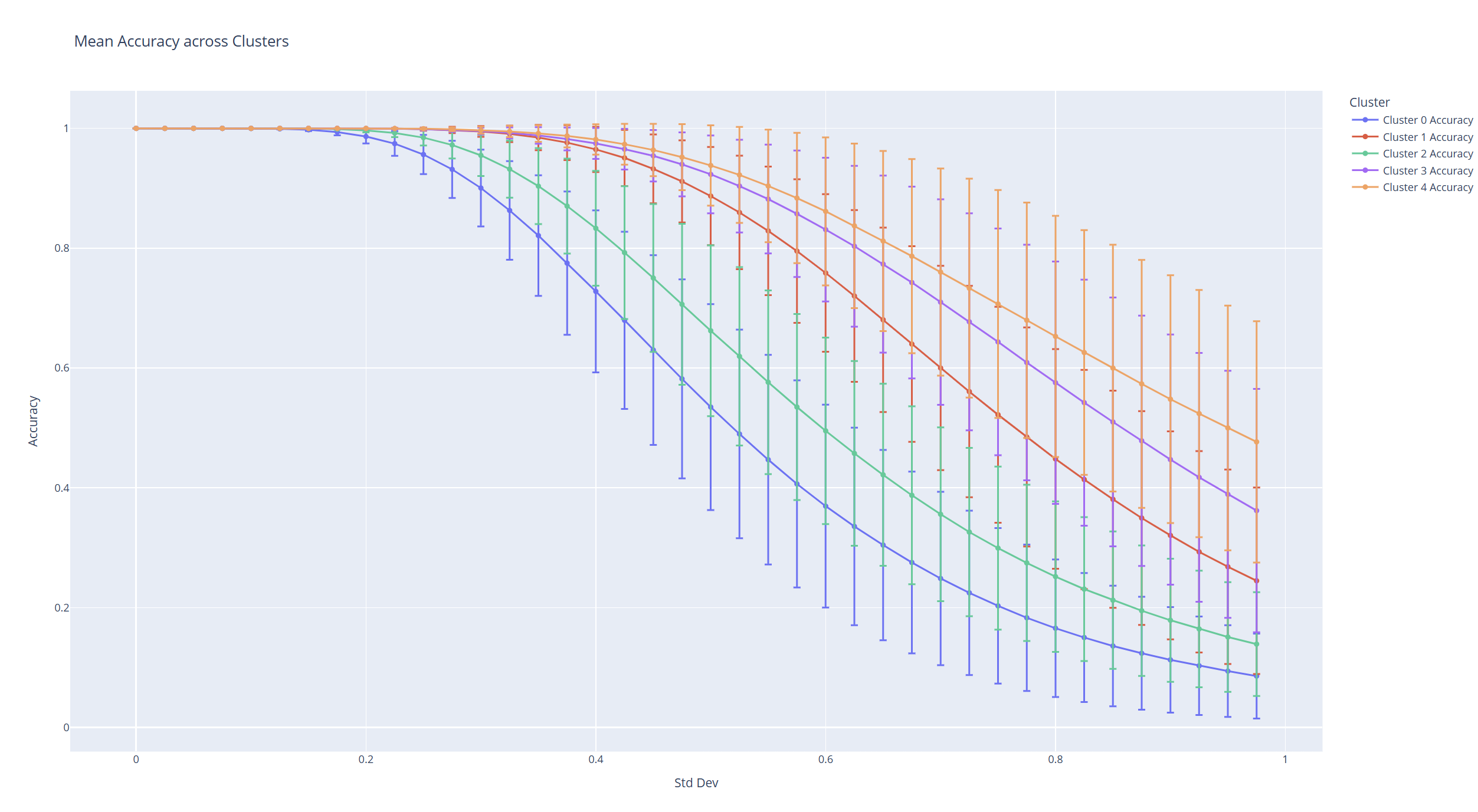}
        \label{fig:multiplicative-noise}
    }

    \vspace{1em}

    \subfigure[ Effect of Multiplicative Noise on the loss function.]{
        \includegraphics[width=1.0\textwidth]{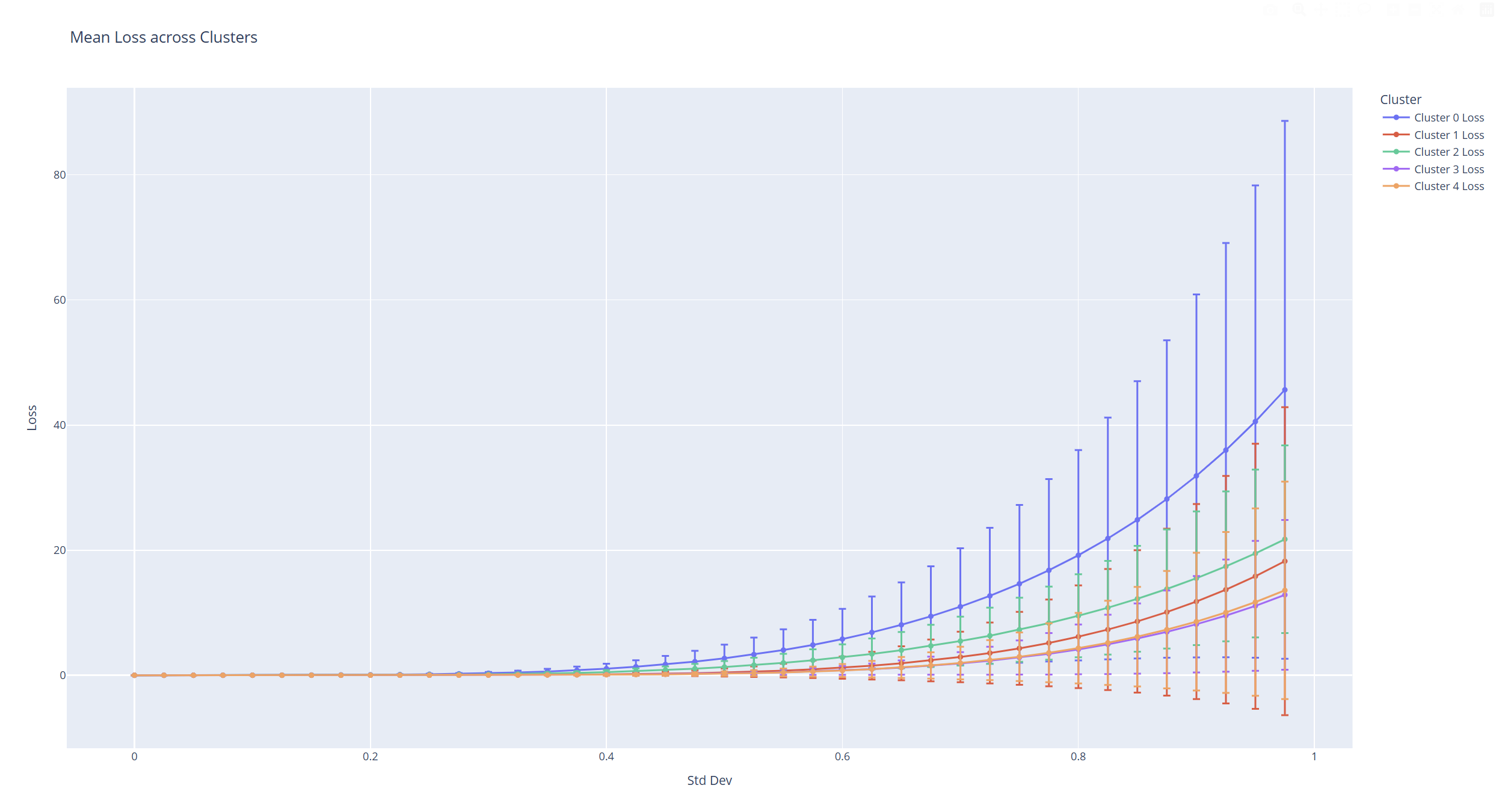}
        \label{fig:multiplicative-noise-loss}
    }

    \caption{Neural network robustness to injected multiplicative noise. The loss remains stable even with a std dev of 0.225. Note cluster 3 and 4 are composed of four fine tuning neurons each. Every other cluster is composed of simple neurons.}
    \label{fig:combined-figures-noise}
\end{figure}

\begin{figure}[h]
    \centering
    \subfigure[ Ablation study showing the impact on the loss function when removing neurons from specific clusters.]{
        \includegraphics[width=1.0\textwidth]{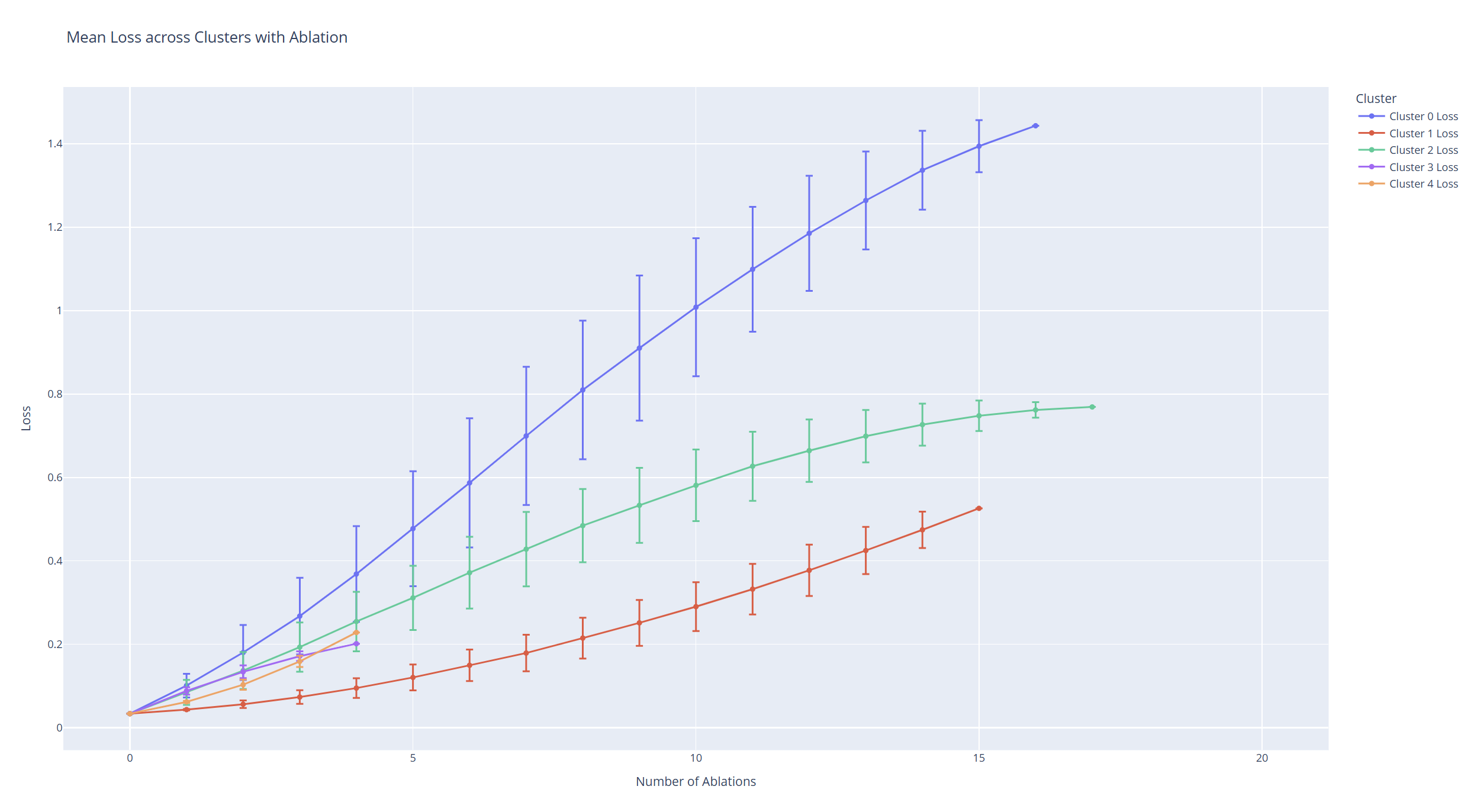}
        \label{fig:ablation-loss}
    }

    \vspace{1em} 

    \subfigure[ Impact of neuron removal on accuracy.]{
        \includegraphics[width=0.98\textwidth]{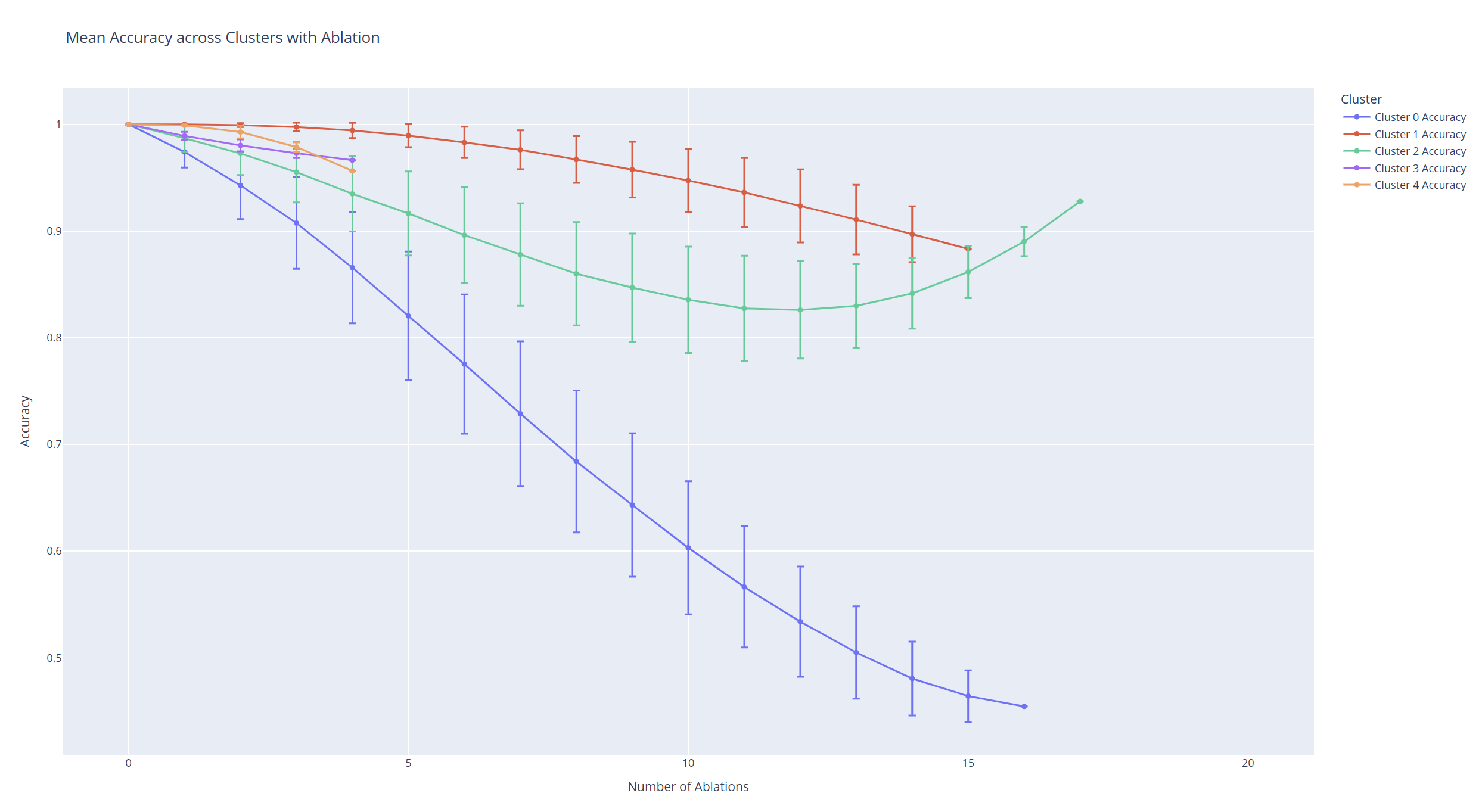}
        \label{fig:ablation-acc}
    }

    \caption{Ablation study results. Loss and accuracy metrics highlight the impact of randomly removing neurons ``number of ablations'' number of neurons from a cluster. Note cluster 3 and 4 are composed of four fine tuning neurons each and deletion of every neuron in the cluster doesn't affect the accuracy by much. Every other cluster is composed of simple neurons.}
    \label{fig:combined-figures-ablation}
\end{figure}
\newpage

\FloatBarrier

\subsection{Number of frequencies}
\label{app:num-freqs}
\subsubsection{More overparameterized means less frequencies}
Scaling the number of neurons in the layer achieves experimental results within $\mathcal{O}(\log(n))$.

Experiments in scaling the number of neurons show that the average number of frequencies found can be shifted based on hyperparameters, but this is something we don't fully understand at this point in time, but as our results in the main body show--it is still logarithmic. See Fig. \ref{appfig:num_clusters_scalling_width}.

\begin{figure}[h]
    \centering
    \includegraphics[width=\textwidth]{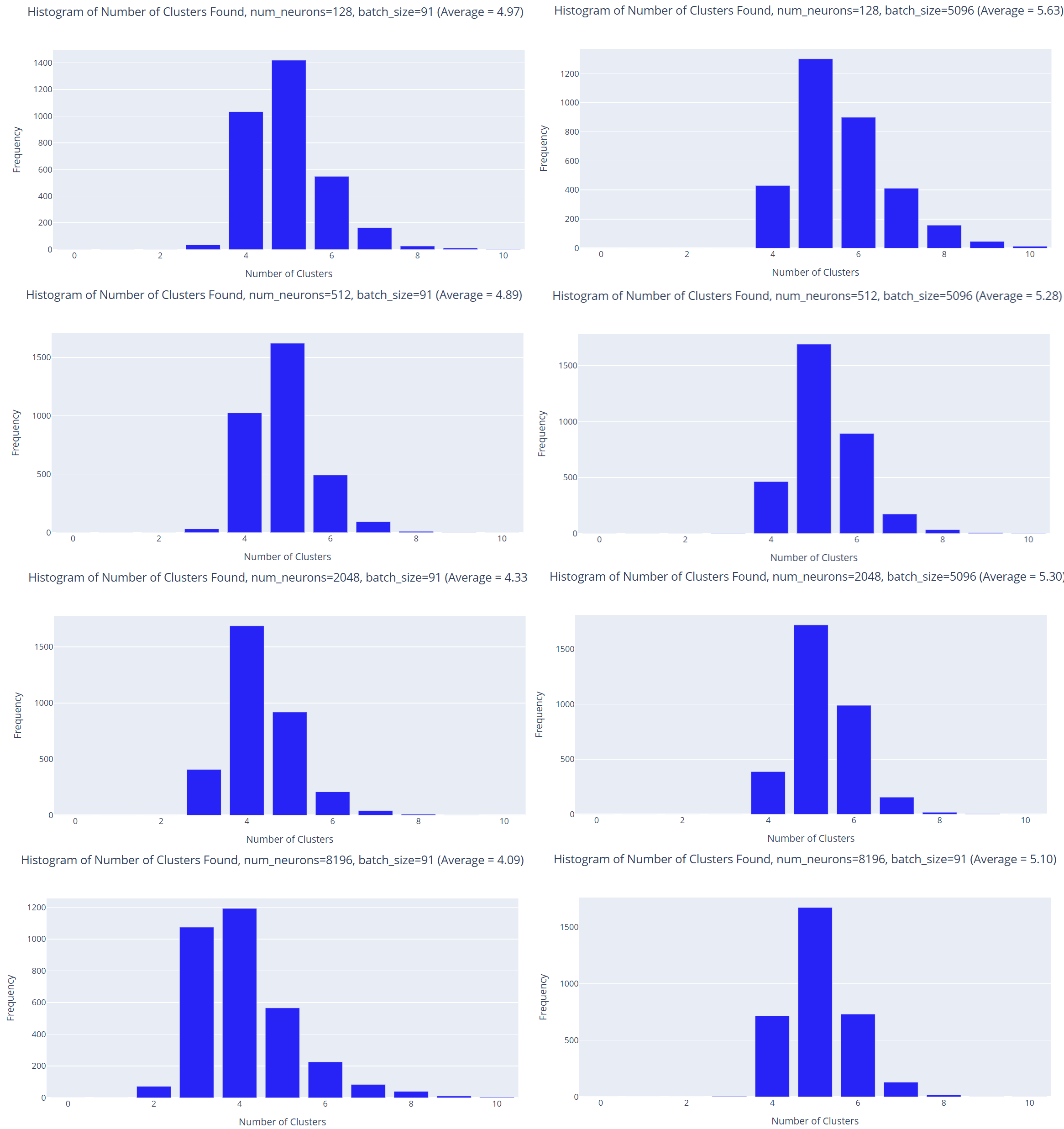}
    \caption{This figure shows that the scaling is always \(\mathcal{O}(\log(n))\), even as the number of neurons is increased from 128, to 512, to 2048, to 8196. The first column is batch\_size = 91, and the second column is batch\_size = 5096, \textit{i.e.}, the entire training set size. All results are upper bounded by \(\mathcal{O}(\log(n))\).}
    \label{appfig:num_clusters_scalling_width}
\end{figure}

\subsubsection{Adding depth also means less frequencies are learned}
In Figure \ref{appfig:error-correcting-code-by-stacking-less-frequencies} we see that adding layers results in fewer frequencies being learned.

\begin{figure}
    \centering
    \includegraphics[width=0.69\textwidth]{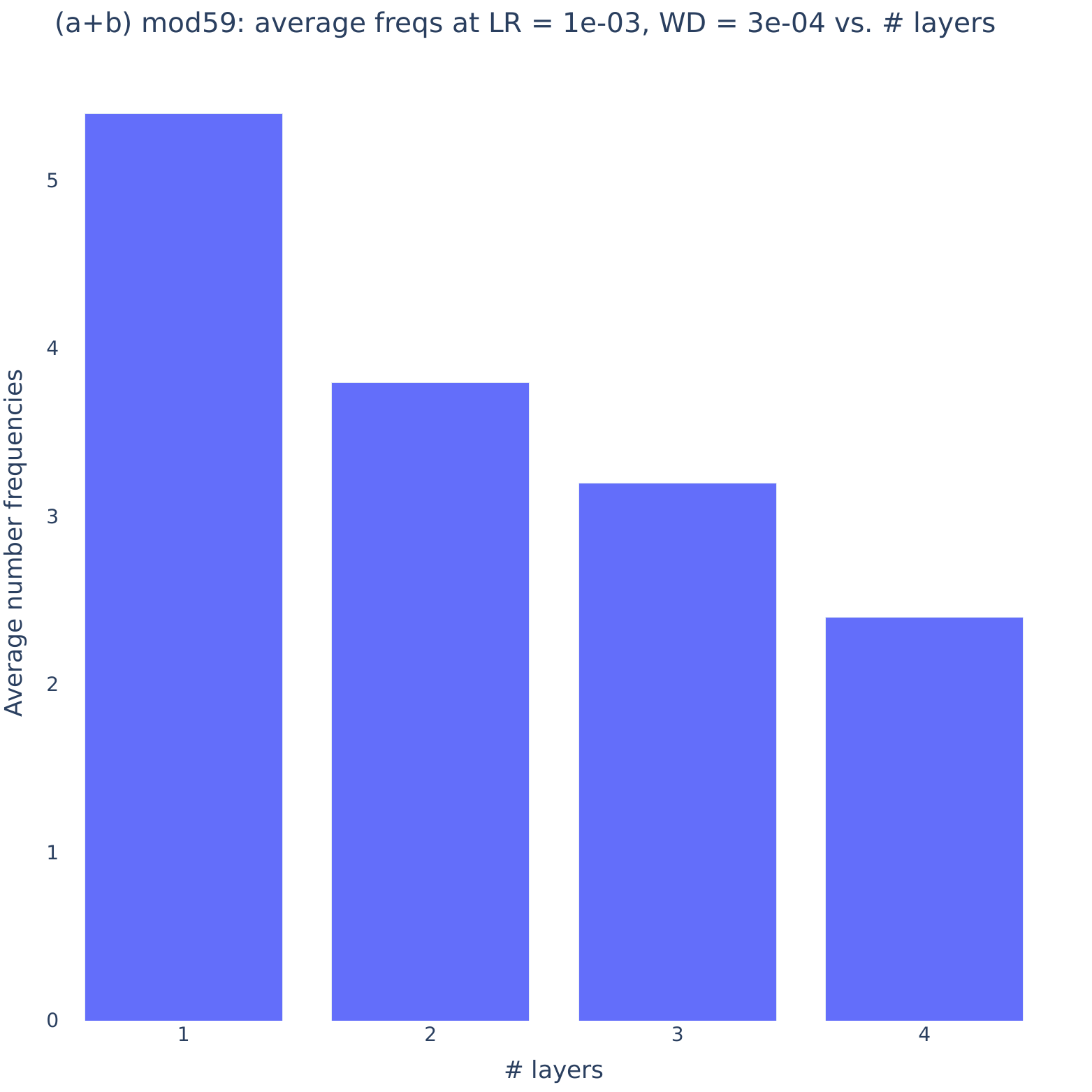}
    \caption{Adding depth causes the network to learn slightly less frequencies.}
    \label{appfig:error-correcting-code-by-stacking-less-frequencies}
\end{figure}

Indeed, instead of just looking at this for one hyperparameter combination, we can check it out for many like we did in figures in the main paper (see Figure \ref{appfig:layers-less-frequencies}). 

\begin{figure}
    \centering
    \includegraphics[width=1.0\textwidth]{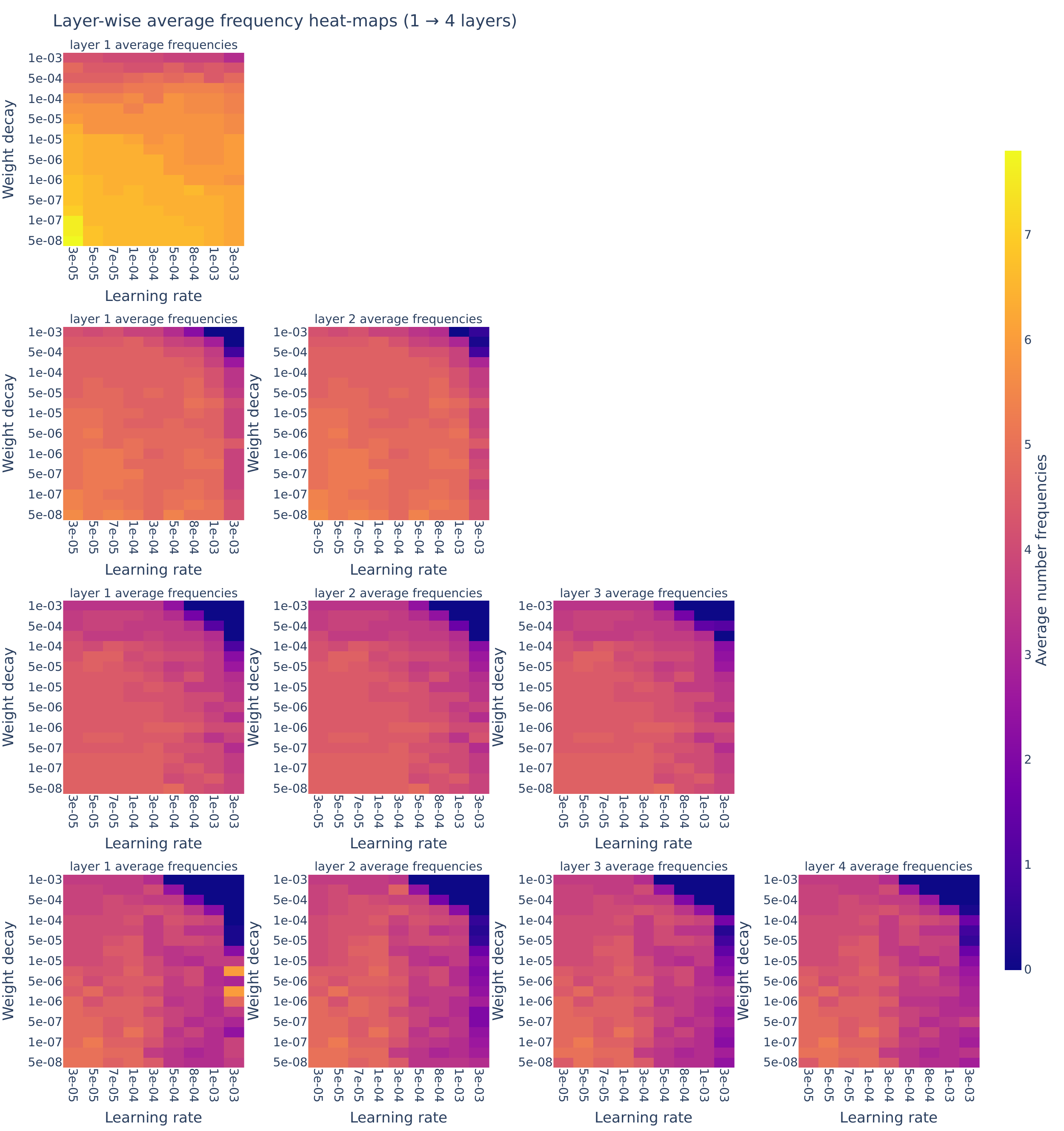}
    \caption{The number of frequencies found in the network decays as we add layers across almost all hyperparameter combinations.}
    \label{appfig:layers-less-frequencies}
\end{figure}

\subsubsection{Deep networks learn error correcting codes: empirical results}
\label{appsec:error-correction}

\begin{figure}
    \centering
    \includegraphics[width=1.0\linewidth]{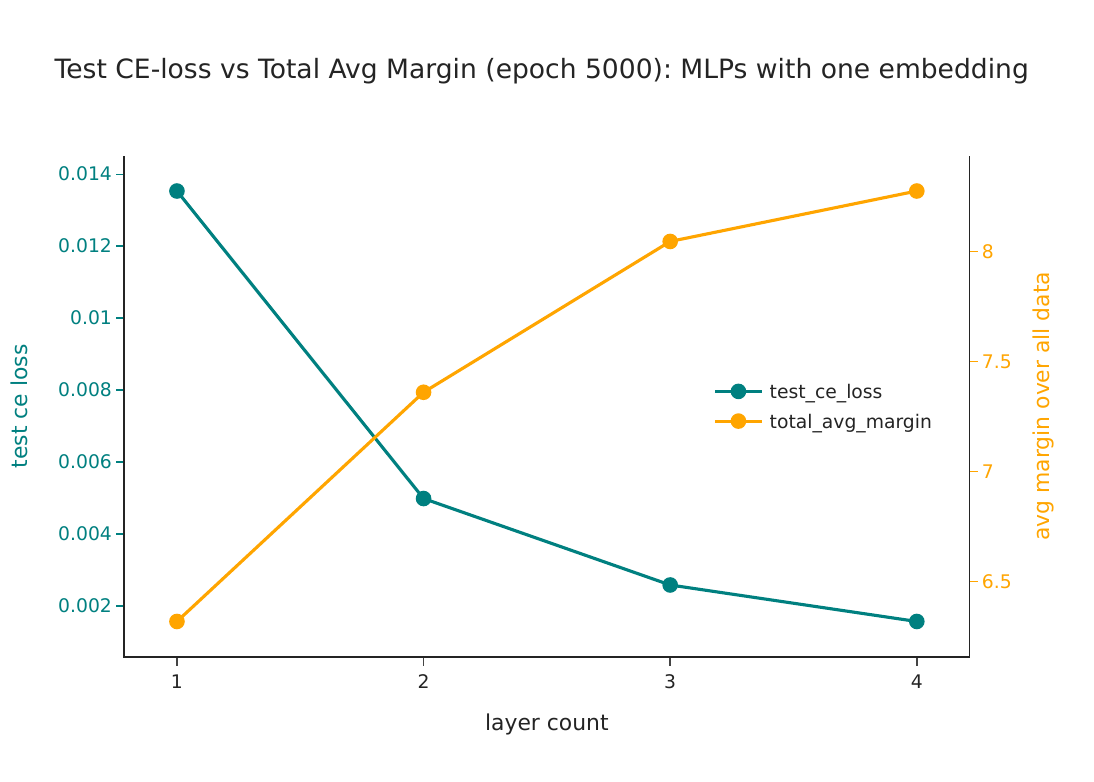}
    \caption{The loss and margins improve as layers are added, yet figure \ref{appfig:layers-less-frequencies} shows that less frequencies are learned. This can be explained by the presence of first order sinusoidal neurons in layers after layer 1.}
    \label{appfig:ce_vs_margin}
\end{figure}

\textbf{Deep networks learn error correcting codes.} This discussion was omitted from the main paper's Discussion due to space constraints, but we believe it's interesting. 

Our result finding that in deep networks, layers after 1 keep around a \% of first-order sinusoids (Figure \ref{fig:percent-sin-cos}) can be interpreted as the network constructing an error correcting code. We see in Figure \ref{appfig:layers-less-frequencies} that deeper networks learn less frequencies. While this is true, they simultaneously achieve lower cross entropy loss (and better margins) with less frequencies than shallower networks \ref{appfig:ce_vs_margin}. This is because the first order (simple) neurons in layer 2, compute the exact same coset computation as simple neurons performed in layer 1. Resultantly, the second order cosine neurons that store the correct answer as $\cos(\frac{2\pi\cdot f(a+b-c)}{n})$, which is maximized at the correct answer $(a+b)\bmod n=c$, receive an additional linear combination of Theorem \ref{main:theorem}. This boosts the height of the correct answer (linearly in the number of layers) as a function of number of distinct frequencies. Furthermore, it boosts the height of incorrect logits at most $\mathcal{O}(\log(n))$. After softmax, the exponential difference between the correct logit and incorrect logits is thus amplified, and thus the softmax (cross entropy loss) is much lower. With 1 hidden layer it was $\mathcal{O}(n^{-\Omega(1)})$. In general, with $L$ the number of layers, we conjecture that after softmax, using $\mathcal{O}(\log(n))$ distinct frequencies will give closer to $\mathcal{O}(n^{-\Omega(L)})$ as the height of incorrect logits. This results from the network redundantly doing the same computation to ``error correct'' and thus reduce errors. 

\FloatBarrier

\subsection{Qualitative: equivariance of the cluster contributions to logits}
\label{app:equivariance}

Here you can see that the clusters of neurons are approximately equivariant to shifts in the inputs, \textit{i.e.} the cosets shift with the inputs. We show that if you shift (a,b) both by 2, the clusters shift by 4.

\begin{figure}[h]
    \centering
    \includegraphics[width=0.9\linewidth]{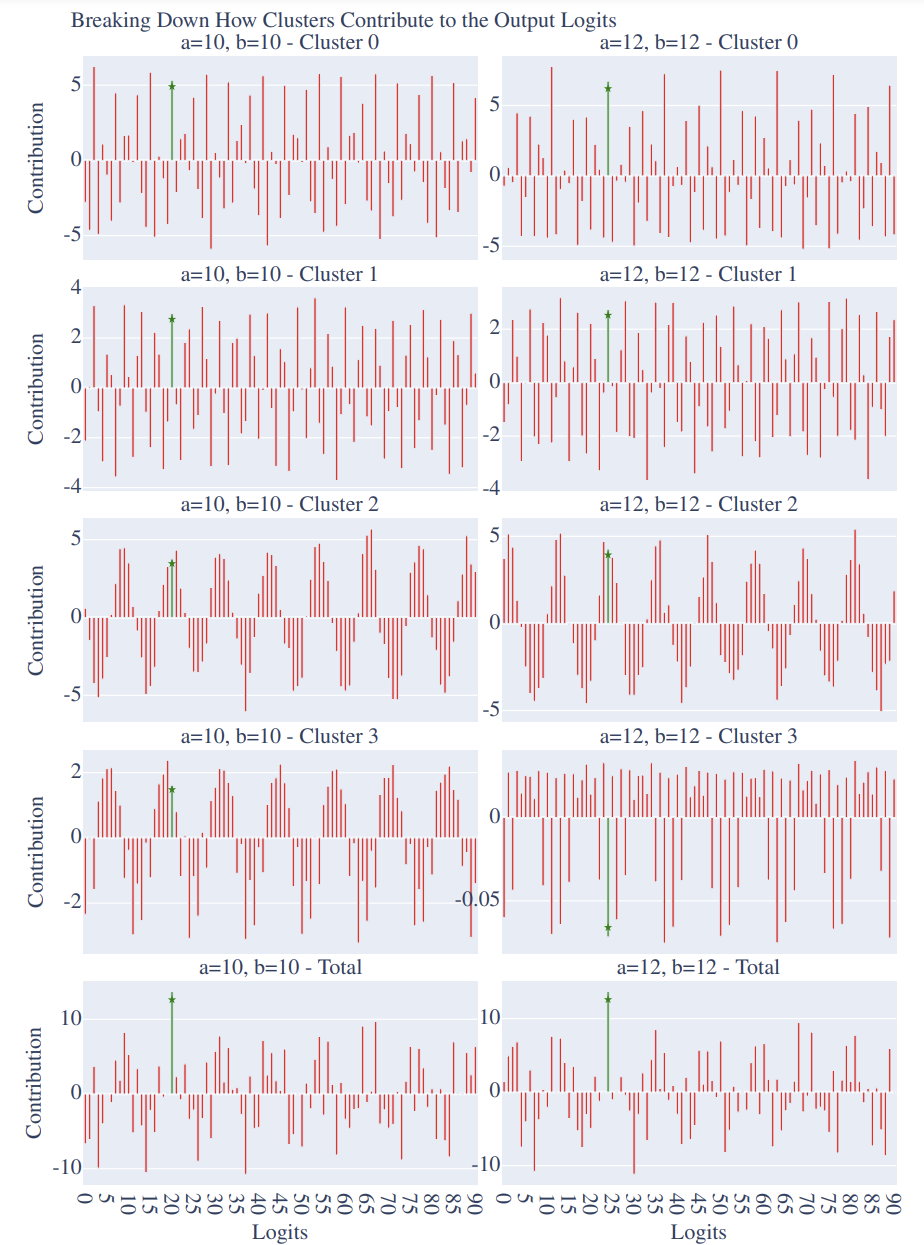}
    
    \caption{Clusters of neurons are approximately equivariant to shifts in the inputs, meaning coset clusters shift with the inputs. This suggests the network has learned cosets that it uses to intersect, via linear combinations, to perform the approximate CRT. This example demonstrates that the network did not learn a global minimum—e.g., Cluster 3 has only four neurons, limiting its expressivity and equivariance. Cluster frequencies: Cluster 0 (35, coset), Cluster 1 (25, approximate coset), Cluster 2 (8, approximate coset), and Cluster 3 (42, coset). This example uses the same random seed as the ablation study (Fig. \ref{fig:ablation-loss}), where Cluster 0 is the most active. This data is from an MLP.}
    
    \label{appfig:equivariant-cosets}
\end{figure}

\FloatBarrier

\subsubsection{Pizza model} \label{app:pizzas-output-on-cosets-too}

We take model A, specifically \texttt{model\_p99zdpze5l.pt}, from \cite{zhong2023the} and make Figure \ref{appfig:pizzas-output-on-cosets-too}, which shows that pizzas also output on approximate cosets and perform an approximate CRT. Note for example, that the output logits for the cluster with max freq = 15: has maximum activation along an approximate coset $\frac{59}{15} = 3.93$, and if the neuron activates strongly at $a$ then it also activates strongly at $a\pm 4$.

\begin{figure}[ht]
    \centering
    \includegraphics[width=\textwidth]{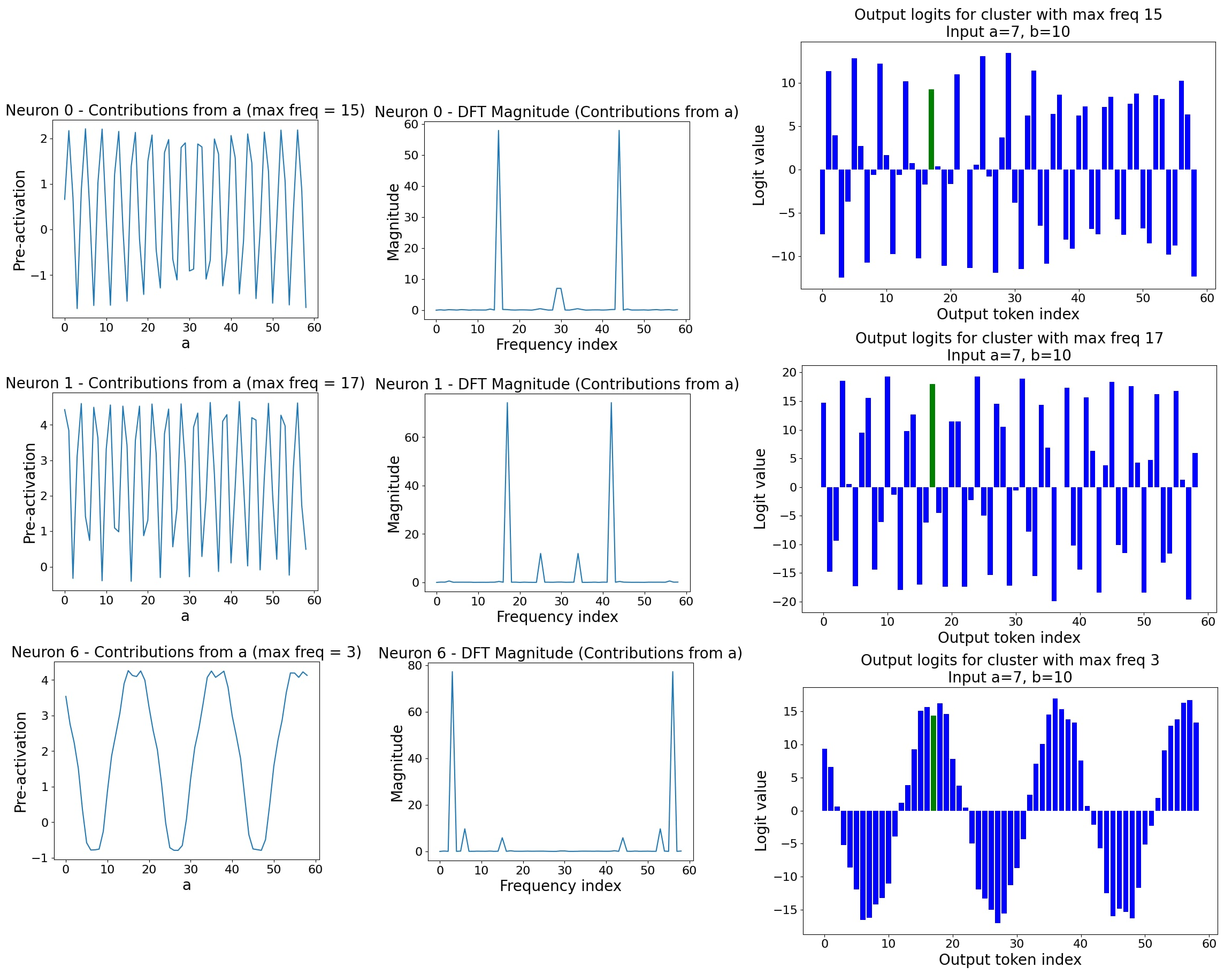}
    \caption{This figure shows three neurons and their DFT's, each from one of three clusters in model A (a pizza-transformer) from \cite{zhong2023the} for these experiments. Note that the pizza neurons (and clusters) are also implementing the abstract approximate Chinese Remainder Theorem algorithm, despite their low level differences with clocks.}
    \label{appfig:pizzas-output-on-cosets-too}
\end{figure}

Furthermore, consider that remapping the pizza neurons makes their behavior look almost identical to simple neurons when they are remapped, see Figure \ref{appfig:pizzas-remapped}.

\begin{figure}[ht]
    \centering
    \includegraphics[width=0.5\textwidth]{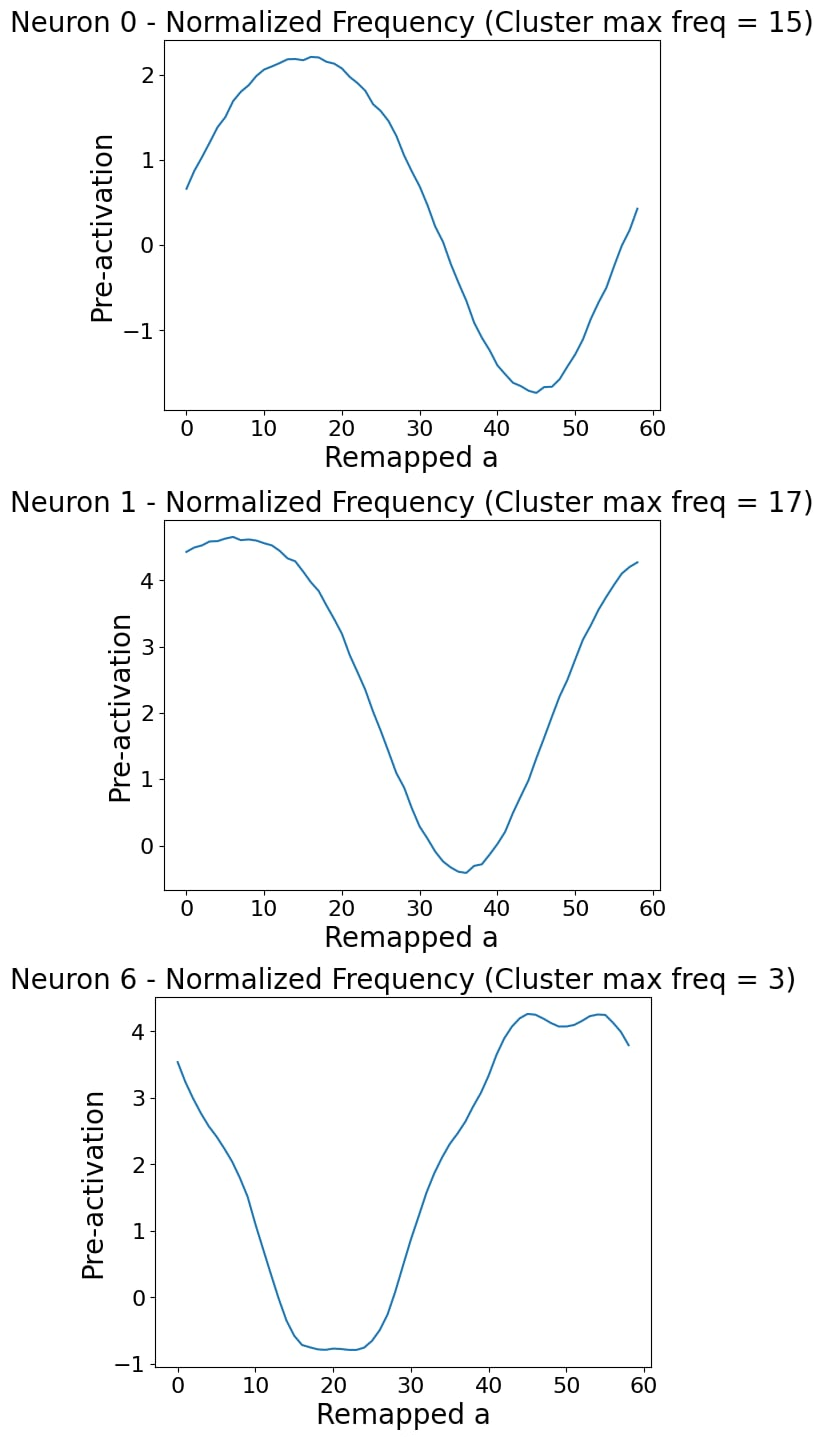}
    \caption{Remapping the pizza neurons shown in Figure \ref{appfig:pizzas-output-on-cosets-too} shows that they look identical to simple neurons.}
    \label{appfig:pizzas-remapped}
\end{figure}

\newpage
\FloatBarrier

\subsubsection{Clock model}
\label{app:clock-model}

Here we show that the approximate CRT in a clock model (Fig. \ref{appfig:Clock-approx-CRT}) looks just like it does in a pizza model (Fig. \ref{appfig:pizzas-output-on-cosets-too}).

\begin{figure}[ht]
    \centering
    \includegraphics[width=1.0\textwidth]{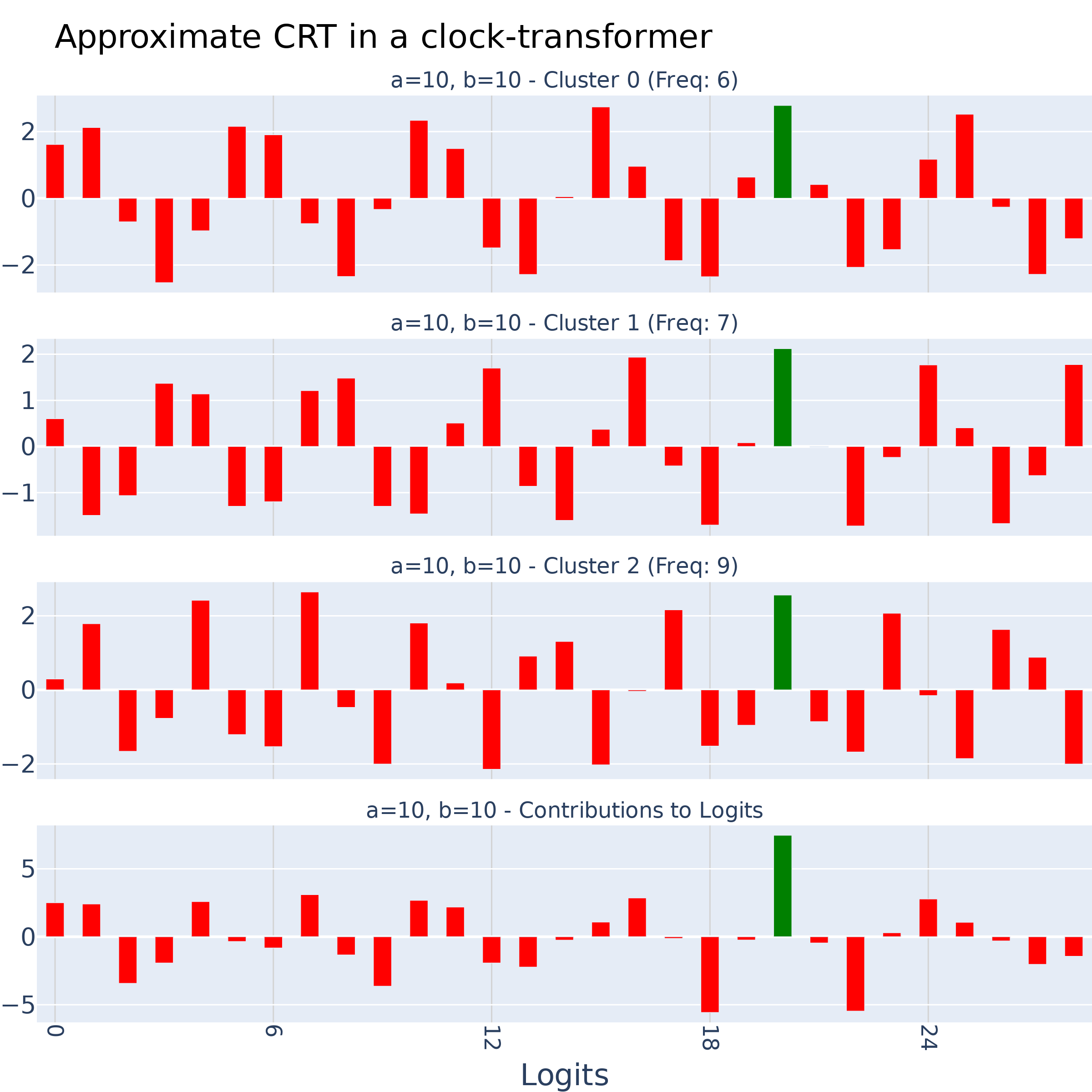}
    \caption{A view of the approximate CRT within a clock-transformer. We cluster all neurons together with the same frequencies, then inspect the cluster's contribution to each logit by summing the contributions of each neuron in the cluster to each logit. Note that the three clusters each contribute to the logits. }
    \label{appfig:Clock-approx-CRT}
\end{figure}

\end{document}